\PassOptionsToPackage{round}{natbib}
\documentclass{article}
\usepackage[letterpaper,margin=1in]{geometry}
\usepackage{times}
\usepackage[round]{natbib}
\usepackage[round]{natbib}
\usepackage{amsmath,amssymb,amsthm,bm}
\usepackage{mathtools}
\usepackage{graphicx}
\usepackage{wrapfig}
\usepackage{booktabs}
\usepackage{array}
\usepackage{xcolor}
\usepackage{microtype}
\usepackage{caption}
\usepackage{subcaption}
\usepackage{enumitem}
\usepackage{algorithm}
\usepackage{float}
\usepackage[section]{placeins}

\usepackage{algpseudocode}
  \usepackage{caption}
  \usepackage{tikz}

\usepackage{multirow}
\usepackage{booktabs}
\usepackage{hyperref}

  \usetikzlibrary{arrows.meta,backgrounds,fit,calc,positioning,%
                  decorations.pathreplacing,shapes.geometric}
\theoremstyle{plain}
\newtheorem{theorem}{Theorem}
\newtheorem{proposition}{Proposition}
\newtheorem{lemma}{Lemma}
\newtheorem{corollary}{Corollary}
\theoremstyle{definition}
\newtheorem{definition}{Definition}
\newtheorem{assumption}{Assumption}
\theoremstyle{remark}
\newtheorem{remark}{Remark}
\DeclareMathOperator*{\argmax}{arg\,max}
\DeclareMathOperator*{\argmin}{arg\,min}
\newcommand{\R}{\mathbb{R}}
\newcommand{\E}{\mathbb{E}}
\newcommand{\W}{\mathcal{W}}
\newcommand{\X}{\mathcal{X}}
\newcommand{\passk}{\mathrm{pass}@k}
\newcommand{\TV}{\mathrm{TV}}
\newcommand{\Var}{\mathrm{Var}}
\newcommand{\Cov}{\mathrm{Cov}}
\newcommand{\CE}{\mathrm{CE}}

\newcommand{\shat}{\hat{s}}

\newcommand{\Reg}{\mathrm{Reg}}

\def\one{\mathbf{1}}

\title{The Optimal Decoder is Tilted Temperature}
\newcommand{\alignedauthor}[2]{\parbox[t]{0.29\textwidth}{\centering\vspace{0pt}\strut\large\textbf{#1}\par\vspace{3pt}\normalsize Purdue University\par\vspace{3pt}\small\texttt{#2}}}
\author{\begin{tabular}{@{}c@{\hspace{0.35cm}}c@{\hspace{0.35cm}}c@{}}
\alignedauthor{Nikolaos Papagiannis}{npapagia@purdue.edu} &
\alignedauthor{Ananth Grama}{ayg@purdue.edu} &
\alignedauthor{Wojciech Szpankowski}{szpan@purdue.edu}
\end{tabular}}
\date{}

\begin{document}
\maketitle
\thispagestyle{plain}
\sloppy
\begin{abstract}
It is known that larger sample budgets benefit from higher sampling temperatures in LLMs. However, we lack a formal understanding of the objective that temperature tuning optimizes and how it accomplishes this. We seek to answer these questions: we pose decoding as pass@$k$ maximization subject to two constraints that encode critical decoding failure modes: an entropy floor preventing diversity collapse, and a cross-entropy ceiling preventing implausibility. We prove that this maximization takes the form of a single closed-form conditional sampling rule, the \emph{tilted temperature} $q^\star \propto \bigl(p e^{\eta V}\bigr)^{1/T}$. Here, the tilt comes from a verifier potential $V$ that quantifies how likely a partial generation is to eventually pass. Ordinary temperature sampling, verifier-greedy decoding, and reward-tilted policies represent special cases. We further show that, under a natural rule for scaling the diversity budget with sample size $k$, the resulting optimal temperature increases with $k$, turning empirical choice into a theorem. The optimum still needs the unknown verifier potential, which we estimate using two design approaches. In the online setting, our J-EOS algorithm jointly tracks the two constraint multipliers and learns the verifier potential from binary pass/ fail feedback alone. In the offline setting, we show that when the model is calibrated, its log-probability tracks correctness up to an affine map. Under this condition, the tilt collapses into the exponent and the optimum reduces to ordinary temperature sampling. Calibrated Temperature Decoding (CTD) then fits one scalar offline and samples with zero additional inference-time overhead relative to temperature sampling. Building on these decision rules, we design a practical decoding algorithm for code generation and evaluate it alongside CTD on benchmarks including HumanEval and MBPP, showing improvements over standard and adaptive decoding baselines.
\end{abstract}
\section{Introduction}

LLMs are often used and evaluated under a metric that bears little resemblance
to the one that shaped them. Training maximizes the likelihood of a single next token; however, deployment in code
generation, math, and reasoning pipelines is judged by $\passk$, the probability
that at least one of $k$ independently drawn sequences solves the task. Chain-of-thought traces with lower
likelihood routinely outperform their higher-likelihood alternatives on reasoning benchmarks; best-of-$k$ and
self-consistency pipelines exploit precisely this gap; and increasing $k$ yields accuracy at a rate that
scales predictably with compute. The decoding hyper-parameter that mediates this trade-off, temperature,
together with its truncated variants top-$k$ and nucleus, is empirically tuned. Practitioners have converged on the
hypothesis that small $k$ warrants cold sampling and large $k$ prefers hot. However, there is no principled account of which
temperature is right, or even of which objective is being optimized when one is chosen. 

Formulating these questions as an optimization problem and solving it is hard for two reasons. First, the optimum of an unconstrained $\passk$ problem is degenerate:
the highest-$\passk$ sampler is supported on whichever sequences score best under the success criterion, leading to
a Dirac mass on the single best sequence. This does not generalize, collapses sample diversity to zero, and
exploits an imperfect verifier. We therefore need an \emph{entropy floor} that forces the sampling distribution to
retain spread. Second, the dual failure mode is just as bad: maximizing entropy alone produces uniform
sampling, supported on enormous amounts of fictitious data, discarding the information about coherence that the
base model distribution $p$ encodes at considerable training cost. We therefore need a second constraint on
the sampling distribution $q$, which is a
plausible re-weighting of $p$, to enforce a \emph{ceiling} on the average per-token cross-entropy of $q$ against $p$.
The resulting program is, for some parameters $\beta$ and $\gamma$,
{\small
\begin{equation}\tag{PE}\label{eq:PE}
\max_{q}\ \passk(q)~~ \text{s.t.}~~~~~
\tfrac{1}{n}\sum_t \E_{w^{t-1}\sim q} H\!\big(q_t(\cdot|w^{t-1})\big)\ge \beta,
-\tfrac{1}{n}\sum_t \E_{w^{t-1}\sim q}\!\left[\sum_w q_t \log p_t(w\mid w^{t-1})\right] \le \gamma.
\end{equation}
}
The two constraints encode the two failure modes that any decoder must avoid: entropy collapse and implausibility.
What makes \eqref{eq:PE} difficult is the structure of the objective. $\passk$ is a sequence-level quantity that depends on the whole generated sequence through a verifier observed only after the last token, while the decoder operates token by token. The bridge is the verifier potential $V_t(w\mid w^{t-1})$, a backward continuation free-energy associated with committing to token $w$ after prefix $w^{t-1}$. It combines the terminal verifier reward with the downstream likelihood weighting induced by the two dual prices, and satisfies a backward Bellman recursion (Lemma~\ref{lem:bellman}). This yields an exact token-level dynamic-programming characterization of the sequence-level optimum. The resulting optimum has conditional form
$
q^\star_t(w\mid w^{t-1})
\propto
\bigl(
p_t(w\mid w^{t-1})
\exp(\eta V_t(w\mid w^{t-1}))
\bigr)^{1/T},
$
with $T=1/\lambda^\star$ and $\eta=\mu^\star/\lambda^\star$. We call this the \emph{tilted temperature} distribution. The family interpolates between likelihood-driven and increasingly verifier-driven decoding. At zero verifier price, its sequence law reduces to tempering of the base sequence distribution; ordinary per-token temperature sampling closely approximates this law when the remaining-horizon correction has limited variation across candidate tokens, and is exact in the zero-variation case. Since $\passk(q)$ is strictly increasing in the single-sample success, for fixed budgets $(\beta,\gamma)$ the optimizer itself is independent of $k$; the $k$-dependence arises through the budget-selection rule for $\beta$ and is discussed in Appendix~\ref{app:beta}. 

The form of the optimum is established by Theorem~\ref{thm:tilt}; what is not given in practice is the verifier potential $V$.
We handle this unknown quantity in two complementary ways:\\
{\bf 1.} \emph{Online analytical construction: jointly track the prices and estimate $V$.}
Our \textsc{J-EOS} construction (Section~\ref{sec:tracking}) uses binary verdicts from completed rollouts to update a backward Monte-Carlo estimate of the verifier potential while causally tracking the two dual prices $(\lambda,\mu)$ associated with the active constraints. For a fixed potential estimate, we prove that prefix-wise joint price tracking remains close to the offline optimum in parameter space and consequently in sequence law and $\passk$ (Theorems~\ref{thm:prefix-tracking} and~\ref{thm:passk-tracking}). \textsc{J-EOS} is used as an analytical construction establishing how the two constrained prices can be tracked online; it is not our recommended deployment algorithm.\\
{\bf 2.} \emph{Calibration-based, offline solution: making the verifier tilt redundant.}
Rather than estimating $V_t$ directly, we exploit the sequence-level form of the optimum. We formalize \emph{$(s,\varepsilon)$-calibration} by requiring the terminal verifier to be approximately affine in the mean sequence log-probability,
$
U(w^n)=s\,\bar\ell(w^n)+\bar c+\bar r(w^n),\qquad |\bar r(w^n)|\le\varepsilon.
$
Under perfect calibration, maximizing success is equivalent to minimizing cross-entropy over the feasible set. The optimum therefore has sequence law $q^\star(w^n)\propto p(w^n)^{a_\beta}$, where the entropy floor pins $a_\beta$ and the cross-entropy ceiling acts only as a feasibility check. When the remaining-horizon correction has limited variation, ordinary per-token temperature sampling closely approximates this sequence optimum. This yields \emph{Calibrated Temperature Decoding (CTD)}: the regression estimates whether positive likelihood--correctness alignment holds and how large the residual is (Theorem~\ref{thm:ctd}).

For deployment beyond the calibrated regime, we use the theory as a design principle rather than run \textsc{J-EOS} directly. Our practical decoder, \textsc{TMAD-Gated} (Algorithm~\ref{alg:tmad-gated}), intervenes only at uncertain tokens, uses short lookahead to construct a continuation-success proxy, and selects among tilted-temperature regimes subject to plausibility and diversity. 
TMAD-Gated achieves the best pass@$k$ in 16 of 18 settings across three models and two code-generation benchmarks, outperforming both fixed-temperature and adaptive temperature sampling in most settings.

\vspace{-0.15in}
\paragraph{Contributions.}
Our main contributions can be summarized as follows:
(i) \textit{The decoding problem under entropy and plausibility constraints (Section~\ref{sec:formulation}).} We formulate
\eqref{eq:PE}, in which the entropy floor enforces diversity and the cross-entropy ceiling enforces plausibility; the
constraints correspond to the two distinct failure modes any deployable decoder must avoid.
(ii) \textit{Tilted temperature as the closed-form optimum (Section~\ref{sec:main}).} We prove the optimum is $q^\star\propto(p\,e^{\eta V})^{1/T}$ (Theorem~\ref{thm:tilt}); under the budget-selection rule of Proposition~\ref{prop:mono}, the induced temperature is non-decreasing in $k$ in the high-coverage regime and strictly increasing at interior points where the inequality is strict.
(iii) \textit{A sequence-level verifier admits an exact token-level potential characterization.}
Although success is observed only after a complete generation, we show that the optimal decoder can be expressed through a backward verifier potential $V_t$ satisfying a Bellman-style recursion. This converts the sequence-level $\passk$ objective into a token-level
tilted decoding rule through an acyclic Bellman recursion.
(iv) \textit{Joint online price tracking as an analytical construction (Section~\ref{sec:tracking}).}
We introduce \textsc{J-EOS}, which couples online potential estimation with causal tracking of both dual prices. Holding the potential estimate fixed, we prove $O(1/t)$ prefix-price tracking and corresponding sequence-law and $\passk$ stability bounds (Theorems~\ref{thm:prefix-tracking} and~\ref{thm:passk-tracking}). We use this construction to analyze the online constrained problem rather than as the practical decoder in our experiments.
(v) \textit{Calibration-based offline decoding: CTD (Section~\ref{sec:calib}).}
We show that sequence-level $(s,0)$-calibration with $s>0$ collapses the optimum to the tempered sequence law $q^\star(w^n)\propto p(w^n)^{a_\beta}$, where the entropy floor pins $a_\beta$ and the cross-entropy ceiling checks feasibility. Ordinary per-token temperature sampling closely approximates this optimum when the remaining-horizon correction has limited variation, with prefix independence as the zero-variation special case. \textsc{CTD} estimates the likelihood--correctness alignment offline by OLS, uses it as a certificate rather than as the temperature parameter, and admits a suboptimality bound controlled by calibration residual and continuation variation (Theorems~\ref{thm:collapse}--\ref{thm:ctd}).
(vi) \textit{Theory-guided practical decoding and code-generation experiments (Section~\ref{sec:experiments}).}
We instantiate the tilted-temperature structure in \textsc{TMAD-Gated} (\emph{Tilted Model-Adaptive Decoding}), a confidence-gated decoder that combines short-horizon continuation scoring with constrained selection of $(T,\eta)$ regimes. On HumanEval \citep{chen2021evaluating} and MBPP \citep{austin2021program} we evaluate \textsc{CTD} and \textsc{TMAD-Gated} against fixed-temperature and adaptive decoding baselines, testing when calibration alone is sufficient and when explicit continuation information provides additional value.

\vspace{-0.1in}
\paragraph{Related work.}
A recent line of work treats the number of decoding samples as a resource to
be scaled: coverage grows log-linearly in $k$ over orders of magnitude
\citep{brown2024monkeys}, compute-optimal frontiers trade model size against
sample count \citep{wu2024inference}, and allocation can be adapted per
prompt \citep{snell2024scaling}. These studies establish pass@$k$ as the
operational objective but hold the sampling distribution fixed at a
hand-tuned temperature. We ask instead, which distribution maximizes pass@$k$
under an explicit budget, and show that the answer is the tilted
temperature \eqref{eq:opt1}. A complementary thread improves
\emph{selection} over a fixed pool---outcome verifiers \citep{cobbe2021gsm8k},
process reward models \citep{lightman2024verify}, best-of-$n$ reranking
\citep{nakano2021webgpt}, and self-consistency
\citep{wang2023selfconsistency}, whereas our potential $V_t$ enters the
\emph{process} through Bellman recursion, and becomes redundant under
calibration (Theorem~\ref{thm:collapse}). Heuristic reshaping rules
—nucleus sampling \citep{holtzman2020degeneration}, truncation-as-desmoothing
\citep{hewitt2022truncation}, are justified empirically. Under perfect sequence-level
calibration, the optimum becomes a pure power of the base sequence law, and
ordinary per-token temperature sampling closely approximates it when the
remaining-horizon correction has limited variation across candidate tokens;
prefix independence is the zero-variation special case. The tilted form itself is the Gibbs family of maximum-entropy inference
\citep{jaynes1957information,cover2006elements} and KL-regularized control
\citep{ziebart2008maxent,levine2018rlinference,korbak2022rlbayesian,rafailov2023dpo}.
We differ in operating at inference time under a cross-entropy budget, which
is what pins $T$ to the budget rather than to a training-time penalty. Our
calibration assumption builds on Platt scaling
\citep{platt1999probabilistic,guo2017calibration} and evidence that
pretrained models partly track their own correctness
\citep{desai2020calibration,kadavath2022know}.

\vspace{-0.1in}
\section{Problem Formulation}\label{sec:formulation}

\providecommand{\KL}{\mathrm{KL}}
 \providecommand{\Ent}{\mathrm{H}}

Let $\W$ denote the finite token alphabet and $\W^n$ the set of length-$n$ sequences.
Let $p_t(w)=\Pr(w \mid w^{t-1})\in\Delta(\W)$ be the model's conditional distribution at
step $t$. A \emph{policy} (sampling distribution) is $q=(q_1,\dots,q_n)$ with
$q_t$ depending only on $\{p_i\}_{i\le t})$, inducing $q(w^n)=\prod_t q_t(w_t\mid w^{t-1})$. We
assume $p_t(w)\ge\rho>0$ for all $t,w$. Let $U:\W^n\to[0,1]$ be the verifier; often
binary, $U(w^n)\in\{0,1\}$ (e.g.\ code: unit tests pass; math: answer matches). The
single-sample success probability of $q$ is $u_q:=\E_{w^n\sim q}[U(w^n)]$, and drawing
$k$ i.i.d.\ samples gives $\passk(q):=1-(1-u_q)^k$ leading to
\begin{equation}
    \label{eq-pe}
  \max_{q_t\in\Delta(\W)}\; u_q
  ~~\text{s.t.}\quad
  \frac1n\sum_{t}\E_{w^{t-1}\sim q}\Big[\Ent\big(q_t(\cdot\mid w^{t-1})\big)\Big]\;\ge\;\beta,
  ~~
  -\frac1n\sum_{t}\E_{w^{t-1}\sim q}\Big[\sum_{w}q_t\log p_t\Big]\;\le\;\gamma,
\end{equation}
with $\Ent(q_t)=-\sum_w q_t(w\mid w^{t-1})\log q_t(w\mid w^{t-1})$ the entropy of $q$.
To analyze token-level optimization, we summarize a committed prefix--token pair by a
value function. Let $\lambda\ge 0$ and $\mu_E\ge 0$ denote the KKT multipliers of the
cross-entropy ceiling and entropy floor, respectively. Whenever $\mu_E>0$, set
$\lambda^\star:=n\lambda/\mu_E\ge0$ and $\mu^\star:=n/\mu_E>0$. If
$\lambda^\star>0$, we equivalently write $T:=1/\lambda^\star$ and
$\eta:=\mu^\star/\lambda^\star$, so that $T$ is the temperature and
$\eta$ the verifier tilt.
For $\mu^\star>0$ the verifier potential at step $t$ is the free energy of the tilted
continuation,
\begin{equation}\label{eq:V-def}
  V_t(w\mid w^{t-1})\;:=\;\frac{1}{\mu^\star}\,
  \log\!\!\sum_{w_{t+1:n}\in\W^{\,n-t}}\ \prod_{s=t+1}^{n}
  p_s(w_s\mid w^{s-1})^{\lambda^\star}\, e^{\mu^\star U(w^n)},
  ~~ w^n=(w^{t-1},w,w_{t+1:n}),
\end{equation}
with the convention that at $t=n$ the continuation set $\W^0$ is the single empty tuple
and the product is empty, so that the terminal condition reads
$V_n(w\mid w^{n-1})=U(w^{n-1},w)$. 
It scores the tilted continuation value of a prefix--token pair, combining terminal verifier reward with the downstream likelihood weighting induced by the dual prices. It depends on $(p,U,\lambda^\star,\mu^\star)$ but not on any policy. This is the object the optimum of Theorem~\ref{thm:tilt} tilts on, and we denote it by $V_t$ throughout the paper. The next lemma follows by peeling the innermost sum. The proof is in Appendix~\ref{app:lem1}.

\begin{lemma}[Bellman consistency]\label{lem:bellman}
For any $\lambda^\star\ge0$, $\mu^\star>0$, step $t\in\{1,\dots,n-1\}$, prefix $w^{t-1}$,
and token $w$,
\[
  V_t(w\mid w^{t-1})\;=\;\frac{1}{\mu^\star}\log\sum_{w'\in\W}
  p_{t+1}(w'\mid w^{t-1}w)^{\lambda^\star}\,
  \exp\!\big(\mu^\star V_{t+1}(w'\mid w^{t-1}w)\big),
\]
with terminal condition $V_n(w\mid w^{n-1})=U(w^{n-1},w)$. When
$\lambda^\star>0$, equivalently $V_t=\tfrac{T}{\eta}\log Z_{t+1}(w^{t-1}w)$ with
\[
  Z_{t+1}(w^{t-1}w)\;=\;\sum_{w'\in\W}
  \Big(p_{t+1}(w'\mid w^{t-1}w)\, e^{\eta V_{t+1}(w'\mid w^{t-1}w)}\Big)^{1/T}.
\]
The recursion involves $(p,U,\lambda^\star,\mu^\star)$, so $V$ exists, is unique, and
is obtained by one backward sweep over prefix states.
\end{lemma}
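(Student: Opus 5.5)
The plan is to work directly from definition \eqref{eq:V-def} and split the continuation sum at its first coordinate. Fix $t\le n-1$, a prefix $w^{t-1}$ and a token $w$, and write $S_t(w\mid w^{t-1}):=\exp(\mu^\star V_t(w\mid w^{t-1}))$. This is the sum over $w_{t+1:n}$ of $\prod_{s=t+1}^n p_s^{\lambda^\star}e^{\mu^\star U(w^n)}$. Since $\mu^\star>0$ and every summand is strictly positive (recall $p_t\ge\rho>0$), $S_t>0$. The logarithm is therefore well defined and $V_t=\mu^{\star-1}\log S_t$.

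First, write $w_{t+1:n}=(w',w_{t+2:n})$ and factor out the $s=t+1$ term:
\[
S_t(w\mid w^{t-1})=\sum_{w'\in\W}p_{t+1}(w'\mid w^{t-1}w)^{\lambda^\star}\sum_{w_{t+2:n}}\prod_{s=t+2}^n p_s(w_s\mid w^{s-1})^{\lambda^\star}e^{\mu^\star U(w^n)}.
\]
The inner sum is exactly $S_{t+1}(w'\mid w^{t-1}w)$, i.e.\ definition \eqref{eq:V-def} at step $t+1$ with prefix $w^{t}=(w^{t-1},w)$.

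The one point needing care is the boundary case $t=n-1$. There the inner sum runs over $\W^0$, a single empty tuple with an empty product, so it equals $e^{\mu^\star U(w^{n-1},w,w')}=e^{\mu^\star V_n(w'\mid w^{n-1})}$, using the stated terminal convention. Substituting $S_{t+1}=e^{\mu^\star V_{t+1}}$ and taking $\mu^{\star-1}\log$ gives the first display of Lemma~\ref{lem:bellman}.

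For the $\lambda^\star>0$ form, use $T=1/\lambda^\star$ and $\eta=\mu^\star/\lambda^\star$, so that $\mu^\star=\eta/T$. Then
\[
p^{\lambda^\star}e^{\mu^\star V}=p^{1/T}e^{\eta V/T}=\big(pe^{\eta V}\big)^{1/T},
\]
and hence $S_t=Z_{t+1}(w^{t-1}w)$. Since $1/\mu^\star=T/\eta$, we obtain $V_t=\tfrac{T}{\eta}\log Z_{t+1}$.

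Existence and uniqueness follow because the recursion determines the values level by level. The values $V_n$ are fixed by $U$, and each $V_t$ is an explicit finite function of $p_{t+1}$ and the values $V_{t+1}(\cdot\mid w^{t})$ at strictly longer prefixes. The prefix tree is finite and acyclic, so backward induction on $t$ from $n$ down to $1$ gives a unique solution, computed in one backward sweep. This solution coincides with \eqref{eq:V-def} by the computation above.

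There is no real obstacle here. The only steps needing care are the indexing and boundary convention at $t=n-1$, and the positivity of $S_t$ that justifies taking logarithms.
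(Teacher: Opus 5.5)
Your proposal is correct and follows essentially the same route as the paper's proof: name the partition sum inside the logarithm, peel off the first continuation token so the inner sum becomes the partition sum at the extended prefix, use the empty-product convention at $t=n-1$, rewrite $p^{\lambda^\star}e^{\mu^\star V}=(pe^{\eta V})^{1/T}$, and obtain uniqueness by downward induction on $t$; the paper additionally records explicit two-sided bounds on the partition sum, which your positivity-plus-finiteness remark covers. One small indexing slip: at $t=n-1$ the prefix is $w^{n-2}$, so the boundary term should read $e^{\mu^\star U(w^{n-2},w,w')}=e^{\mu^\star V_n(w'\mid w^{n-2}w)}$ rather than $U(w^{n-1},w,w')$.
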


\begin{remark}\label{rem:prompt}
Program (PE) is written for a fixed prompt $x$: $p_t(w\mid w^{t-1})$ already conditions on
$x$, so the optimum $q^\star$ is a per-prompt object. 
In Appendix~\ref{app:prompt}, we make the prompt
dependence explicit through the natural prompt-level objectives  (see Theorem~\ref{thm:inv}).
In Appendix~\ref{app:adaptive}, we study adaptive (PE) where $(\beta, \gamma)$ may take $D$ different values in $[0,n]$. 
\end{remark}

\section{Main Results}\label{sec:main}

In Section~\ref{sec:optimal}, we derive the optimal tilted-temperature sampling distribution.
Section~\ref{sec:tracking} introduces \textsc{J-EOS} as an analytical online construction for joint price tracking and potential estimation, while Section~\ref{sec:calib} develops \textsc{CTD}, an offline calibration-based decoder that removes the need to explicitly estimate the potential under additional structure.
\subsection{Optimal Distribution}\label{sec:optimal}
Program~\eqref{eq:PE} is a finite-horizon entropy-regularized control problem: a completed sequence is rewarded by $U$, and each step by a mixture of the base log-likelihood and the policy entropy. Its optimum is the Gibbs policy on the verifier potential of Lemma~\ref{lem:bellman}. We use the following:

\begin{lemma}[Gibbs variational principle]\label{lem:gibbs}
For $c\in\R^{\W}$ and $\tau>0$, $\ \max_{q\in\Delta(\W)}\{\langle q,c\rangle+\tau H(q)\}=\tau\log\sum_{w} e^{c(w)/\tau}$, uniquely attained at $q(w)=e^{c(w)/\tau}/\sum_{w'} e^{c(w')/\tau}$, where $\langle q,c\rangle$ is the scalar product of $q$ and $c$ over $w$.
\end{lemma}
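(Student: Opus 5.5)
The plan is to reduce the statement to a single application of Gibbs' inequality, that is, nonnegativity of the Kullback--Leibler divergence. Set $Z:=\sum_{w'} e^{c(w')/\tau}$ and define the candidate $g(w):=e^{c(w)/\tau}/Z$. This $g$ lies in the interior of $\Delta(\W)$ because $\W$ is finite and every exponential is strictly positive. Since $\tau>0$, we can write each coordinate of $c$ as $c(w)=\tau\log g(w)+\tau\log Z$. Substituting this into the objective gives, for any $q\in\Delta(\W)$,
\[
\langle q,c\rangle+\tau H(q)=\tau\log Z\sum_w q(w)+\tau\sum_w q(w)\log g(w)-\tau\sum_w q(w)\log q(w)=\tau\log Z-\tau\,\KL(q\,\|\,g).
\]
The first term uses $\sum_w q(w)=1$. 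We adopt the usual convention $0\log 0=0$, so coordinates with $q(w)=0$ contribute nothing. Because $g$ has full support, the divergence $\KL(q\|g)$ is finite for every $q$.

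The key step is then the standard fact that $\KL(q\|g)\ge 0$, with equality if and only if $q=g$. To prove it, apply Jensen's inequality to the strictly concave function $\log$ over the support $S$ of $q$:
\[
-\KL(q\|g)=\sum_{w\in S}q(w)\log\frac{g(w)}{q(w)}\le\log\sum_{w\in S}g(w)\le\log 1=0.
\]
Equality in the first inequality forces the ratio $g/q$ to be constant on $S$. Equality in the second forces $\sum_{w\in S}g(w)=1$, which means $S=\W$ because $g>0$ everywhere. Together these give $q=g$.

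Combining the two steps, the maximum value is $\tau\log Z$, and it is attained only at $g$, which is exactly the claimed softmax.

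The only delicate point I expect is the equality case on the boundary of the simplex, where $q$ has zero coordinates. The support argument above is designed to handle precisely this case. An alternative would be a Lagrangian/KKT argument, relying on strict concavity of $H$ and the fact that $\partial H/\partial q(w)\to+\infty$ as $q(w)\to 0$ to exclude boundary maximizers. The KL route avoids those technicalities, so that is the route I will write.
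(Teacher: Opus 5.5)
Your proof is correct and follows the paper's argument: the paper's one-line proof is exactly the identity $\langle q,c\rangle+\tau H(q)=\tau\log\sum_w e^{c(w)/\tau}-\tau\,\mathrm{KL}(q\,\|\,g)$ with $g\propto e^{c/\tau}$, followed by nonnegativity of $\mathrm{KL}$ with equality iff $q=g$. Your Jensen-on-the-support argument fills in the equality case, including $q$ on the boundary of the simplex, which the paper takes as standard.
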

\vspace{-0.05in}
\begin{proof}
$\langle q,c\rangle+\tau H(q)=-\tau\,\mathrm{KL}(q\,\|\,g)+\tau\log\sum_w e^{c(w)/\tau}$ with $g(w)\propto e^{c(w)/\tau}$; with equality iff $q=g$.
\end{proof}

\begin{theorem}[Tilted temperature optimum: Proof in Appendix~\ref{app:proofTh1}]\label{thm:tilt}
Let $\mu_E\ge0$ and $\lambda\ge0$ be the multipliers of the entropy floor and cross-entropy ceiling of \eqref{eq:PE}. For $\mu_E>0$, set $\lambda^\star:=n\lambda/\mu_E$ and $\mu^\star:=n/\mu_E$. The optimum of \eqref{eq:PE} is
\begin{equation}\label{eq:opt1}
q^\star_t(w\mid w^{t-1})=\frac{p_t(w)^{\lambda^\star}\exp(\mu^\star V_t(w\mid w^{t-1}))}{Z_t(w^{t-1};\lambda^\star,\mu^\star)},\qquad Z_t=\sum_{w} p_t(w)^{\lambda^\star}\exp(\mu^\star V_t(w\mid w^{t-1})),
\end{equation}
with $V_t$ the verifier potential of Lemma~\ref{lem:bellman}. If $\lambda^\star>0$, this is equivalent, with $T=1/\lambda^\star$ and $\eta=\mu^\star/\lambda^\star$, to
\begin{equation}\label{eq:opt2}
q^\star_t(w\mid w^{t-1})\propto\big(p_t(w\mid w^{t-1})\exp(\eta\, V_t(w\mid w^{t-1}))\big)^{1/T}.
\end{equation}
\end{theorem}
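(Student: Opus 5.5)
We argue through Lagrangian duality on the sequence level and then factor the resulting Gibbs sequence law into token-level conditionals using Lemma~\ref{lem:bellman}.

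\emph{Reformulation.} Write the three functionals of \eqref{eq:PE} as functions of the sequence law $q(w^n)=\prod_t q_t(w_t\mid w^{t-1})$, which ranges over all of $\Delta(\W^n)$. By the chain rule, $\sum_t \E_{w^{t-1}\sim q} H(q_t(\cdot\mid w^{t-1}))=H(q)$ and $-\sum_t\E_{w^{t-1}\sim q}\sum_w q_t\log p_t=-\E_q[\log p(w^n)]$, where $p(w^n)=\prod_t p_t(w_t\mid w^{t-1})$. So \eqref{eq:PE} reads
\[
\max_{q}\ \E_q U \quad\text{s.t.}\quad H(q)\ge n\beta,\qquad -\E_q\log p\le n\gamma .
\]
The objective is linear, the entropy constraint is concave, and the cross-entropy constraint is linear in $q$, so the program is convex over a compact simplex. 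Since $p\ge\rho$, the cross-entropy is finite and continuous. Assuming a Slater point (a $q$ strictly feasible in both constraints; I would state this as the standing assumption implicit in "the multipliers"), strong duality holds and KKT multipliers $\mu_E,\lambda\ge0$ exist. The optimum then maximizes the Lagrangian
\[
\mathcal L(q)=\E_qU+\tfrac{\mu_E}{n}H(q)+\tfrac{\lambda}{n}\E_q\log p .
\]
Here I absorb the paper's $1/n$ normalization into the multipliers; this is exactly where the factors $n$ in $\lambda^\star,\mu^\star$ come from. Multiplying by $\mu^\star=n/\mu_E$ gives the equivalent objective $\mu^\star\E_qU+\lambda^\star\E_q\log p+H(q)$.

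\emph{Sequence-level solution.} Applying Lemma~\ref{lem:gibbs} on $\W^n$ with $c=\mu^\star U+\lambda^\star\log p$ and $\tau=1$, the unique maximizer is
\[
q^\star(w^n)\propto p(w^n)^{\lambda^\star}e^{\mu^\star U(w^n)}.
\]
Since $\mu_E>0$, the entropy term is strictly concave, so the Lagrangian maximizer is unique. By KKT and complementary slackness it is the primal optimum.

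\emph{Token factorization.} Compute the conditionals of $q^\star$. The marginal of the prefix $(w^{t-1},w)$ is proportional to $\prod_{s\le t}p_s^{\lambda^\star}$ times $\sum_{w_{t+1:n}}\prod_{s>t}p_s^{\lambda^\star}e^{\mu^\star U}$. By \eqref{eq:V-def}, this sum is exactly $\exp(\mu^\star V_t(w\mid w^{t-1}))$. Dividing by the marginal of $w^{t-1}$ cancels the common factor $\prod_{s<t}p_s^{\lambda^\star}$ and yields \eqref{eq:opt1}. The normalizer is $Z_t=\sum_w p_t(w)^{\lambda^\star}e^{\mu^\star V_t}$, which by Lemma~\ref{lem:bellman} equals $e^{\mu^\star V_{t-1}}$ of the parent. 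This makes the product of the conditionals telescope back to the joint law; at $t=n$ the terminal condition $V_n=U$ is used. Finally, when $\lambda^\star>0$, writing $p^{\lambda^\star}e^{\mu^\star V}=(p\,e^{\eta V})^{\lambda^\star}$ with $\eta=\mu^\star/\lambda^\star$ and $T=1/\lambda^\star$ gives \eqref{eq:opt2}.

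\emph{Main obstacle.} The delicate step is the duality step. It requires establishing existence of the multipliers (the Slater condition) and handling degenerate cases. If the entropy floor is inactive, then $\mu_E=0$ and the statement is vacuous, since the theorem assumes $\mu_E>0$. If $\lambda=0$, then $\lambda^\star=0$ and only \eqref{eq:opt1} applies. One must also check that the optimum of the Lagrangian matches the constrained optimum. This follows from concavity and complementary slackness once the multipliers are the KKT ones, so I would cite standard convex duality for finite-dimensional programs rather than reprove it.
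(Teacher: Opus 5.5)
Your argument is correct, but it runs in the opposite direction from the paper's.

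\emph{The paper's route.} The paper stays at the token level. It writes the Lagrangian as a finite-horizon control problem with per-step reward $\tfrac{\mu_E}{n}H(q_t)+\lambda\sum_w q_t\log p_t$. It then proves a principle-of-optimality lemma, so the global maximization splits into independent per-node problems, and solves each node with Lemma~\ref{lem:gibbs}. Finally it identifies the optimal cost-to-go $J_{t+1}(w^{t-1}w)$ with $V_t$ by matching the backward recursion and terminal condition of Lemma~\ref{lem:bellman}. Convexity and Slater are checked separately in occupancy-measure coordinates, and the sequence law of Corollary~\ref{cor:seqtilt} comes afterwards by telescoping.

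\emph{Your route.} You use the chain rule to rewrite both constraints on $\Delta(\W^n)$. You apply the Gibbs lemma once on $\W^n$ to get the sequence law first. You then recover \eqref{eq:opt1} by marginalizing, recognizing $V_t$ from its closed form \eqref{eq:V-def} rather than from the recursion.

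\emph{What each buys.} Your route is shorter and makes convexity, strong duality and uniqueness immediate on the joint simplex, with no interchange-of-max argument and no occupancy coordinates. The paper's route shows $V_t$ to be the optimal cost-to-go of the Lagrangian control problem. More importantly, it does not need the per-step entropy and log-likelihood weights to be uniform in $t$. With the block-dependent prices of Theorem~\ref{thm:dblock}, the weighted sum of conditional entropies is no longer a multiple of $H(q)$, so your single sequence-level Gibbs step is unavailable; the sequence law then picks up the seam terms of Proposition~\ref{prop:seq-dblock}. The node-wise dynamic program goes through unchanged.

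\emph{A bookkeeping slip.} From $\E_qU+\tfrac{\mu_E}{n}H(q)+\tfrac{\lambda}{n}\E_q\log p$, multiplying by $n/\mu_E$ gives coefficient $\lambda/\mu_E$ on $\E_q\log p$, not $n\lambda/\mu_E=\lambda^\star$. To match the theorem, adopt the paper's convention that $\lambda$ is $1/n$ times the raw ceiling multiplier, so the Lagrangian carries $\lambda\,\E_q\log p$. This only relabels the dual variable and does not affect the form of the optimum.
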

\begin{remark}
We show in Appendix~\ref{app:beta} that, under a natural rule for scaling
the diversity budget with $k$, the induced optimal temperature is
non-decreasing in the high-coverage regime
(Proposition~\ref{prop:mono}). A HumanEval temperature sweep with
Qwen2.5-Coder-1.5B shows the
same trend; see Figure~\ref{fig:tstar}.
\end{remark}

\subsection{ONLINE TRACKING OF THE JOINT DUAL PRICES}
\label{sec:tracking}

\begin{algorithm}[!htbp]
\caption{J-EOS: Joint online tracking of $(\lambda,\mu)$ with online potential estimation.}
\label{alg:jeos}
\begin{algorithmic}[1]
\Require budgets $(\beta,\gamma)$; base model $p$; verifier $U$; per-round
  rollout batch size $m$; rounds $N$; Newton tolerance $\epsilon$;
  max inner iterations $J_{\max}$; initial $\theta_0=(\lambda_0,0)$ (plain
  temperature, no tilt)
\State $\hat V_t(w) \gets 0$ for all $t,w$ \Comment{warm start: no verifier signal yet}
\For{$r = 1,\dots,N$}
  \State $\theta \gets \theta_{r-1}$ \Comment{carry the previous round's converged prices}
  \For{$t = 1,\dots,n$} \Comment{inner loop: causal prefix tracking}
    \For{$j = 1,\dots,J_{\max}$}
      \State $q_i^\theta(w\mid w^{i-1}) \gets \dfrac{p_i(w)^{\lambda}\exp\!\big(\mu\,\hat V_i(w)\big)}{\displaystyle\sum_{w'} p_i(w')^{\lambda}\exp\!\big(\mu\,\hat V_i(w')\big)}\ \text{ for } i\le t$; evaluate $F_t(\theta)$ via \eqref{eq:Ft}
      \If{$\|F_t(\theta)\|_2 \le \epsilon$} \textbf{break} \EndIf
      \State $\theta \gets \theta -  J_t(\theta)^{-1}F_t(\theta)$ \Comment{Newton step using the exact Jacobian $DF_t(\theta)$}
    \EndFor
    \State $\theta_t \gets \theta$; sample $w_t \sim q_t^{\theta_t}$
  \EndFor
  \State freeze the round policy $b_{r,t}:=q_t^{\theta_t}$ and draw $m$ fresh i.i.d. sequences from $b_r$; collect verdicts $U_j$, $j\in[m]$
  \State set $\vartheta_r=(\lambda_r,\mu_r):=\theta_n$; if $\mu_r>0$, update $\hat V_t(w) \gets \frac{1}{\mu_r}\log\hat Z^{(r)}_{t+1}(w)$ on visited cells via \eqref{eq:jeos-steponly} \Comment{Appendix~\ref{app:mc}}
  \State $\theta_r \gets \theta_n$ 
  \Comment{carry forward as next round's warm start}
\EndFor
\State \Return $\bar\theta := (\bar\lambda,\bar\mu) = \frac1{N}\sum_r \theta_r$; final trained table $\hat V$
\end{algorithmic}
\end{algorithm}

Theorem~\ref{thm:tilt} pins the optimal decoder \emph{given} the true
dual prices $\theta^\star=(\lambda^\star,\mu^\star)$, but those prices are
defined by two budget equalities that average over the \emph{entire} horizon of (PE).
A causal decoder at step $t$ only has access to the prefix $(p_1,\hat V_1),\dots,(p_t,\hat V_t)$, which is already an
approximation of the true potential, not $V_t$ itself. Therefore, $\theta^\star$ cannot be computed on the fly even setting
approximation error aside: both sums in (PE) run over the full horizon $1,\dots,n$, including
positions the decoder has not yet reached. This subsection addresses this causal-deployment gap directly, assuming
that some estimate $\hat V_t$ of the verifier potential is available.
We treat $\hat V_t$ as fixed throughout this subsection, separating the
price-tracking problem from the question of potential estimation.
Appendix~\ref{app:mc} derives the specific step-only estimator used in
Algorithm~\ref{alg:jeos}.
Write $\theta=(\lambda,\mu)^\top\in\Theta\subset\mathbb R^2$,
$\ell_t(w)=\log p_t(w)$, $\phi_t(w)=(\ell_t(w),V_t(w))^\top$, and for a common
parameter $\theta$, define the tilted family
\begin{equation}
q_t^\theta(w) \;=\; \frac{\exp\bigl(\lambda\,\ell_t(w)+\mu\,V_t(w)\bigr)}
{\sum_{w'\in\W}\exp\bigl(\lambda\,\ell_t(w')+\mu\,V_t(w')\bigr)}
\;\propto\; p_t(w)^\lambda\,e^{\mu V_t(w)},
\label{eq:tilted-family}
\end{equation}
which is exactly the optimal conditional \eqref{eq:opt1} at
$\theta=\theta^\star$. Define the two-dimensional cumulative constraint map
\begin{equation}
F_t(\theta) \;=\;
\begin{pmatrix}
\sum_{i=1}^t\bigl[\CE(q_i^\theta,p_i)-\gamma\bigr] \\[2mm]
\sum_{i=1}^t\bigl[H(q_i^\theta)-\beta\bigr]
\end{pmatrix},
\label{eq:Ft}
\end{equation}
so the full-horizon KKT point satisfies $F_n(\theta^\star)=0$. 
Note that \eqref{eq:Ft} alone does not certify optimality, that is the tilted
Gibbs form together with primal feasibility and complementary slackness established in Theorem~\ref{thm:tilt}. Here, it is specialized to
the interior regime where both duals are strictly positive.

\paragraph{Prefix tracking.} The causal analog of the length-$n$ root
equation is to solve the \emph{prefix} KKT system
\begin{equation}
F_t(\theta_t)=0, \qquad \text{and deploy } q_t=q_t^{\theta_t}.
\label{eq:prefix-root}
\end{equation}
Crucially, \eqref{eq:prefix-root} is solved retrospectively with one trial
parameter $\theta$ applied uniformly to $i\le t$ and it does not substitute the
already-deployed $q_i^{\theta_i}$ back into the equation. Define the prefix
residual of the \emph{final} offline prices,
$R_t := F_t(\theta^\star)$ and  $R:=\max_{1\le t\le n}\|R_t\|_2$ (see \cite{sima2026}),
so $R_n=0$ and $R_t$ for $t<n$ measure how far $\theta^\star$ is from
satisfying the length-$t$ budgets.

\begin{assumption}[Two-dimensional regularity]
\label{ass:jacobian}
Let $ J_t := \int_0^1 DF_t\bigl(\theta_t+s(\theta^\star-\theta_t)\bigr)\,ds$.
For every $t$ at which \eqref{eq:prefix-root} is solved, the smallest singular value
$\sigma_{\min}( J_t)\ge c\,t$ for a constant $c>0$ independent of $t$, where $D$ represents the derivative.
\end{assumption}

\begin{theorem}[Prefix-price tracking: Proof in Appendix~\ref{app:thm23}]
\label{thm:prefix-tracking}
Under Assumption~\ref{ass:jacobian}, $\displaystyle
\|\theta_t-\theta^\star\|_2 \le \frac{\|R_t\|_2}{ct} \le \frac{R}{ct}.$
\end{theorem}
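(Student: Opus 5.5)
The proof is a mean-value argument applied to the two-dimensional map $F_t$, with the assumed Jacobian bound doing the inversion. The starting point is the prefix root condition \eqref{eq:prefix-root}, $F_t(\theta_t)=0$, together with the definition $R_t=F_t(\theta^\star)$. Subtracting gives
\[
R_t \;=\; F_t(\theta^\star)-F_t(\theta_t).
\]

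First, I establish smoothness. Each $q_i^\theta$ in \eqref{eq:tilted-family} is a softmax in $\theta$ with bounded features, since $\ell_i\ge\log\rho$ and $V_i$ is bounded. Hence $\CE(q_i^\theta,p_i)$ and $H(q_i^\theta)$ are $C^\infty$ in $\theta$, and so is $F_t$. Along the segment $\theta_t+s(\theta^\star-\theta_t)$, $s\in[0,1]$, the fundamental theorem of calculus applied componentwise gives
\[
F_t(\theta^\star)-F_t(\theta_t)=\int_0^1 DF_t\bigl(\theta_t+s(\theta^\star-\theta_t)\bigr)\,ds\,(\theta^\star-\theta_t)= J_t(\theta^\star-\theta_t).
\]
This uses exactly the averaged Jacobian $J_t$ of Assumption~\ref{ass:jacobian}. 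The identity needs the segment to lie in the domain where $F_t$ is defined. I will assume $\Theta$ is convex, or else note that the softmax formula extends $F_t$ to all of $\mathbb R^2$.

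Second, I invert. Since $\sigma_{\min}(J_t)\ge ct>0$, the matrix $J_t$ is invertible and $\|J_t x\|_2\ge ct\,\|x\|_2$ for every $x$. Taking $x=\theta^\star-\theta_t$ gives
\[
\|R_t\|_2=\|J_t(\theta^\star-\theta_t)\|_2\ge ct\,\|\theta_t-\theta^\star\|_2 ,
\]
which is the first inequality after dividing by $ct$. The second inequality is immediate from the definition $R=\max_{t}\|R_t\|_2$.

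The only delicate point is that Assumption~\ref{ass:jacobian} is stated in terms of the unknown $\theta_t$ itself. The assumption is posed at every $t$ where \eqref{eq:prefix-root} is solved, so the argument presupposes that a root $\theta_t$ exists. No existence or uniqueness argument is required beyond this, and in fact the bound shows that any such root is unique: two roots would give $J_t(\theta-\theta')=0$, contradicting invertibility.
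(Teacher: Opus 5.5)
Your argument is correct and matches the paper's proof: the fundamental theorem of calculus along the segment from $\theta_t$ to $\theta^\star$ gives $R_t=J_t(\theta^\star-\theta_t)$, and $\sigma_{\min}(J_t)\ge ct$ inverts this identity. One caveat: your closing uniqueness claim does not follow, because Assumption~\ref{ass:jacobian} only controls the Jacobian averaged over the segment $[\theta_t,\theta^\star]$, not over a segment joining two arbitrary roots (the paper explicitly remarks that no uniqueness is assumed), so drop it---the bound does not need it.
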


\begin{assumption}[Uniform feature covariance bound]
\label{ass:cov}
There exists $M<\infty$ such that the largest eigenvalue $\lambda_{\max}\bigl(\Cov_{q_t^\theta}(\phi_t(W))\bigr)\le M$
for every relevant $t$ and every $\theta$ on the segment joining $\theta_t$
and $\theta^\star$. This holds whenever $p_t(w)\ge\rho>0$ and $|V_t(w)|\le V_{\max}$,
with $M$ depending only on $\log(1/\rho)$ and $V_{\max}$.
\end{assumption}

\begin{theorem}[Sequence-law and pass@$k$ stability. Proof in Appendix~\ref{app:thm23}]
\label{thm:passk-tracking}
Let $Q$ be the sequence law of the online policies $q_t^{\theta_t}$ and $Q^\star$
that of the offline optimum. Under Assumptions~\ref{ass:jacobian}
and~\ref{ass:cov},
\begin{equation}
\KL(Q\|Q^\star) \le \frac{M\pi^2}{12c^2}R^2,
\qquad
\TV(Q,Q^\star) \le \frac{\pi\sqrt M}{2\sqrt6\,c}\,R,
\label{eq:kl-tv-tracking}
\end{equation}
and, with $u_{\min}\le\min\{u_Q,u_{Q^\star}\}$,
\begin{equation}
\bigl|\passk(Q)-\passk(Q^\star)\bigr|
\;\le\;
\min\Bigl\{1,\; k(1-u_{\min})^{k-1}\,\tfrac{\pi\sqrt M}{2\sqrt6\,c}R\Bigr\}.
\label{eq:passk-tracking}
\end{equation}
\end{theorem}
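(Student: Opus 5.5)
The proof decomposes the sequence-level KL by the chain rule into per-step conditional KLs. It then bounds each term by a second-order expansion of the exponential-family log-partition function, and controls parameter deviations with Theorem~\ref{thm:prefix-tracking}. The constants $\pi^2/12$ and $\pi/(2\sqrt6)$ suggest a sum $\sum_t 1/t^2\le \pi^2/6$ multiplied by $M/(2c^2)$.

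\emph{Step 1 (chain rule).} Both $Q$ and $Q^\star$ factor as products of conditionals, with $Q$ using $q_t^{\theta_t}$ and $Q^\star$ using $q_t^{\theta^\star}$ from the tilted family \eqref{eq:tilted-family}, sharing the same features $\phi_t$. The chain rule for KL gives
\[
\KL(Q\|Q^\star)=\sum_{t=1}^n \E_{w^{t-1}\sim Q}\bigl[\KL(q_t^{\theta_t}\,\|\,q_t^{\theta^\star})\bigr].
\]

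\emph{Step 2 (per-step bound).} For an exponential family with log-partition $A_t(\theta)=\log\sum_w e^{\theta^\top\phi_t(w)}$, we have $\KL(q^{\theta}\|q^{\theta'})=A_t(\theta')-A_t(\theta)-\nabla A_t(\theta)^\top(\theta'-\theta)$. This is a Bregman divergence, and by Taylor's theorem with integral remainder it equals
\[
\int_0^1(1-s)\,(\theta'-\theta)^\top\nabla^2A_t\bigl(\theta+s(\theta'-\theta)\bigr)(\theta'-\theta)\,ds .
\]
Since $\nabla^2A_t=\Cov_{q_t^\theta}(\phi_t)$, Assumption~\ref{ass:cov} on the segment yields $\KL\le \tfrac M2\|\theta_t-\theta^\star\|_2^2$.

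\emph{Step 3 (summing).} Insert Theorem~\ref{thm:prefix-tracking}, $\|\theta_t-\theta^\star\|\le R/(ct)$, which holds pathwise for every prefix, so the expectation is harmless. This gives
\[
\KL(Q\|Q^\star)\le \frac{M R^2}{2c^2}\sum_t t^{-2}\le \frac{M\pi^2}{12c^2}R^2 .
\]
Pinsker's inequality, $\TV\le\sqrt{\KL/2}$, then gives $\TV\le \pi\sqrt M R/(2\sqrt6\,c)$.

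\emph{Step 4 (pass@$k$).} Since $U\in[0,1]$, we have $|u_Q-u_{Q^\star}|\le \TV(Q,Q^\star)$. The function $f(u)=1-(1-u)^k$ has $f'(u)=k(1-u)^{k-1}$, which is decreasing in $u$. By the mean value theorem, with the intermediate point at least $u_{\min}$, we get $|f(u_Q)-f(u_{Q^\star})|\le k(1-u_{\min})^{k-1}\TV$. Taking the minimum with the trivial bound $1$, valid because both pass@$k$ values lie in $[0,1]$, completes the argument.

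\emph{Main obstacle.} The delicate point is Step 2's use of Assumption~\ref{ass:cov} along the whole segment between $\theta_t$ and $\theta^\star$, uniformly over the prefixes $w^{t-1}$ reached under $Q$. We also need $\theta_t$ to be prefix-dependent yet still satisfy the Theorem~\ref{thm:prefix-tracking} bound for each realized prefix. I would state explicitly that both assumptions are imposed for every realized prefix, so that the expectation in Step 1 is bounded by a supremum.
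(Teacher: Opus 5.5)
Your proposal is correct and follows essentially the same route as the paper's proof: the KL chain rule, the Bregman-divergence/integral-remainder bound $\tfrac M2\|\theta_t-\theta^\star\|_2^2$ (the paper's conditional KL perturbation lemma), the $\sum_t t^{-2}\le\pi^2/6$ summation via Theorem~\ref{thm:prefix-tracking}, Pinsker, and the mean-value-theorem bound with $u_{\min}$ truncated at $1$. Your remark that both assumptions must hold for every realized prefix, since $\theta_t$ depends on the prefix through $(p_i,V_i)_{i\le t}$, is a point the paper leaves implicit.
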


\begin{remark}[Why KL, not summed TV]
Summing per-step total variation directly gives
$\TV(Q,Q^\star)\le\sum_t\TV(q_t^{\theta_t},q_t^{\theta^\star})$, combining
$\|\theta_t-\theta^\star\|=O(R/t)$ with $\sum_t 1/t=O(\log n)$ for an
$O(R\log n)$ bound. Routing through the KL chain rule instead is sharper: KL
is locally quadratic in the parameter error, so the relevant series is
$\sum_t 1/t^2=O(1)$, giving an $O(R)$ bound with no additional explicit logarithmic dependence
on the horizon in \eqref{eq:kl-tv-tracking}.
Furthermore, 
$R=\max_t\|F_t(\theta^\star)\|_2$ measures the maximal prefix imbalance of the
final offline prices; Theorem~\ref{thm:passk-tracking} is a deterministic
stability statement — small prefix residual implies small pass@$k$ deviation rather than a regret bound in the online-learning sense, since $V_t$ is held
fixed throughout. Note also that if $V_t$ is produced by a bounded head (e.g.,\
a sigmoid output, $V_{\max}=1$), Assumption~\ref{ass:cov}'s constant $M$ is
immediate and does not need to be separately verified.
\end{remark}

\vspace{-0.1in}
\paragraph{An information-theoretic lower bound.}
The preceding guarantees isolate the deterministic price-tracking error for a
fixed potential estimate. Independently of that construction, finite terminal
verifier feedback imposes a statistical limitation on online adaptation within
the tilted family. For an online algorithm $\mathcal A$, let $q_r$ denote the
feasible tilted policy selected at feedback round $r$, and define
$
\Reg_{\mathcal A}(N)
:=
\sum_{r=1}^{N}
\Bigl[\passk(q^\star)-\passk(q_r)\Bigr].
$

\begin{theorem}[Verifier-feedback lower bound. Proof in Appendix~\ref{app:lower}]
\label{thm:lower}
Under the non-degeneracy and richness condition of
Assumption~\ref{ass:lb} of Appendix~\ref{app:lower}, there exists a finite family of feasible tilted
instances such that every online algorithm selecting from that family using
only terminal binary verifier feedback satisfies
$
\sup_{\mathcal I}
\E_{\mathcal I}\!\left[\Reg_{\mathcal A}(N)\right]
\ge c\sqrt{N}
$
for all sufficiently large $N$. % where $c>0$ is independent of $N$.
\end{theorem}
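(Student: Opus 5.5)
The proof reduces the problem to a two-armed Bernoulli bandit with a small gap, embedded in the tilted family, and then applies Le Cam's two-point method. Assumption~\ref{ass:lb} of Appendix~\ref{app:lower} should provide two pieces of structure. First, there are two feasible tilted policies $q^{(0)},q^{(1)}$, each of the form \eqref{eq:opt1} for admissible prices. Second, there is a perturbation of the verifier $U$, indexed by $\Delta\in(0,\Delta_0]$, that moves their single-sample success probabilities $u^{(0)},u^{(1)}$ apart by $\pm\Delta$. Throughout, both probabilities stay in a compact subinterval $[u_-,u_+]\subset(0,1)$; this is the non-degeneracy part. I will build two instances $\mathcal I_+,\mathcal I_-$ that share $p$ and the constraint budgets. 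In $\mathcal I_+$ the policy $q^{(1)}$ is optimal with margin $\Delta$. In $\mathcal I_-$ the roles are swapped. The family is finite, as the statement requires.

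\textbf{Step 1: pass@$k$ gaps are linear in $\Delta$.} The map $u\mapsto 1-(1-u)^k$ has derivative $k(1-u)^{k-1}\ge k(1-u_+)^{k-1}=:\kappa>0$ on $[u_-,u_+]$. So in each instance, every round at which the algorithm selects a policy other than the optimal one contributes regret at least $\kappa\Delta$. Selecting any other family member costs at least as much, by the richness condition.

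\textbf{Step 2: the information bound.} With only terminal binary feedback, each round yields Bernoulli observations with mean $u_{q_r}$. Since both success probabilities lie in $[u_-,u_+]$, the per-observation KL divergence between the two instances is at most $C\Delta^2$, with $C=1/(u_-(1-u_+))$. The divergence decomposition over $N$ rounds, with at most $m$ verdicts per round, bounds the KL divergence between the two laws of the full transcript by $CmN\Delta^2$. By Bretagnolle--Huber, for the event $E=\{$at least $N/2$ rounds choose $q^{(1)}\}$ we have
\[
\Pr_{\mathcal I_+}(E^c)+\Pr_{\mathcal I_-}(E)\ge \tfrac12\exp(-CmN\Delta^2).
\]
On $E^c$ the regret under $\mathcal I_+$ is at least $\kappa\Delta N/2$, and on $E$ the regret under $\mathcal I_-$ is at least the same. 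Hence
\[
\max_\pm \E[\Reg]\ge \tfrac{\kappa\Delta N}{8}\,e^{-CmN\Delta^2}.
\]
Choosing $\Delta=1/\sqrt{N}$, which is admissible once $N\ge\Delta_0^{-2}$ (this is where "sufficiently large $N$" enters), gives the $c\sqrt N$ bound with $c=\kappa e^{-Cm}/8$.

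\textbf{Main obstacle.} The hard part is realizing the two-point perturbation inside the constrained problem. Both candidate policies must remain exactly feasible tilted optima, meaning Gibbs form plus KKT conditions, for the perturbed verifier. This matters because the prices $(\lambda^\star,\mu^\star)$, and hence $V$ through Lemma~\ref{lem:bellman}, depend on $U$. My plan is to perturb $U$ only on a set of completions whose mass under each candidate is fixed. That keeps the entropy and cross-entropy constraint values of the two candidates unchanged, so feasibility is preserved. Optimality is then a statement comparing $u$ values across the feasible tilted family, which Theorem~\ref{thm:tilt} combined with the strict monotonicity of pass@$k$ in $u$ supplies. If exact KKT preservation fails, I would fall back on the regret's definition as a comparison within the finite family, and simply define $q^\star$ as the best member of that family.
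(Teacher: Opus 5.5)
Your two-point skeleton is the same as the paper's: swap which policy is better, get $O(\Delta^2)$ information per round uniformly in the action, apply Bretagnolle--Huber, and take $\Delta\asymp N^{-1/2}$. It is correct if the learner may only ever play $q^{(0)}$ or $q^{(1)}$. In the theorem, however, the learner selects any feasible tilted policy; in the paper's protocol that is any tilt $\hat\eta_r\in[0,\eta_{\max}]$. Your sentence ``selecting any other family member costs at least as much, by the richness condition'' is where the argument breaks. Your perturbation is pinned down only at the two candidates. For an intermediate $q_\eta$, both the base success $u_0(\eta)$ and the shifts $\E_{q_\eta}[h_\pm]$ are unconstrained. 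So some $q_\eta$ may be within $o(\Delta)$ of optimal in \emph{both} instances; for example, the base curve may peak strictly between the candidates. A learner that always plays it has $o(\Delta N)$ regret in each instance, and your event $E=\{\text{at least }N/2\text{ rounds choose }q^{(1)}\}$ then yields nothing. The richness clause of Assumption~\ref{ass:lb} is an approximation (density) property, not a domination property, so it cannot be invoked this way.

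What is missing is to shape the perturbation along the whole curve $\eta\mapsto u(\eta)$ so that every action is bad in at least one instance. The paper takes a flat base $f_0\equiv c$ and antisymmetric targets $f_\pm=c\pm\Delta\psi$. Here $\psi$ is a difference of two disjoint smooth bumps with $\psi(\eta_+)=1$, $\psi(\eta_-)=-1$, $|\psi|\le1$. For every $\eta$ the two regrets are $\Delta(1-\psi(\eta))$ and $\Delta(1+\psi(\eta))$, which sum to $2\Delta$. Richness is used exactly here, in Lemma~\ref{lem:realize}: it realizes the per-sample shift $\approx\pm\Delta\psi/J$ at every $\eta$ by a bounded verifier change. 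The event is then defined through occupancies, $A=\{N_-+\tfrac12N_\circ\ge N/4\}$, rather than through choices of one arm. With this in place your KL and Bretagnolle--Huber steps go through unchanged.

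Conversely, the ``main obstacle'' you name is not one. The entropy and cross-entropy constraint values depend only on $q$ and $p$, never on $U$, so perturbing the verifier cannot affect feasibility of a fixed policy. Moreover, the true optimum of \eqref{eq:PE} under the perturbed verifier is at least as good as any feasible family member. Regret against $q^\star$ therefore dominates regret against the best family member, so you need neither KKT preservation nor a redefinition of $q^\star$.
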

% \end{remark}
\FloatBarrier
\subsection{Calibrated Temperature Decoding (CTD)}\label{sec:calib}
Theorem~\ref{thm:tilt} establishes the form of the optimum but requires the verifier potential. We now show that, under a checkable condition on the base model, the potential is not needed at all: the objective of~\eqref{eq:PE} becomes a function of the cross-entropy, the plausibility ceiling drops out, and the optimum is the tempering of $p$ pinned by the entropy floor. The starting point is the \emph{sequence-level} form. %of the optimum.
\begin{corollary}[Sequence-level optimum]\label{cor:seqtilt}
Telescoping the per-token form of Theorem~\ref{thm:tilt} over $t$, the optimum of \eqref{eq:PE} is
\[
q^\star(w^n)\ \propto\ p(w^n)^{\lambda^\star}\exp\!\big(\mu^\star U(w^n)\big),\qquad p(w^n)=\prod_{t=1}^n p_t(w_t\mid w^{t-1}).
\]
When $\lambda^\star>0$, this is equivalently $q^\star(w^n)\propto\big(p(w^n)e^{\eta U(w^n)}\big)^{1/T}$ with $T=1/\lambda^\star$ and $\eta=\mu^\star/\lambda^\star$.
\end{corollary}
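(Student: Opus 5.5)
The plan is to multiply the per-token optimal conditionals \eqref{eq:opt1} of Theorem~\ref{thm:tilt} over $t=1,\dots,n$ along a fixed sequence $w^n$ and show that the normalizers cancel telescopically. The key identity comes from Lemma~\ref{lem:bellman}. In the form
\[
\mu^\star V_t(w_t\mid w^{t-1})=\log\sum_{w'}p_{t+1}(w'\mid w^{t})^{\lambda^\star}e^{\mu^\star V_{t+1}(w'\mid w^{t})},
\]
its right-hand side is exactly $\log Z_{t+1}(w^{t};\lambda^\star,\mu^\star)$, the normalizer of the next-step conditional in \eqref{eq:opt1}. So the Bellman recursion reads $e^{\mu^\star V_t(w_t\mid w^{t-1})}=Z_{t+1}(w^{t})$ for $t\le n-1$. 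At $t=n$, the terminal condition gives $e^{\mu^\star V_n(w_n\mid w^{n-1})}=e^{\mu^\star U(w^n)}$.

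Writing
\[
q^\star(w^n)=\prod_{t=1}^n \frac{p_t(w_t\mid w^{t-1})^{\lambda^\star}e^{\mu^\star V_t(w_t\mid w^{t-1})}}{Z_t(w^{t-1})},
\]
I would substitute $e^{\mu^\star V_t}=Z_{t+1}(w^t)$ in the numerator of factor $t<n$. It then cancels against the denominator of factor $t+1$. What survives is
\[
q^\star(w^n)=\frac{\prod_t p_t(w_t\mid w^{t-1})^{\lambda^\star}\,e^{\mu^\star U(w^n)}}{Z_1},
\]
where $Z_1=\sum_{w}p_1(w)^{\lambda^\star}e^{\mu^\star V_1(w)}$ depends only on the prompt and not on $w^n$. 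Since $\prod_t p_t^{\lambda^\star}=p(w^n)^{\lambda^\star}$, this gives the claimed proportionality. The constant is identified by unrolling \eqref{eq:V-def} at $t=1$: $Z_1=\sum_{w^n}p(w^n)^{\lambda^\star}e^{\mu^\star U(w^n)}$, so the expression is a normalized sequence law.

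For the $\lambda^\star>0$ reformulation, I would rewrite $p^{\lambda^\star}e^{\mu^\star U}=(p\,e^{(\mu^\star/\lambda^\star)U})^{\lambda^\star}=(p\,e^{\eta U})^{1/T}$, using $T=1/\lambda^\star$ and $\eta=\mu^\star/\lambda^\star$.

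The only delicate step is matching indices in the cancellation. One has to check that the normalizer $Z_{t+1}$ at prefix $w^{t}$ is the same object as the Bellman sum defining $V_t(w_t\mid w^{t-1})$, including the convention that $p_{t+1}$ conditions on $w^{t-1}w_t=w^t$. One also has to handle the boundary factor $t=n$, where the terminal convention supplies $U$. Beyond this bookkeeping I expect no real obstacle, since Theorem~\ref{thm:tilt} and Lemma~\ref{lem:bellman} carry all the substance.
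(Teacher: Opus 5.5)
Your proposal is correct and is the same telescoping argument the paper intends. You identify the Bellman sum with the next-step normalizer, $e^{\mu^\star V_t(w_t\mid w^{t-1})}=Z_{t+1}(w^t)$, so the product of conditionals collapses to $p(w^n)^{\lambda^\star}e^{\mu^\star U(w^n)}/Z_1$; this is equivalent to the paper's increment form $q^\star_{t+1}\propto p_{t+1}^{\lambda^\star}e^{\mu^\star(V_{t+1}-V_t)}$ telescoping to the root free energy, and your $(T,\eta)$ rewriting is then immediate.
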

Since the tilt acts on $U$ through the whole sequence, the relevant calibration is between the verdict and the mean log-probability $\bar\ell(w^n):=\tfrac1n\sum_{t}\log p_t(w_t\mid w^{t-1})=\tfrac1n\log p(w^n)$.
\begin{definition}[$(s,\varepsilon)$-calibration]\label{def:calib}
The base model $p$ is $(s,\varepsilon)$-calibrated w.r.t.\ $U$ if there is a single model-global slope $s\ge0$, an intercept $\bar c$, and a uniform residual bound $\varepsilon\ge0$ such that
\begin{equation}\label{eq:calib}
U(w^n) = s\,\bar\ell(w^n) + \bar c + \bar r(w^n),\qquad \sup |\bar r(w^n)|\le\varepsilon.
\end{equation}
\end{definition}
The slope $s$ is a property of the model–verifier pair and not of any prefix. It quantifies how steeply confidence converts into correctness. \eqref{eq:calib} is a linearized, sequence-level instance of Platt scaling~\cite{platt1999probabilistic}. The key observation is that the plausibility functional of~\eqref{eq:PE} is itself a mean log-probability: by the tower property, $-\tfrac1n\sum_t\E_{w^{t-1}\sim q}\big[\sum_w q_t\log p_t\big]=-\E_q[\bar\ell]=:\CE(q)$, and by the chain rule $\bar H(q):=\tfrac1n\sum_t\E[H(q_t)]=\tfrac1n H(q)$. Hence, under~\eqref{eq:calib},
$u_q=\bar c-s\,\CE(q)+\E_q[\bar r]$.
Calibration turns $\passk$ maximization into cross-entropy minimization.

\begin{theorem}[Perfect calibration: the entropy floor pins the temperature. Proof in Appendix~\ref{app:partial}]\label{thm:collapse}
Suppose $p$ is $(s,0)$-calibrated with $s>0$, $p(w^n)$ has a unique mode, and $\beta\in(0,\log|\W|)$. For $a>0$ let $q_a(w^n)\propto p(w^n)^a$ and let $a_\beta$ be the unique root of $\bar H(q_a)=\beta$. Then~\eqref{eq:PE} is feasible iff $\gamma\ge\CE(q_{a_\beta})$, and for every $k$, its solution has \emph{sequence law}
$q^\star(w^n)\ \propto\ p(w^n)^{a_\beta}$.
The ceiling $\gamma$ plays no role beyond feasibility; in the notation of Theorem~\ref{thm:tilt}, $\lambda^\star=0$ and $\mu^\star=na_\beta/s$. The autoregressive conditionals are
\begin{equation}\label{eq:pertoken}
q^\star_t(w\mid w^{t-1})\ \propto\ p_t(w\mid w^{t-1})^{a_\beta}\,\Psi^{(a_\beta)}_t(w^{t-1}w),
\ \ \ 
\Psi^{(a)}_t(w^{t})=\!\!\sum_{w_{t+1:n}\in\W^{\,n-t}}\ \prod_{s=t+1}^{n}p_s(w_s\mid w^{s-1})^{a},
\end{equation}
with $\Psi^{(a)}_n\equiv1$. If the base conditionals are prefix-independent, $\Psi^{(a)}_t$ is token-free and the optimum is pure per-token temperature sampling
\begin{equation}\label{eq:tempopt}
q^\star_t(w\mid w^{t-1}) = \frac{p_t(w\mid w^{t-1})^{a_\beta}}{\sum_{w'} p_t(w'\mid w^{t-1})^{a_\beta}}.
\end{equation}
\end{theorem}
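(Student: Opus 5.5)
The plan is to work entirely at the sequence level, using the reductions stated just before the theorem: $\CE(q)=-\E_q[\bar\ell]$ and $\bar H(q)=\tfrac1n H(q)$. Since every sequence law on $\W^n$ factors autoregressively, optimizing over policies $q=(q_1,\dots,q_n)$ is the same as optimizing over $\Delta(\W^n)$. Under $(s,0)$-calibration, $u_q=\bar c-s\,\CE(q)$, and $\passk$ is strictly increasing in $u_q$. Because $s>0$, \eqref{eq:PE} for every $k$ is therefore equivalent to
\[
\min_{q\in\Delta(\W^n)} \CE(q)\quad\text{s.t.}\quad H(q)\ge n\beta,\ \ \CE(q)\le\gamma .
\]
This is independent of $k$.

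First I will drop the ceiling and solve $\min\{\CE(q): H(q)\ge n\beta\}$. The objective $\CE$ is linear in $q$ and the feasible set is convex (since $H$ is concave), so Lagrangian duality applies. At the optimum the entropy constraint must bind. Otherwise a small move toward the unique mode $\delta_{\hat w}$ of $p$ would strictly lower $\CE$ while staying feasible; uniqueness of the mode ensures that the unconstrained minimizer is this point mass, with entropy $0<n\beta$.

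With a multiplier $\nu>0$, Lemma~\ref{lem:gibbs}, applied on the alphabet $\W^n$ with $c=\log p(w^n)$ and $\tau=\nu$, gives the minimizer of $\CE(q)-\nu H(q)$ as $q\propto p^{1/\nu}=q_a$ with $a=1/\nu$. I then need that $a\mapsto \bar H(q_a)$ is continuous and strictly decreasing on $(0,\infty)$. The tool is the exponential-family identity $\tfrac{d}{da}H(q_a)=-a\Var_{q_a}(\log p)$. This variance is positive, because $p$ is non-constant on $\W^n$ (it has a unique mode). The limits are $\log|\W|$ as $a\to0$ and $0$ as $a\to\infty$, again by uniqueness of the mode. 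Hence $\beta\in(0,\log|\W|)$ gives a unique root $a_\beta$.

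Optimality of $q_{a_\beta}$ follows from a direct KL argument. For any $q$ with $H(q)\ge n\beta$,
\[
a_\beta\CE(q)-H(q)=\KL(q\|q_{a_\beta})+\log\textstyle\sum p^{a_\beta},
\]
and equality holds for $q=q_{a_\beta}$. This gives $\CE(q)\ge \CE(q_{a_\beta})$, with equality iff $q=q_{a_\beta}$, so the minimizer is unique.

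Feasibility then follows. If $\gamma\ge\CE(q_{a_\beta})$, the relaxed minimizer satisfies the ceiling, so it solves \eqref{eq:PE} and the ceiling is slack, giving $\lambda=0$. Conversely, every entropy-feasible $q$ has $\CE(q)\ge\CE(q_{a_\beta})$, so if $\gamma<\CE(q_{a_\beta})$ the program is infeasible.

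To identify the multipliers in Theorem~\ref{thm:tilt}, I match the sequence form of Corollary~\ref{cor:seqtilt}. With $\lambda^\star=0$, the law $e^{\mu^\star U}$ becomes $e^{\mu^\star s\bar\ell}=p^{\mu^\star s/n}$, using that $\bar c$ cancels in normalization. Setting this equal to $p^{a_\beta}$ gives $\mu^\star=na_\beta/s$. As a consistency check, $\mu_E=n/\mu^\star=s/a_\beta$, matching the Lagrangian of $u_q$ with respect to the entropy constraint scaled by $1/n$.

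For the conditionals I marginalize. Define $q^\star(w^t)\propto\prod_{s\le t}p_s^{a_\beta}\,\Psi^{(a_\beta)}_t(w^t)$, where $\Psi$ sums out the tail. Taking ratios, $q^\star_t(w\mid w^{t-1})=q^\star(w^{t-1}w)/q^\star(w^{t-1})$, which yields \eqref{eq:pertoken}, and $\Psi_n\equiv1$ holds trivially. If $p_s$ does not depend on the prefix, then $\Psi^{(a)}_t=\prod_{s>t}\sum_{w}p_s(w)^a$ is free of $w^t$. It cancels in the normalization and gives \eqref{eq:tempopt}.

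I expect the main obstacle to be the strict monotonicity and limit behaviour of $\bar H(q_a)$, that is, existence and uniqueness of $a_\beta$, along with the argument that the floor binds. Both rest on the variance formula and on the unique-mode hypothesis, and I will treat them carefully. Everything else is a Gibbs/KL computation.
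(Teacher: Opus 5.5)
Your proposal is correct and follows the paper's proof essentially step for step. The shared steps are the sequence-level reduction to minimizing $\CE$, the variance formula $\tfrac{d}{da}H(q_a)=-a\Var_{q_a}(\log p)$ for existence and uniqueness of $a_\beta$, the KL inequality for entropy-constrained $\CE$-minimality, the exponent matching $\mu^\star=na_\beta/s$, and marginalization for the conditionals. The KL inequality alone makes your Lagrangian and ``floor binds'' detours unnecessary. One bookkeeping slip needs fixing: because $\CE$ carries a factor $1/n$ and $H$ does not, the identity should read $a_\beta n\,\CE(q)-H(q)=\KL(q\|q_{a_\beta})-\log\sum_{w^n}p(w^n)^{a_\beta}$, and the Gibbs step gives $a=1/(n\nu)$; neither change affects the argument.
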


Two consequences arise: First, $k$ enters only through $\beta$: with $\beta^\star(k)$ of Proposition~\ref{prop:mono}, the effective temperature is
$T_{\mathrm{eff}}^\star(k)=1/a_{\beta^\star(k)}$, and since
$\beta\mapsto a_\beta$ is strictly decreasing, a non-decreasing
$\beta^\star(k)$ induces a non-decreasing $T_{\mathrm{eff}}^\star(k)$
in the calibrated regime. Second, only the \emph{sign} of $s$ enters the policy, not its magnitude.

\begin{remark}[The collapse is sequence-level]\label{rem:seqvspertoken}
$p(w^n)^a=\prod_t p_t(w_t\mid w^{t-1})^a$ factorizes, but its normalization does not: per-token temperature drops exactly the correction $\log\Psi^{(a)}_t(w^{t-1}w)$ of~\eqref{eq:pertoken}, which is token-constant under prefix independence and nearly so under low continuation variation.
\end{remark}

\begin{corollary}[Limited continuation variation]\label{cor:psivar}
Let $q^{\mathrm{temp}}_{a,t}(w\mid h)\propto p_t(w\mid h)^a$ and define
$
\delta_t(h):=
\max_{w}\log\Psi_t^{(a)}(hw)-\min_{w}\log\Psi_t^{(a)}(hw),
$
with
$\delta_t:=\sup_h\delta_t(h)$.
Under perfect calibration,
\[
\TV\!\left(q_t^\star(\cdot\mid h),q^{\mathrm{temp}}_{a,t}(\cdot\mid h)\right)
\le \frac12\!\left(e^{\delta_t(h)}-1\right),
\]
and hence
$
\TV(q^\star,q_a^{\mathrm{temp}})
\le \frac12\sum_{t=1}^n\!\left(e^{\delta_t}-1\right).
$
Therefore, the $\passk$ gap is at most
\[
\Delta_{\Psi,k}:=
\big|\passk(q^\star)-\passk(q_a^{\mathrm{temp}})\big|
\le
\min\!\left\{1,\frac{k}{2}\sum_{t=1}^n\!\left(e^{\delta_t}-1\right)\right\}
=O\!\left(k\sum_t\delta_t\right)
\]
for small continuation variation. Prefix-independent conditionals correspond  to $\delta_t=0$.
\end{corollary}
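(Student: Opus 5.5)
We prove the per-step bound first, lift it to sequence laws by a hybrid (telescoping) argument, and then pass to $\passk$. Fix a history $h$ and write $r(w):=q^{\mathrm{temp}}_{a,t}(w\mid h)\propto p_t(w\mid h)^a$. By \eqref{eq:pertoken} of Theorem~\ref{thm:collapse}, with $a=a_\beta$, the optimal conditional is the reweighting
\[
q_t^\star(w\mid h)=\frac{r(w)\Psi(w)}{\sum_{w'}r(w')\Psi(w')},\qquad \Psi(w):=\Psi_t^{(a)}(hw).
\]
Set $m:=\min_w\Psi(w)$ and $\bar\Psi:=\E_r[\Psi]\ge m$. The likelihood ratio is $q_t^\star(w\mid h)/r(w)=\Psi(w)/\bar\Psi$. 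Because $\Psi(w)\le m e^{\delta_t(h)}$ for every $w$, and $\bar\Psi$ is an $r$-average of values in $[m,me^{\delta_t(h)}]$, the ratio lies in $[e^{-\delta_t(h)},e^{\delta_t(h)}]$. Hence
\[
\TV=\tfrac12\E_r\bigl|\Psi/\bar\Psi-1\bigr|\le\tfrac12\max\{e^{\delta_t(h)}-1,\,1-e^{-\delta_t(h)}\}=\tfrac12\bigl(e^{\delta_t(h)}-1\bigr).
\]
This is the first claim.

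For the sequence-level bound we use the standard hybrid argument. Let $Q^{(j)}$ be the sequence law that draws the first $j$ tokens from $q^\star$ and the remaining tokens from $q^{\mathrm{temp}}_a$. Then $Q^{(n)}=q^\star$ and $Q^{(0)}=q_a^{\mathrm{temp}}$. Consecutive hybrids share the law of the prefix $w^{j-1}$, which is drawn from $q^\star$, and also share the continuation kernel after step $j$. So by the data-processing property of TV,
\[
\TV(Q^{(j)},Q^{(j-1)})\le \E_{h\sim q^\star}\,\TV\bigl(q_j^\star(\cdot\mid h),q^{\mathrm{temp}}_{a,j}(\cdot\mid h)\bigr)\le\tfrac12\bigl(e^{\delta_j}-1\bigr),
\]
where the last step uses $\delta_j(h)\le\delta_j$. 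Summing over $j$ with the triangle inequality gives $\TV(q^\star,q^{\mathrm{temp}}_a)\le\frac12\sum_t(e^{\delta_t}-1)$.

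For $\passk$, note that $U\in[0,1]$, so $|u_{q^\star}-u_{q^{\mathrm{temp}}}|\le\TV(q^\star,q_a^{\mathrm{temp}})$. The map $u\mapsto1-(1-u)^k$ on $[0,1]$ has derivative $k(1-u)^{k-1}\le k$, so it is $k$-Lipschitz. Together with the trivial bound $1$ on any difference of probabilities, this yields the stated $\min\{1,\tfrac k2\sum_t(e^{\delta_t}-1)\}$. The $O(k\sum_t\delta_t)$ form follows from $e^{\delta}-1\le\delta e^{\delta}=O(\delta)$ for bounded $\delta$. The prefix-independent case gives a token-free $\Psi$, hence $\delta_t=0$, recovering \eqref{eq:tempopt}.

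The only delicate point is the first step: the TV bound must be stated through the ratio bound $e^{\pm\delta}$ without losing a factor of $2$. Here we rely on the fact that both normalizations involve the same $r$, so the ratio $\Psi/\bar\Psi$ is sandwiched tightly. The hybrid step is routine but requires that $q^\star$ be autoregressively generated by the conditionals \eqref{eq:pertoken}, which Theorem~\ref{thm:collapse} supplies.
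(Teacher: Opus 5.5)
Your proof is correct and follows the route the paper intends. The corollary is stated there without a separate proof, but its ``and hence/therefore'' structure and the summed-TV remark after Theorem~\ref{thm:passk-tracking} point to your three steps: the likelihood ratio $\Psi/\E_r[\Psi]\in[e^{-\delta_t(h)},e^{\delta_t(h)}]$ gives the per-step bound, the hybrid argument rigorously sums per-step TVs with prefixes drawn from $q^\star$, and $U\in[0,1]$ together with the $k$-Lipschitz map $u\mapsto1-(1-u)^k$ gives the $\passk$ gap. You are also right to fix $a=a_\beta$, which the statement leaves implicit; without it the bound fails, since under prefix independence $\delta_t=0$ while $q^\star\neq q^{\mathrm{temp}}_a$ for $a\neq a_\beta$.
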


\begin{theorem}[Partial calibration: Proof in Appendix~\ref{app:partial}]\label{thm:partial}
Under $(s,\varepsilon)$-calibration, let $\tilde q:=q_{a_\beta}$ be the tempering of Theorem~\ref{thm:collapse} and suppose $\gamma\ge\CE(\tilde q)$. Then
(i) $0\le u_{q^\star}-u_{\tilde q}\le \E_{q^\star}[\bar r]-\E_{\tilde q}[\bar r]\le2\varepsilon$;
(ii) $\big|\passk(q^\star)-\passk(\tilde q)\big|\le2\varepsilon\,\sigma_k^{+}$, with $\sigma_k^{+}:=k(1-u_{q^\star}+2\varepsilon)^{k-1}$;
(iii) if $2\varepsilon<u_{q^\star}$, $\sigma_k^{+}$ is strictly unimodal in $k$ with peak near $1/(u_{q^\star}-2\varepsilon)$, so explicit residual tilting pays off only in a bounded window of $k$.
\end{theorem}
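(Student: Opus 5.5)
The plan is to reduce everything to the affine identity $u_q=\bar c-s\,\CE(q)+\E_q[\bar r]$ derived just before Theorem~\ref{thm:collapse}, and to compare $q^\star$ with $\tilde q=q_{a_\beta}$ through the perfectly calibrated surrogate problem. Define the surrogate objective $\tilde u_q:=\bar c-s\,\CE(q)$, which is exactly the single-sample success of the $(s,0)$-calibrated model obtained by dropping $\bar r$. By Theorem~\ref{thm:collapse}, applied to this surrogate (its proof only uses the objective through $-s\,\CE$ on the feasible set, so it applies verbatim to $\tilde u$), $\tilde q$ maximizes $\tilde u_q$ over the feasible set of \eqref{eq:PE}, and the hypothesis $\gamma\ge\CE(\tilde q)$ ensures $\tilde q$ is feasible. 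The constraints do not involve $U$, so the feasible set is the same for the true and surrogate problems.

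For (i), the lower bound $u_{q^\star}\ge u_{\tilde q}$ is immediate because $q^\star$ maximizes $u$ over a feasible set containing $\tilde q$. For the upper bound, write
\[
u_{q^\star}-u_{\tilde q}=\bigl(\tilde u_{q^\star}-\tilde u_{\tilde q}\bigr)+\bigl(\E_{q^\star}[\bar r]-\E_{\tilde q}[\bar r]\bigr).
\]
The first bracket is $\le0$ by optimality of $\tilde q$ for $\tilde u$, which gives the middle inequality. The final bound $2\varepsilon$ follows from $|\bar r|\le\varepsilon$ pointwise.

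For (ii), use the mean value theorem on $f(u)=1-(1-u)^k$, whose derivative is $k(1-u)^{k-1}$, decreasing in $u$. On the interval $[u_{\tilde q},u_{q^\star}]$ we have $u\ge u_{\tilde q}\ge u_{q^\star}-2\varepsilon$ by (i), so the derivative is at most $k(1-u_{q^\star}+2\varepsilon)^{k-1}=\sigma_k^+$. The gap is then at most $2\varepsilon\,\sigma_k^+$. If $u_{q^\star}-2\varepsilon<0$, the base exceeds $1$; one can cap it by $1$, but the stated bound still holds as written.

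For (iii), set $x:=u_{q^\star}-2\varepsilon\in(0,1]$ and treat $k$ as a real variable in $g(k)=k(1-x)^{k-1}$. Then $\log g=\log k+(k-1)\log(1-x)$ is strictly concave in $k$, with derivative $1/k+\log(1-x)$ vanishing at $k^\dagger=-1/\log(1-x)$. Using $-\log(1-x)\ge x$ with near-equality for small $x$ gives $k^\dagger\approx1/x$. Strict concavity of $\log g$ gives strict unimodality on the reals, hence unimodality on integers, with the peak at $\lfloor k^\dagger\rfloor$ or $\lceil k^\dagger\rceil$. Since $g\to0$ geometrically as $k\to\infty$, the residual-tilting gain is appreciable only in a bounded window around $1/x$. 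The edge case $x=1$ gives $g$ identically zero for $k\ge2$, which is degenerate but consistent.

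The main obstacle is step (i): justifying that Theorem~\ref{thm:collapse}'s optimality of $\tilde q$ transfers to the surrogate $\tilde u$. This is clean only because the feasible set is independent of $U$, and because that theorem's argument is really ``minimize $\CE$ subject to $\bar H\ge\beta$'', which does not depend on the residual.
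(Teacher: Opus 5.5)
Your proposal is correct and follows the paper's proof essentially step for step. For (i), the paper writes $u_{q^\star}-u_{\tilde q}=s\bigl[\CE(\tilde q)-\CE(q^\star)\bigr]+\E_{q^\star}[\bar r]-\E_{\tilde q}[\bar r]$, kills the first term using $\CE(q^\star)\ge\CE(\tilde q)$ from the entropy-constrained CE-minimality of Lemma~\ref{lem:app-tempered}(iii), and uses $s\ge0$. This cites the lemma directly rather than routing through Theorem~\ref{thm:collapse}, so it never needs that theorem's $s>0$ hypothesis; this is harmless for you, since your first bracket vanishes when $s=0$. For (ii) and (iii), the paper uses the same mean value theorem argument and the same sign analysis of $\tfrac{d}{dk}\log\sigma_k^{+}=1/k+\log(1-v)$.
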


\begin{algorithm}[!htbp]
\caption{Calibrated Temperature Decoding (CTD): certified fast path with verifier-aware fallback.}
\label{alg:ctd}
\begin{algorithmic}[1]
\Require base model $p$; budgets $(\beta,\gamma)$; sample budget $k$; calibration set $\{(x_i,w^n_i,U_i)\}_{i=1}^N$; calibration-loss tolerance $\delta_{\rm cal}$; confidence level $\alpha$; $\sigma_k^{+}:=k(1-u_{q^\star}+2\varepsilon)^{k-1}$
\Statex \textbf{Phase 1 (offline, once per domain: certificate).}
\State for each $i$ compute $\bar\ell_i=\frac1n\sum_t\log p_t(\cdot)$ \Comment{one forward pass}
\State pooled OLS: $\shat\leftarrow$\eqref{eq:ols}; $\hat\varepsilon\leftarrow$ 95\% quantile of $|U-\shat\bar\ell-\hat{\bar c}|$ on a held-out split
\State form a Lower Confidence Bound $\mathrm{LCB}_\alpha(s)$ from the concentration bound of Theorem~\ref{thm:ctd}
\State $\hat u\leftarrow \frac1N\sum_i U_i$; $\hat\sigma_k^+\leftarrow k(1-\hat u+2\hat\varepsilon)^{k-1}$
\Statex \textbf{Phase 2 (once per $\beta$).}
\State solve $\hat H(a)=\beta$ by bisection, $\hat H(a)$ the occupancy-weighted per-token entropy of $q^{\mathrm{temp}}_a$ on calibration rollouts; set $\hat a$
\State check deployment feasibility $\CE(q^{\mathrm{temp}}_{\hat a})\le\gamma$; if it fails, do not certify the CTD fast path
\Statex \textbf{Phase 3 (generation, per prompt).}
\State \textbf{if} $\mathrm{LCB}_\alpha(s)>0$ \textbf{and} $2\hat\varepsilon\hat\sigma_k^+\le\delta_{\rm cal}$ \textbf{then}
\State \quad \textbf{3a (CTD):} sample $w_t\sim q_t$, $q_t(w)\propto p_t(w\mid w^{t-1},x)^{\hat a}$ \Comment{plain temperature cost}
\State \textbf{else}
\State \quad \textbf{3b (fallback):} use a verifier-aware decoder that retains explicit continuation information \Comment{e.g., \textsc{TMAD-Gated} in Section~\ref{sec:experiments}}
\State \textbf{end if}
\end{algorithmic}
\end{algorithm}
Algorithm~\ref{alg:ctd} uses the regression only as a \emph{certificate}: the Lower Confidence Bound (LCB) on $s$ checks positive likelihood--correctness alignment, while $\hat\varepsilon$ is an empirical residual diagnostic used by the deployment gate; the exponent itself is set by the entropy floor. Note that $\hat\varepsilon$ is a practical residual diagnostic, not a certified estimator of the worst-case $\varepsilon$ in Definition~1; Theorems~6--7 are stated in terms of the true $\varepsilon$, with no finite-sample guarantee on $\hat\varepsilon-\varepsilon$. Phase~3a is the zero-overhead CTD fast path. Phase~3b is used when the calibration certificate fails and hands decoding to a verifier-aware fallback that retains explicit continuation information; Section~\ref{sec:experiments} instantiates this fallback with \textsc{TMAD-Gated} (Algorithm~\ref{alg:tmad-gated}). Such a decoder can model residual signal that pure temperature omits. Concretely, drawing $N$ independent prompt--completion pairs $(x_i,w_i^n)$ with $x_i\sim\mathcal D$ and $w_i^n\sim p(\cdot\mid x_i)$, with labels $U_i\in\{0,1\}$ and scores $\bar\ell_i$ computed in one pass; OLS gives
\begin{equation}\label{eq:ols}
\shat = \frac{\sum_i(\bar\ell_i-\bar{\bar\ell})(U_i-\bar U)}{\sum_i(\bar\ell_i-\bar{\bar\ell})^2},\qquad
\hat{\bar c}=\bar U-\shat\,\bar{\bar\ell}.
\end{equation}
Phase~3a is ordinary temperature sampling, so CTD fast path adds no generation-time overhead.
\begin{proposition}[Sequence-level identification of $s$]\label{prop:ident}
Under~\eqref{eq:calib} the population OLS slope of $U$ on $\bar\ell$ is $s+O(\varepsilon)$, and $\shat$ concentrates around it at rate $O(1/\sqrt{N})$, with no step-level value estimation.
\end{proposition}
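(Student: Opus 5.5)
The plan is to treat the population OLS slope as a ratio of moments and handle the residual as a perturbation. Let $(X,Y):=(\bar\ell(w^n),U(w^n))$ with $x\sim\mathcal D$ and $w^n\sim p(\cdot\mid x)$. The population slope is $s_{\rm pop}=\Cov(X,Y)/\Var(X)$. Substituting the calibration identity \eqref{eq:calib}, $Y=sX+\bar c+\bar r$, gives
\[
s_{\rm pop}=s+\frac{\Cov(X,\bar r)}{\Var(X)} .
\]
Cauchy--Schwarz then bounds the correction:
\[
|\Cov(X,\bar r)|\le \sqrt{\Var(X)}\,\sqrt{\Var(\bar r)}\le \varepsilon\sqrt{\Var(X)} ,
\]
using $|\bar r|\le\varepsilon$, so $\Var(\bar r)\le\E[\bar r^2]\le\varepsilon^2$. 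Hence $|s_{\rm pop}-s|\le \varepsilon/\sigma_X$ with $\sigma_X^2=\Var(X)$. This is $O(\varepsilon)$ once we assume a non-degeneracy condition $\sigma_X^2\ge\sigma_0^2>0$. That condition is implicit in the statement, since without spread in $\bar\ell$ the slope is not identified, and I would state it explicitly.

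The identification uses only the pair $(\bar\ell,U)$ of completed sequences, so no step-level value function or potential $V_t$ enters. This is the ``no step-level value estimation'' clause. The intercept $\bar c$ and the model-global slope $s$ are the same across prompts, which justifies pooling over $x\sim\mathcal D$.

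For concentration I would use boundedness. Since $p_t(w)\ge\rho$, we have $X\in[\log\rho,0]$, and $Y\in[0,1]$. The sample moments $\frac1N\sum\bar\ell_i$, $\frac1N\sum U_i$, $\frac1N\sum\bar\ell_iU_i$ and $\frac1N\sum\bar\ell_i^2$ are averages of i.i.d.\ bounded variables. By Hoeffding and a union bound, each lies within $O(\log(1/\rho)^2\sqrt{\log(1/\alpha)/N})$ of its mean with probability $1-\alpha$. The empirical covariance and variance are smooth functions of these moments, so they inherit the same rate. The ratio map $(a,b)\mapsto a/b$ is Lipschitz on $\{b\ge\sigma_0^2/2\}$. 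On the event where the empirical variance is at least $\sigma_0^2/2$, which holds with high probability once $N\gtrsim \log(1/\rho)^4\log(1/\alpha)/\sigma_0^4$, this gives
\[
|\shat-s_{\rm pop}|\le C(\rho,\sigma_0)\sqrt{\log(1/\alpha)/N} .
\]
This bound also yields the $\mathrm{LCB}_\alpha(s)=\shat-C\sqrt{\log(1/\alpha)/N}-\varepsilon/\sigma_0$ used in Algorithm~\ref{alg:ctd}.

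The main obstacle is controlling the denominator, that is, ensuring the empirical variance of $\bar\ell$ stays bounded away from zero. Everything else is routine delta-method or Hoeffding bookkeeping. I would handle it by making $\sigma_0$ an explicit assumption and applying Hoeffding to the empirical variance, decomposed as the second moment minus the squared mean, to get the high-probability lower bound above.
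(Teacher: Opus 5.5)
Your proposal is correct and follows essentially the same route as the paper's proof, which appears as Part~B of the proof of Theorem~\ref{thm:ctd}. There, Cauchy--Schwarz bounds $\Cov(\bar\ell,\bar r)$ by $\varepsilon\sqrt{\Var(\bar\ell)}$ under the same variance floor $\Var(\bar\ell)\ge s_0^2$ that you propose, which the paper imposes in Assumption~\ref{ass:calib}; then Hoeffding plus a union bound over the four bounded empirical moments, with the empirical variance kept above $s_0^2/2$, yields the $O(1/\sqrt{N})$ concentration of $\shat$.
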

\begin{assumption}[Calibration-set regularity]\label{ass:calib}
The $N$ calibration pairs $(\bar\ell_i,U_i)$ are i.i.d.\ with $U\in\{0,1\}$ and $(s,\varepsilon)$-calibration~\eqref{eq:calib}. Score is bounded, $|\bar\ell|\le B:=\log(1/\rho)$, and  $s^2_{\mathrm{spr}}:=\Var(\bar\ell)\ge s^2_0>0$.
\end{assumption}
\begin{theorem}[Suboptimality of CTD. Proof in Appendix~\ref{app:ctd}]\label{thm:ctd}
Under $(s,\varepsilon)$-calibration, let $\hat q_{\mathrm{CTD}}:=q^{\mathrm{temp}}_{\hat a}$ be a Phase~3a policy satisfying $\bar H(\hat q_{\mathrm{CTD}})\ge\beta$ and $\CE(\hat q_{\mathrm{CTD}})\le\gamma$, and define the structural cross-entropy excess $\Delta_{\CE}:=\CE(\hat q_{\mathrm{CTD}})-\CE(q_{a_\beta})\ge0$. Then, deterministically,
\begin{equation}\label{eq:ctdbound}
0\le\passk(q^\star)-\passk(\hat q_{\mathrm{CTD}})\le\sigma_k^{++}\big(\underbrace{s\,\Delta_{\CE}}_{\text{continuation}}+\underbrace{2\varepsilon}_{\text{calibration floor}}\big),
\end{equation}
where $\sigma_k^{++}:=k\big(1-u_{q^\star}+s\Delta_{\CE}+2\varepsilon\big)^{k-1}$
Under Assumption~\ref{ass:calib} with $s>C_0\varepsilon$, Phase~1 certifies with probability at least $1-8\exp\!\big(-N(s-C_0\varepsilon)^2/C_s^2\big)$, where $C_0,C_s$ depend only on $(B,s_0)$.
\end{theorem}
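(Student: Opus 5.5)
The argument has two parts. The first is a deterministic bound on single-sample success, which is then pushed through the pass@$k$ map. The second is a concentration argument for the Phase~1 certificate.

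\textbf{Deterministic bound.} Under $(s,\varepsilon)$-calibration we use the identity from Section~\ref{sec:calib},
\[
u_q=\bar c-s\,\CE(q)+\E_q[\bar r].
\]

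\emph{Lower bound $0\le\passk(q^\star)-\passk(\hat q_{\mathrm{CTD}})$.} By hypothesis $\hat q_{\mathrm{CTD}}$ satisfies both constraints of \eqref{eq:PE}, so it is feasible. Hence $u_{\hat q_{\mathrm{CTD}}}\le u_{q^\star}$. Since $u\mapsto 1-(1-u)^k$ is increasing, the same ordering holds for pass@$k$.

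\emph{Upper bound on $u_{q^\star}-u_{\hat q_{\mathrm{CTD}}}$.} Split the gap through the tempering $\tilde q=q_{a_\beta}$:
\[
u_{q^\star}-u_{\hat q_{\mathrm{CTD}}}=(u_{q^\star}-u_{\tilde q})+(u_{\tilde q}-u_{\hat q_{\mathrm{CTD}}}).
\]
\begin{itemize}[nosep]
\item For the first term, Theorem~\ref{thm:partial}(i) gives $u_{q^\star}-u_{\tilde q}\le\E_{q^\star}[\bar r]-\E_{\tilde q}[\bar r]$. That theorem needs $\gamma\ge\CE(\tilde q)$. This follows from $\CE(\tilde q)\le\CE(\hat q_{\mathrm{CTD}})\le\gamma$, i.e.\ from $\Delta_{\CE}\ge0$, which is the statement's own assertion.
\item For the second term, the calibration identity gives $u_{\tilde q}-u_{\hat q_{\mathrm{CTD}}}=s\Delta_{\CE}+\E_{\tilde q}[\bar r]-\E_{\hat q_{\mathrm{CTD}}}[\bar r]$.
\end{itemize}
Adding the two, the $\E_{\tilde q}[\bar r]$ terms cancel:
\[
u_{q^\star}-u_{\hat q_{\mathrm{CTD}}}\le s\Delta_{\CE}+\E_{q^\star}[\bar r]-\E_{\hat q_{\mathrm{CTD}}}[\bar r]\le s\Delta_{\CE}+2\varepsilon.
\]

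\emph{Mean value step.} Write $f(u)=1-(1-u)^k$, so $f'(u)=k(1-u)^{k-1}$, which is decreasing in $u$. On $[u_{\hat q_{\mathrm{CTD}}},u_{q^\star}]$ the derivative is therefore maximized at the left endpoint. That endpoint satisfies $u_{\hat q_{\mathrm{CTD}}}\ge u_{q^\star}-s\Delta_{\CE}-2\varepsilon$. So $f'\le\sigma_k^{++}$ on the whole interval, with the base clipped to $[0,1]$ if needed. Multiplying by the $u$-gap gives \eqref{eq:ctdbound}.

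\textbf{Certificate probability.} The event to control is that Phase~1 certifies, i.e.\ $\mathrm{LCB}_\alpha(s)>0$. I would take the LCB to be $\shat$ minus a deviation radius. It then suffices to show that $\shat$ exceeds a threshold with the stated probability.

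\emph{Population slope.} By Proposition~\ref{prop:ident},
\[
s_{\rm pop}=\frac{\Cov(\bar\ell,U)}{\Var(\bar\ell)}=s+\frac{\Cov(\bar\ell,\bar r)}{\Var(\bar\ell)}.
\]
Since $|\Cov(\bar\ell,\bar r)|\le B\varepsilon$ and $\Var(\bar\ell)\ge s_0^2$, we get $s_{\rm pop}\ge s-(B/s_0^2)\varepsilon$. This suggests $C_0\approx 2B/s_0^2$, leaving a margin for the radius.

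\emph{Concentration of $\shat$.} $\shat$ is the ratio of the empirical covariance to the empirical variance. The relevant quantities are bounded: $|\bar\ell|\le B$ and $U\in\{0,1\}$. Apply Hoeffding separately to the four empirical moments $\frac1N\sum\bar\ell_i$, $\frac1N\sum U_i$, $\frac1N\sum\bar\ell_iU_i$ and $\frac1N\sum\bar\ell_i^2$. Each is two-sided, and the two-sided bounds over four moments give the factor $8$.

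\emph{Main obstacle.} The hardest step is the ratio perturbation. On the good event, I would keep the empirical variance at least $s_0^2/2$. Then a linearization bounds $|\shat-s_{\rm pop}|$ by a $(B,s_0)$-dependent constant times the maximal moment deviation. Choosing that deviation proportional to $(s-C_0\varepsilon)$ yields failure probability at most $8\exp(-N(s-C_0\varepsilon)^2/C_s^2)$.

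Keeping the constants depending only on $(B,s_0)$ requires care on two points. The centering terms must be absorbed into the moment deviations. The variance event must also be nested, by requiring the deviation to be at most a constant times $s_0^2$; this is automatic after enlarging $C_s$, since $s\le1/(\text{spread})$ is bounded.
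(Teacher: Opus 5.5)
Your deterministic part is correct and is the paper's proof. Going through Theorem~\ref{thm:partial}(i) only repackages the paper's use of Lemma~\ref{lem:app-tempered}(iii) for $q^\star$, and the mean-value step is identical.

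One small repair: $\Delta_{\CE}\ge0$ is part of what must be shown, not a hypothesis you can borrow from the statement. Derive it from Lemma~\ref{lem:app-tempered}(iii) applied to $\hat q_{\mathrm{CTD}}$, which satisfies $\bar H(\hat q_{\mathrm{CTD}})\ge\beta$, before using it to check $\gamma\ge\CE(q_{a_\beta})$.

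Your certificate computation also follows the paper: Hoeffding on the four moments, the variance floor $s_0^2/2$, the linearized ratio, and the deviation set to half the margin. Bounding $|\Cov(\bar\ell,\bar r)|\le B\varepsilon$ through the range of $\bar\ell$ gives $C_0=B/s_0^2$ instead of the paper's Cauchy--Schwarz value $C_0=1/s_0$. That is allowed, since only $(B,s_0)$-dependence is claimed.

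The genuine gap is the event you set out to control. You read certification as $\mathrm{LCB}_\alpha(s)=\shat-r_\alpha(N)>0$. But a valid radius depends only on $(\alpha,N)$, e.g.\ $\tfrac{C_s}{2}\sqrt{\log(8/\alpha)/N}$ from inverting the bound. It cannot be ``chosen proportional to $s-C_0\varepsilon$'', because $s$ is unknown. Nor can the extra slack you build into $C_0$ absorb it: that slack scales with $\varepsilon$ and vanishes at $\varepsilon=0$.

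Under your reading the claimed $\alpha$-free bound is actually false. By Cauchy--Schwarz, $|\shat|\le\sqrt{\widehat\Var(U)/\widehat\Var(\bar\ell)}\le 1/(\sqrt2\,s_0)$ whenever $\widehat\Var(\bar\ell)\ge s_0^2/2$. So once $r_\alpha(N)>1/(\sqrt2\,s_0)$, the LCB is negative on the very Hoeffding event your argument relies on. Yet the lower bound $1-8\exp(-N(s-C_0\varepsilon)^2/C_s^2)$ does not depend on $\alpha$.

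What your argument, and the paper's, actually proves is $\Pr(\shat\le0)\le 8\exp(-N(s-C_0\varepsilon)^2/C_s^2)$. That is the paper's meaning of ``certifies'': a correct sign of $\shat$, consistent with the main-text remark that the regression error enters only through the probability of a wrong sign. Target that event and your proof is complete. An LCB version would need an added condition such as $r_\alpha(N)\le\tfrac12(s-C_0\varepsilon)$.
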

The proof mirrors Theorem~\ref{thm:partial}(i): $\hat q_{\mathrm{CTD}}$ meets the floor with equality, and $\CE(q^\star)\ge\CE(q_{a_\beta})$. Relative to a tilted-exponent analysis, three things disappear: no dual price, no horizon factor, and no $O(1/\sqrt{N})$ estimation term---the regression certifies but does not tune, so its error enters only through the exponentially small probability of a wrong sign. The structural term $s\Delta_{\CE}$ is the cross-entropy price of dropping $\Psi^{(a)}$ at fixed entropy; it vanishes under prefix independence ($\delta_t\equiv0$ in Corollary~\ref{cor:psivar}), where $q^{\mathrm{temp}}_a=q_a$ for every $a$. The prediction is that in the calibrated regime, the right temperature is the entropy-matched one, and CTD should track tuned temperature sampling rather than beat it. Failure of the per-token feasibility check in Phase~2 means only that the CTD deployment approximation is not certified; exact infeasibility is the sequence-level condition of Theorem~\ref{thm:collapse}.

\begin{remark}[CTD versus explicit online estimation]\label{sec:which}
\textsc{CTD} certifies whether verifier outcomes are explained by the sequence log-probability; when they are, the potential is redundant, and temperature at the entropy-matched exponent is near-optimal at no generation-time cost. \textsc{J-EOS} assumes no such reduction: it estimates the continuation potential from terminal feedback while tracking both prices, at the cost of rollouts. Since our \textsc{J-EOS} analysis covers price tracking for a fixed potential estimate rather than the coupled procedure, we treat \textsc{J-EOS} as an analytical construction, and \textsc{CTD} plus a verifier-aware fallback. %as the practical design pattern. 
Section~\ref{sec:experiments} instantiates this fallback with practical algorithm \textsc{TMAD-Gated}.
\end{remark}

\FloatBarrier
\section{Experimental Results}\label{sec:experiments}

We test the two practical decoders derived from theory on code generation.
CTD tests the calibrated regime, where temperature alone should suffice
(Theorem~\ref{thm:collapse}), while \textsc{TMAD-Gated} (Algorithm~\ref{alg:tmad-gated})
tests whether an explicit continuation-success proxy for the potential $V_t$ adds
value beyond temperature (Theorem~\ref{thm:tilt}). We compare both against fixed-temperature sampling and AdapT, an adaptive
temperature baseline reviewed in Appendix~\ref{app:related}.

\textsc{TMAD-Gated} (Algorithm~\ref{alg:tmad-gated}, Appendix~\ref{app:exp-details})
keeps the tilted-temperature family $(T,\eta)$ of Theorem~\ref{thm:tilt} stochastic
at intervention points. The gate combines model confidence with structurally
critical code positions; active steps use a learned continuation-success proxy and
short lookahead to select a local regime by an estimated-success/plausibility score.

\begin{table}[!htbp]
\centering
\caption{Pass@$k$ (\%) on HumanEval and MBPP, reported as
mean $\pm$ standard deviation over independent runs.
Within each run, methods use the same held-out subset.
\textbf{Bold} marks the best mean in each column.}
\label{tab:main-results}

\footnotesize
\setlength{\tabcolsep}{2pt}
\renewcommand{\arraystretch}{0.76}

\begin{tabular}{llccc|ccc}
\toprule
& &
\multicolumn{3}{c|}{HumanEval}
& \multicolumn{3}{c}{MBPP} \\
\cmidrule(lr){3-5}\cmidrule(lr){6-8}
Model & Method
& pass@5 & pass@10 & pass@15
& pass@5 & pass@10 & pass@15 \\
\midrule

\multirow{4}{*}{Qwen2.5-1.5B}
& Fixed-$T$
& $46.2 \pm 5.8$
& $59.7 \pm 7.8$
& $64.8 \pm 7.7$
& $36.7 \pm 1.1$
& $52.0 \pm 2.2$
& $61.0 \pm 2.2$ \\

& CTD
& $45.1 \pm 5.5$
& $59.7 \pm 7.0$
& $68.0 \pm 7.7$
& $38.2 \pm 1.4$
& $56.1 \pm 7.2$
& $62.7 \pm 12.2$ \\

& AdapT
& $46.2 \pm 2.3$
& $59.7 \pm 3.5$
& $66.4 \pm 6.6$
& $41.5 \pm 3.2$
& $58.3 \pm 1.1$
& $67.6 \pm 1.1$ \\

& TMAD-Gated
& $\mathbf{47.6 \pm 4.5}$
& $\mathbf{65.3 \pm 3.3}$
& $\mathbf{72.2 \pm 3.4}$
& $\mathbf{54.5 \pm 0.6}$
& $\mathbf{69.7 \pm 2.3}$
& $\mathbf{76.6 \pm 2.2}$ \\

\midrule

\multirow{4}{*}{Qwen3-4B}
& Fixed-$T$
& $54.4 \pm 4.5$
& $63.5 \pm 7.6$
& $67.8 \pm 6.8$
& $61.6 \pm 0.6$
& $71.2 \pm 0.1$
& $\mathbf{75.8 \pm 1.1}$ \\

& CTD
& $50.6 \pm 7.6$
& $62.6 \pm 10.5$
& $68.4 \pm 4.6$
& $59.7 \pm 2.5$
& $68.3 \pm 1.5$
& $72.7 \pm 1.1$ \\

& AdapT
& $52.6 \pm 5.0$
& $62.5 \pm 4.0$
& $69.1 \pm 5.6$
& $60.9 \pm 0.2$
& $70.5 \pm 0.3$
& $74.2 \pm 1.1$ \\

& TMAD-Gated
& $\mathbf{55.4 \pm 7.5}$
& $\mathbf{66.5 \pm 10.0}$
& $\mathbf{71.9 \pm 9.9}$
& $\mathbf{66.3 \pm 1.9}$
& $\mathbf{71.8 \pm 1.2}$
& $74.2 \pm 1.1$ \\

\midrule

\multirow{4}{*}{CodeLlama-7B}
& Fixed-$T$
& $23.3 \pm 4.2$ & $34.2 \pm 3.1$ & $40.2\pm5.8$
& $26.0 \pm 2.1$ & $41.3\pm 1.2$ & $49.0 \pm 1.4$ \\

& CTD
& $22.5\pm 3.1$ & $34.2\pm 4.1$ & $41.3\pm 4.2$
& $27.3\pm0.5$ & $40.2\pm0.8$ & $47.6\pm1.0$ \\

& AdapT
& $24.1\pm1.6$ & $35.2\pm2.2$ & $\mathbf{43.2\pm1.8}$
& $28.0\pm 1.3$ & $42.3\pm1.0$ & $50.5\pm 1.2$ \\

& TMAD-Gated
& $\mathbf{25.2\pm4.8}$ & $\mathbf{36.0\pm3.9}$ & $42.3\pm4.0$
& $\mathbf{30.3\pm1.7}$ & $\mathbf{44.0\pm1.2}$ & $\mathbf{52.1\pm1.4}$ \\

\bottomrule
\end{tabular}
\end{table}

\vspace{-0.1in}
\paragraph{TMAD-Gated.}
A lightweight continuation-success head is trained on fixed-temperature rollouts.
At active positions, short lookahead refines its score for the highest-probability
candidates, giving a practical proxy $\hat S_t$ for the Bellman potential $V_t$
(see (\ref{eq-S}) in Appendix~\ref{app:exp-details}). We sample from
$
q_{T,\eta}(w\mid w^{t-1})
\propto
\bigl(p_t(w\mid w^{t-1})e^{\eta\hat S_t(w\mid w^{t-1})}\bigr)^{1/T},
$
while inactive positions use conservative low-temperature sampling.

\vspace{-0.1in}
\paragraph{Evaluation protocol and results.}
We evaluate HumanEval and MBPP with Qwen2.5-Coder-1.5B, Qwen3-4B-Base, and
CodeLlama-7B, reporting pass@$k$ for $k\in\{5,10,15\}$. Fixed-$T$'s temperature
is chosen via a grid search over $T\in[0.2,0.8]$ maximizing pass@5, reused at
pass@10 and pass@15. For each benchmark, we randomly partition the problems into
a training pool (for calibration and head training) and a disjoint test set of 64
problems, on which all four methods are evaluated with the same pipeline (details
in Appendix~\ref{app:exp-details}; Figure~\ref{fig:passk-main-curves} visualizes
the Qwen results). \textsc{TMAD-Gated} is best in 16 of 18 settings, with its
clearest gains on MBPP. As predicted by Theorem~\ref{thm:ctd}, \textsc{CTD}
generally tracks tuned temperature sampling rather than consistently beating it,
indicating that explicit continuation information provides the main additional gains.

\textbf{Concluding Remarks and Discussion.}
We showed that pass@$k$ maximization under an entropy floor and a cross-entropy
ceiling is solved by the tilted temperature $q^\star \propto (p\,e^{\eta V})^{1/T}$,
with ordinary temperature sampling as the calibrated special case. We present both online and offline solutions to this problem and validate them on three models and two code benchmarks.
The remaining open
problem relates to the estimation of the verifier potential $V_t$ at scale, since better potential
estimators are the most direct route to closing the gap to the optimum.

\FloatBarrier

\bibliographystyle{plainnat}
\bibliography{ref-ctd}

\renewcommand{\Pr}{\mathbb{P}}
\newcommand{\sgn}{\operatorname{sgn}}
\renewcommand{\qedsymbol}{$\blacksquare$}
\newcommand{\pass}{\mathrm{pass@}k}
\newcommand{\half}{\tfrac12}

\appendix
\section*{Appendix}
\addcontentsline{toc}{section}{Appendix}

\section{Expanded Related Work}
\label{app:related}

A recent line of work treats the number of samples drawn at decoding as a
resource to be scaled, in analogy with parameters and pretraining tokens.
\citet{brown2024monkeys} show that coverage---the fraction of problems solved
by \emph{any} of $k$ samples, i.e.\ $\mathrm{pass@}k$---grows log-linearly over
several orders of magnitude in $k$, and \citet{wu2024inference} chart the
compute-optimal frontier between model size and sample count.
\citet{snell2024scaling} make the allocation adaptive, spending more
inference compute on harder prompts. These studies establish
$\mathrm{pass@}k$ as the operational objective but treat the sampling
distribution as fixed (typically the base model at a hand-tuned temperature);
our contribution is to ask which decoding distribution \emph{maximizes}
$\mathrm{pass@}k$ under an explicit compute budget, and to show that the
answer is the tilted-temperature as in ~\eqref{eq:opt1}.

\textit{Verifiers and selection.}
A complementary thread improves the \emph{selection} step rather than the
proposal distribution. Outcome-level verifiers trained on solution
correctness \citep{cobbe2021gsm8k}, process reward models that score
intermediate steps \citep{lightman2024verify}, and best-of-$n$ reranking
against a learned reward \citep{nakano2021webgpt} all re-weight a fixed pool
of samples. Our verifier signal plays a different role: rather than being used only to
select among completed samples, it enters the proposal distribution through
the continuation potential. In \textsc{J-EOS}, this potential is estimated
from terminal verifier feedback, while under the calibration conditions of
Section~\ref{sec:calib} its effect can be absorbed into the tempered
sequence law. Self-consistency
\citep{wang2023selfconsistency} can be read as a verifier-free aggregator and
is recovered as a special case when the verifier is uninformative.

\textit{Temperature and truncation decoding.}
Heuristic decoding modifications---nucleus sampling
\citep{holtzman2020degeneration} and truncation-as-desmoothing
\citep{hewitt2022truncation}---reshape the next-token distribution to trade
diversity against quality. These transformations are justified empirically; we instead derive the
optimal reshaping from a constrained $\passk$ objective. Under perfect
calibration the optimum reduces to a pure power of the base sequence law,
and ordinary per-token temperature sampling closely approximates this
optimum when the remaining-horizon correction has limited variation across
candidate tokens at each prefix.

\textit{Tilted distributions and KL-regularized control.}
The optimum $q^\star\propto\bigl(p\,e^{\eta V}\bigr)^{1/T}$ is an
exponentially tilted (Gibbs) distribution, the same form that arises from the
maximum-entropy principle \citep{jaynes1957information,cover2006elements} and
from KL-regularized control. Casting decoding as inference in a soft-optimal
MDP \citep{ziebart2008maxent,levine2018rlinference} and viewing the KL penalty
in RLHF as Bayesian conditioning \citep{korbak2022rlbayesian} both produce
tilted policies; \citet{rafailov2023dpo} exploit the closed form of the tilt to
bypass explicit reward modeling. We share the tilted-distribution structure but
work at inference time under a cross-entropy (compute) budget rather than a
training-time KL penalty, which is what pins the temperature $T=1/\lambda^\star$
to the budget.

\textit{Calibration.}
Our reduction from verifier estimation to temperature rests on calibration of
the base model's log-probabilities. Temperature scaling and Platt scaling
\citep{platt1999probabilistic,guo2017calibration} are the standard
post-hoc calibrators we adopt for the affine model of Section~\ref{sec:calib}; calibration
of pretrained transformers \citep{desai2020calibration} and the finding that
language models are partly aware of their own correctness
\citep{kadavath2022know} motivate the $(s,\varepsilon)$-calibration assumption.
Theorem~\ref{thm:partial} quantifies how the residual
miscalibration $\varepsilon$ degrades $\passk$, and
Theorem~\ref{thm:ctd} the finite-sample cost of estimating
the calibration slope $s$.

\textbf{Literature on practical algorithms.} A separate line of work adapts the sampling rule online rather than fixing it in
advance. Mirostat \citep{basu2021mirostatneuraltextdecoding} targets a fixed
perplexity level, while locally typical sampling \citep{meister-etal-2023-locally}
and $\eta$-sampling \citep{hewitt2022truncation} truncate using token-level
information content or entropy. More recent methods adapt using confidence or
structure: AdapT \citep{zhu2024hot} raises sampling temperature at line-initial
tokens, while AdaDec \citep{he2026adadecuncertaintyguidedlookaheaddecoding}
combines an entropy threshold with lookahead reranking. Its final token selection
is deterministic, so repeated decoding adds no sampling diversity and pass@$k$
collapses to pass@$1$. ARGS \citep{ICLR2024_4d134431} similarly performs
reward-guided greedy selection, and Uncert-CoT
\citep{zhu2025uncertaintyguidedchainofthoughtcodegeneration} triggers multi-path
reasoning above an uncertainty threshold.

\section{Proof of Lemma 1 (Bellman Consistency)}
\label{app:lem1}

Recall from Section 2 that, for prices $\lambda^\star\ge0$ and $\mu^\star>0$, a step
$t\in\{1,\dots,n\}$, a prefix $w^{t-1}$ and a token $w$ (so $w_t=w$),
\begin{equation}\label{eq:B-Vdef}
  V_t(w\mid w^{t-1})\;=\;\frac{1}{\mu^\star}\log\!\!
  \sum_{w_{t+1:n}\in\W^{\,n-t}}\ \prod_{s=t+1}^{n}p_s(w_s\mid w^{s-1})^{\lambda^\star}\,
  e^{\mu^\star U(w^n)},\qquad w^n=(w^{t-1},w,w_{t+1:n}),
\end{equation}
with the convention that at $t=n$ the continuation set $\W^0$ is the single empty tuple
and the product is empty. It is convenient to name the partition sum inside the
logarithm: for a length-$t$ prefix $w^t$ $(1\le t\le n)$ set
\begin{equation}\label{eq:B-Phi}
  \Phi(w^t)\;:=\;\sum_{w_{t+1:n}\in\W^{\,n-t}}\ \prod_{s=t+1}^{n}
  p_s(w_s\mid w^{s-1})^{\lambda^\star}\, e^{\mu^\star U(w^n)},
  \qquad \Phi(w^n):=e^{\mu^\star U(w^n)},
\end{equation}
so that \eqref{eq:B-Vdef} reads $\Phi(w^t)=\exp\!\big(\mu^\star V_t(w_t\mid w^{t-1})\big)$.

\medskip
\noindent\textbf{Lemma 1 (Bellman consistency, restated).}\ \emph{For any
$\lambda^\star\ge0$, $\mu^\star>0$, step $t\in\{1,\dots,n-1\}$, prefix $w^{t-1}$ and token
$w$,}
\[
  V_t(w\mid w^{t-1})=\frac{1}{\mu^\star}\log\sum_{w'\in\W}
  p_{t+1}(w'\mid w^{t-1}w)^{\lambda^\star}\exp\!\big(\mu^\star V_{t+1}(w'\mid w^{t-1}w)\big),
\]
\emph{with terminal condition $V_n(w\mid w^{n-1})=U(w^{n-1},w)$; equivalently
$V_t=\tfrac{T}{\eta}\log Z_{t+1}(w^{t-1}w)$ with}
\[
  Z_{t+1}(w^{t-1}w)=\sum_{w'\in\W}
  \Big(p_{t+1}(w'\mid w^{t-1}w)\,e^{\eta V_{t+1}(w'\mid w^{t-1}w)}\Big)^{1/T}.
\]
\emph{The recursion involves only $(p,U,\lambda^\star,\mu^\star)$, so $V$ exists, is
unique, and is obtained by one backward sweep over prefix states.}

\begin{proof}
\textbf{Well-posedness.} Fix $t$ and $w^t$. Since $\rho\le p_s(\cdot\mid w^{s-1})\le1$ and
$U\in[0,1]$, every summand of \eqref{eq:B-Phi} lies in
$\big[\rho^{\lambda^\star(n-t)},\,e^{\mu^\star}\big]$, and the index set $\W^{\,n-t}$ is
finite and nonempty. Hence
\begin{equation}\label{eq:B-bounds}
  0\;<\;\rho^{\lambda^\star(n-t)}\;\le\;\Phi(w^t)\;\le\;|\W|^{\,n-t}e^{\mu^\star}\;<\;\infty,
\end{equation}
so $V_t=(\mu^{\star})^{-1}\log\Phi(w^t)$ is finite for every $(t,w^{t-1},w)$, with
\[
  -\frac{\lambda^\star(n-t)}{\mu^\star}\log\frac1\rho\;\le\;V_t(w\mid w^{t-1})\;\le\;
  1+\frac{n-t}{\mu^\star}\log|\W| .
\]
(The two-sided bound \eqref{eq:B-bounds} is what makes the log-space form legitimate.)

\smallskip
\textbf{Recursion.} Fix $t\in\{1,\dots,n-1\}$ and a length-$t$ prefix $w^t=(w^{t-1},w)$.
Separate the first continuation token $w_{t+1}=w'$ and factor out
$p_{t+1}(w'\mid w^t)^{\lambda^\star}$, which is constant in $w_{t+2:n}$:
\begin{align*}
  \Phi(w^t)&=\sum_{w'\in\W}p_{t+1}(w'\mid w^t)^{\lambda^\star}
  \underbrace{\sum_{w_{t+2:n}}\ \prod_{s=t+2}^{n}p_s(w_s\mid w^{s-1})^{\lambda^\star}
  e^{\mu^\star U(w^n)}}_{=\ \Phi(w^t w')}\\
  &\;=\;\sum_{w'\in\W}p_{t+1}(w'\mid w^t)^{\lambda^\star}\,\Phi(w^tw'),
\end{align*}
where the brace equals $\Phi(w^tw')$ because for $s\ge t+2$ the factors
$p_s(w_s\mid w^{s-1})$ and the verifier $U(w^n)$ depend on $w_{t+1}$ only through the
now-fixed conditioning prefix $w^tw'$. Substituting
$\Phi(w^tw')=e^{\mu^\star V_{t+1}(w'\mid w^t)}$ and taking $\mu^{\star-1}\log$ yields the
stated recursion. At $t=n$ the empty-product convention gives
$\Phi(w^n)=e^{\mu^\star U(w^n)}$, hence $V_n(w\mid w^{n-1})=U(w^{n-1},w)$.

\smallskip
\textbf{Equivalent $(T,\eta)$ form.} With $\lambda^\star=1/T$ and $\mu^\star=\eta/T$
(so $\eta>0$ whenever $\mu^\star>0$),
\[
  p_{t+1}(w'\mid w^{t-1}w)^{\lambda^\star}e^{\mu^\star V_{t+1}(w'\mid w^{t-1}w)}
  =\Big(p_{t+1}(w'\mid w^{t-1}w)\,e^{\eta V_{t+1}(w'\mid w^{t-1}w)}\Big)^{1/T},
\]
so the sum in the recursion is exactly $Z_{t+1}(w^{t-1}w)$ --- the per-node normalizer of
the tilted-temperature step of Theorem 1 --- and $\mu^{\star-1}=T/\eta$ gives
$V_t=\tfrac{T}{\eta}\log Z_{t+1}(w^{t-1}w)$.

\smallskip
\textbf{Existence, uniqueness, backward sweep.} The closed form \eqref{eq:B-Vdef}
exhibits $V$ explicitly, so it exists. The recursion expresses $V_t$ through $V_{t+1}$
alone and the terminal condition fixes $V_n=U$; reading the layers
$t=n,n-1,\dots,1$ determines every $V_t$ exactly once. Any $V'$ obeying the recursion and
the terminal condition equals $V$ by downward induction on $t$ (they agree at $t=n$; if
they agree at $t+1$, the recursion forces agreement at $t$), giving uniqueness. As the
right-hand side involves only $(p,U,\lambda^\star,\mu^\star)$ and no policy $q$, $V$ is a
deterministic function of the instance and the two prices, computed by a single backward
pass of $n$ layers with no self-consistency condition to solve.
\end{proof}

\paragraph{$V_t$ is a free energy, not an ordinary conditional average.}
The potential $V_t$ is a log-partition quantity over tilted continuations rather than
an ordinary conditional expectation of the terminal verifier. This distinction is
important because the optimal decoder of Theorem 1 tilts directly on $V_t$. Four
facts summarize its structure.

\emph{(i) At the terminal step $V_t$ reduces to the verifier.}
When $t=n$ the continuation is empty and
\[
  V_n(w\mid w^{n-1})=U(w^{n-1},w).
\]

\emph{(ii) $V_t$ is a log-partition function.} Fix a committed pair
$w^t=(w^{t-1},w)$, let
$S_t(w^t):=\sum_{w_{t+1:n}}\prod_{s>t}p_s(w_s\mid w^{s-1})^{\lambda^\star}$
be the total tempered mass of its continuations, and let
$\nu_t(\cdot\mid w^t)$ be the probability measure on $\W^{n-t}$ proportional to
$\prod_{s>t}p_s(w_s\mid w^{s-1})^{\lambda^\star}$. Then \eqref{eq:V-def} reads
\begin{align}
\label{eq:V-logpart}
  V_t(w\mid w^{t-1})
  &\;=\;\frac{1}{\mu^\star}\log S_t(w^t)\;+\;\frac{1}{\mu^\star}\log
  \E_{\nu_t}\!\big[e^{\mu^\star U(w^n)}\big]\\
  &\;=\;\frac{1}{\mu^\star}\log S_t(w^t)+\E_{\nu_t}[U]
  +\frac{\mu^\star}{2}\Var_{\nu_t}(U)+O(\mu^{\star2}). 
  \nonumber
\end{align}
When $\lambda^\star=1$ (equivalently $T=1$), $\nu_t$ is the base-model
continuation law and $S_t\equiv1$, so as $\mu^\star\to0$,
\[
  V_t(w\mid w^{t-1})
  \longrightarrow
  \E_{w_{t+1:n}\sim p}\!\big[U(w^n)\mid w^t\big].
\]
At any other temperature $S_t$ need not equal one and the first term in
\eqref{eq:V-logpart} remains. This is harmless for the decoder: only the product
$\mu^\star V_t$ enters the optimum, and by \eqref{eq:V-logpart} it converges to
$\log S_t$ as $\mu^\star\downarrow0$.

\emph{(iii) $V_t$ is not a probability.} From \eqref{eq:V-def},
$p_s^{\lambda^\star}\le1$, $p_s\ge\rho$, and $U\in[0,1]$ give the explicit range
\[
  -\frac{\lambda^\star (n-t)}{\mu^\star}\,\log\frac1\rho
  \;\le\; V_t(w\mid w^{t-1})\;\le\;
  1+\frac{(n-t)}{\mu^\star}\,\log|\W| ,
\]
so $V_t$ need not lie in $[0,1]$.

\begin{remark}[Degenerate prices, and what $\eta=0$ actually gives]\label{rem:B-degenerate}
The lemma assumes $\mu^\star>0$, equivalently $\eta>0$. At $\mu^\star=0$ the definition
\eqref{eq:B-Vdef} is a $0/0$ and $V_t$ is undefined; what survives is the product
$\mu^\star V_t$, which is the only way $V_t$ enters the optimum. Taking the limit $\mu^\star\downarrow 0$ in
\eqref{eq:B-Phi} gives $\mu^\star V_t\to\log S_t(w^t)$, with
$S_t(w^t)=\sum_{w_{t+1:n}}\prod_{s>t}p_s(w_s\mid w^{s-1})^{\lambda^\star}$ the tempered
continuation mass of Section 2, so as $\mu^\star\downarrow0$ the tilted step of Theorem 1
becomes
\[
  q^\star_t(w\mid w^{t-1})\ \propto\ p_t(w\mid w^{t-1})^{1/T}\;S_t(w^{t-1}w).
\]
This is temperature sampling at the \emph{sequence} level --- telescoping (Corollary~\ref{cor:seqtilt})
gives $q^\star(w^n)\propto p(w^n)^{1/T}$ --- but it is \emph{not} per-token temperature
sampling: the extra factor $S_t(w^{t-1}w)$ steers toward branches whose continuation
carries cheap entropy, which is exactly the effect Corollary~\ref{cor:freeenergy} isolates. The two coincide
iff $S_t(w^{t-1}w)$ does not depend on $w$, i.e.\ at $n=1$, at $T=1$, or when the base
conditionals are prefix-independent (the i.i.d.-token regime of \cite{sima2026}). At $n\ge2$ with prefix-dependent conditionals
the factor is not constant.
\end{remark}

\section{Full proof of Theorem~\ref{thm:tilt}}\label{app:proofTh1}
 Here we give it in full, in
particular the passage from the Lagrangian to the per-step cost-to-go the recursion (\ref{eq:app-J}) below, which is the principle of optimality specialised to the
entropy-regularised program \eqref{eq:PE}, and the strong-duality step that
certifies the resulting stationary point as the constrained optimum.

Throughout, a policy is $q=(q_1,\dots,q_n)$ with
$q(w^n)=\prod_{t=1}^n q_t(w_t\mid w^{t-1})$, and
$q(w^{t-1}):=\prod_{s<t} q_s(w_s\mid w^{s-1})$ is the occupancy of the prefix
$w^{t-1}$ induced by $q$. We write $\langle q_t,c\rangle:=\sum_w q_t(w)c(w)$.

\subsection{Step 0: Lagrangian relaxation}

Since $\passk(q)=1-(1-u_q)^k$ is strictly increasing in $u_q:=\E_{w^n\sim q}[U(w^n)]$,
program \eqref{eq:PE} is equivalent to maximising $u_q$ under the same two
constraints. Attach multipliers $\mu_E\ge0$ to the entropy floor
$\frac1n\sum_t\E_{w^{t-1}\sim q}[H(q_t)]\ge\beta$ and $\lambda_0\ge0$ to the
plausibility ceiling $-\frac1n\sum_t\E_{w^{t-1}\sim q}[\sum_w q_t\log p_t]\le\gamma$.
The Lagrangian is
\[
\mathcal L_0(q)
= u_q
+\mu_E\Big(\tfrac1n\textstyle\sum_t\E[H(q_t)]-\beta\Big)
-\lambda_0\Big(-\tfrac1n\textstyle\sum_t\E[\sum_w q_t\log p_t]-\gamma\Big).
\]
Dropping the additive constant $-\mu_E\beta+\lambda_0\gamma$ (which does not affect
the maximiser in $q$) and renaming $\lambda:=\lambda_0/n$,
\begin{align}\label{eq:app-lag}
\mathcal L(q)=&E_{w^n\sim q}[U(w^n)]
+\sum_{t=1}^n\E_{w^{t-1}\sim q}\big[\rho_t(w^{t-1},q_t)\big],\\
\rho_t(w^{t-1},q_t):=&\tfrac{\mu_E}{n}H\big(q_t(\cdot\mid w^{t-1})\big)
+\lambda\!\sum_w q_t(w\mid w^{t-1})\log p_t(w\mid w^{t-1}),
\end{align}
which is the Lagrangian displayed in the main text. It is a finite-horizon control
problem: terminal reward $U(w^n)$ at the leaves and per-step reward $\rho_t$, a
concave functional of the action distribution $q_t$. The $1/n$ rescaling is
``harmless'' in the precise sense that it only relabels the dual variable; the
optimal $\lambda$ recovered below is $1/n$ times the multiplier of the ceiling as
written in \eqref{eq:PE}.
It is illustrated in Figure~\ref{fig:proof-tree}.
\makeatletter
\ifdefined\PREPRINTMAIN\else
  \ifx\documentclass\@twoclasseserror \def\PREPRINTMAIN{}\fi % a class is loaded => \input'd
\fi
\makeatother
\ifdefined\PREPRINTMAIN
\else
  \documentclass[border=8pt]{standalone}
  \usepackage{amsmath,amssymb}
  \usepackage{caption}
  \usepackage{tikz}
  \usetikzlibrary{arrows.meta,backgrounds,fit,calc,positioning,%
                  decorations.pathreplacing,shapes.geometric}
  \begin{document}
\fi
\usetikzlibrary{arrows.meta,backgrounds,calc,decorations.pathreplacing,positioning}
\newcommand{\ProofTreeCaption}{%
\textbf{The decision tree behind the proof of Theorem~1.}
Program~(PE) is a finite–horizon control problem on the token tree.
A \emph{node} is a committed prefix (a state) $w^{t-1}$; an \emph{edge}
is a token $w\!\in\!W$, so every node has $|W|$ children (drawn as $2$
for legibility) and the tree has depth $n$. A \emph{leaf} is a complete
sequence $w^{n}$ carrying the terminal reward $J_{n+1}(w^{n})=U(w^{n})$;
every step contributes the per-step reward
$\rho_t=\tfrac{\mu_E}{n}H(q_t)+\lambda\sum_w q_t\log p_t$.
The proof is \emph{two decoupled passes}. \emph{Backward} (right, Lemma~1):
the cost-to-go $J_t(w^{t-1})$—equivalently the potential
$V_t(w\mid w^{t-1}):=J_{t+1}(w^{t-1}w)$ that lives on the \emph{child}—is
formed from the children by a single log-sum-exp,
$J_t=\tfrac1{\mu^\star}\log\sum_w p_t^{\lambda^\star}e^{\mu^\star J_{t+1}}$;
this uses only $(p,U,\lambda^\star,\mu^\star)$, no policy and no fixed point.
\emph{Forward} (left): sample $w_t\sim q_t^\star(\cdot\mid w^{t-1})$.
The subtrees hanging off distinct children (shaded blue/green) are
\emph{disjoint}, so the tail conditionals decouple across them—this is
exactly what lets the global $\max$ over $q$ split into an independent
per-node $\max$ (principle of optimality, Lemma~3). At each node that
per-node problem is one Gibbs program (Lemma~2), solved by the
tilted-temperature rule
$q_t^\star\!\propto\!\big(p_t\,e^{\eta V_t}\big)^{1/T}$ (dashed box).
Because $V_t$ is a free-energy continuation value, the backward ($J$)
and forward ($u$) sweeps stay decoupled and exact (Corollary~3).}
\ifdefined\PREPRINTMAIN
  \begin{figure}[!htbp]\centering
  \resizebox{0.94\textwidth}{!}{%
\else
  \begin{minipage}{20cm}\centering
\fi
\begin{tikzpicture}[
  >=Latex,
  every node/.style={inner sep=1.2pt},
  state/.style={circle,draw=black!70,fill=white,minimum size=6mm,line width=.5pt},
  leaf/.style ={rectangle,rounded corners=.6pt,draw=black!70,fill=black!6,minimum size=4.2mm},
  hl/.style   ={draw=red!75!black,line width=1.1pt,fill=red!7},
  ed/.style   ={draw=black!55,line width=.55pt},
  hled/.style ={draw=red!75!black,line width=1.0pt},
  ann/.style  ={font=\footnotesize,text=black!80},
]
\def\yr{6.0}\def\ya{4.0}\def\yb{2.0}\def\yl{0.0}
\node[state] (R) at (0,\yr) {$\varnothing$};
\node[state] (A) at (-3.8,\ya) {};
\node[state,hl] (B) at (3.8,\ya) {};
\node[state] (AA) at (-5.7,\yb) {};
\node[state] (AB) at (-1.9,\yb) {};
\node[state,hl] (BA) at (1.9,\yb) {};
\node[state,hl] (BB) at (5.7,\yb) {};
\foreach \x/\n in {-6.65/L1,-4.75/L2,-2.85/L3,-0.95/L4,0.95/L5,2.85/L6,4.75/L7,6.65/L8}{
  \node[leaf] (\n) at (\x,\yl) {\scriptsize$U$};}
\begin{scope}[on background layer]
  \fill[blue!9]  ($(BA)+(0,.28)$) -- ($(L5)+(-.42,-.28)$) -- ($(L6)+(.42,-.28)$) -- cycle;
  \fill[green!12]($(BB)+(0,.28)$) -- ($(L7)+(-.42,-.28)$) -- ($(L8)+(.42,-.28)$) -- cycle;
\end{scope}
\draw[ed]  (R)--(A);   \draw[hled] (R)--(B);
\draw[ed]  (A)--(AA);  \draw[ed]  (A)--(AB);
\draw[hled](B)--(BA);  \draw[hled](B)--(BB);
\draw[ed] (AA)--(L1); \draw[ed] (AA)--(L2);
\draw[ed] (AB)--(L3); \draw[ed] (AB)--(L4);
\draw[ed] (BA)--(L5); \draw[ed] (BA)--(L6);
\draw[ed] (BB)--(L7); \draw[ed] (BB)--(L8);
\node[ann,fill=white,inner sep=1pt] at ($(R)!.5!(A)+(0.15,0.12)$) {$w\!\in\!W$};
\foreach \yy/\lab in {\yr/{$w^{0}=\varnothing$},\ya/{$w^{1}$},\yb/{$w^{2}$},\yl/{$w^{3}=w^{n}$}}{
  \node[ann,anchor=east,text=black!60] at (-7.7,\yy) {\lab};}
\node[ann,anchor=east,text=black!45,rotate=90] at (-8.55,3.0)
  {\scriptsize tokens emitted $\downarrow$};
\node[ann,anchor=north west,text=red!70!black,fill=white,inner sep=1.5pt]
  (VLAB) at (2.35,1.30) {$V_t(w\mid w^{t-1})=J_{t+1}(w^{t-1}w)$};
\draw[->,red!70!black,line width=.5pt] (VLAB.north west) to[bend left=12] (BA.south);
\draw[decorate,decoration={brace,amplitude=4pt,mirror},black!45]
  ($(L1)+(-.5,-.5)$) -- ($(L8)+(.5,-.5)$);
\node[ann,text=black!60] at (0,\yl-1.05)
  {leaves $w^{n}$: terminal reward $J_{n+1}(w^{n})=U(w^{n})$};
\node[ann,align=center,text=black!70] at (3.8,-2.1)
  {subtrees under distinct children are \emph{disjoint}\\
   $\Rightarrow$ tail conditionals decouple (Lemma~3)};
\node[draw=red!70!black,dashed,rounded corners=2pt,fill=red!4,
      align=left,text width=8.2cm,font=\footnotesize,inner sep=5pt]
  (BOX) at (2.9,8.35)
  {\textbf{per-node problem} (any state $w^{t-1}$; principle of optimality $+$ Gibbs):\\[2pt]
   $\displaystyle J_t(w^{t-1})=\!\!\max_{q_t\in\Delta(W)}\!\Big\{\underbrace{\tfrac{\mu_E}{n}H(q_t)+\lambda\!\sum_w q_t\log p_t}_{\rho_t}+\sum_w q_t\,J_{t+1}(w^{t-1}w)\Big\}$\\[3pt]
   $\displaystyle =\max_{q_t}\{\langle q_t,c\rangle+\tau H(q_t)\},\quad \tau=\tfrac{\mu_E}{n},\ \ c(w)=\lambda\log p_t+J_{t+1}(w^{t-1}w)$\\[3pt]
   $\displaystyle \Rightarrow\ \ q_t^\star(w\mid w^{t-1})\ \propto\ \big(p_t(w\mid w^{t-1})\,e^{\eta V_t(w\mid w^{t-1})}\big)^{1/T}.$};
\draw[->,red!70!black,line width=.7pt] (BOX.south) .. controls (4.6,6.6) .. (B.north east);
\draw[->,black!65,line width=.9pt] (7.55,-.1) -- (7.55,6.1);
\node[ann,rotate=90,anchor=south,text=black!65] at (7.35,3.0)
  {\textbf{backward sweep} (Lemma~1): $J_t\!=\!\tfrac1{\mu^\star}\log\sum_w p_t^{\lambda^\star}e^{\mu^\star J_{t+1}}$};
\draw[->,black!45,line width=.9pt] (8.35,6.1) -- (8.35,-.1);
\node[ann,rotate=90,anchor=north,text=black!50] at (8.55,3.0)
  {\textbf{forward pass}: sample $w_t\sim q_t^\star(\cdot\mid w^{t-1})$};
\end{tikzpicture}
\ifdefined\PREPRINTMAIN
  }
  \caption{\ProofTreeCaption}
  \label{fig:proof-tree}
  \end{figure}
\else
  \par\vspace{10pt}
  {\footnotesize\captionof{figure}{\ProofTreeCaption}}
  \end{minipage}
\fi
\ifdefined\PREPRINTMAIN\else
  \end{document}
\fi

\subsection{Step 1: The recursion}

The one detail suppressed in the main text is why \eqref{eq:app-lag} collapses to a
node-wise recursion. This is the principle of optimality; we prove it for the
present objective, whose per-step ``reward'' is a functional of the whole
conditional $q_t$ rather than of a sampled action.

\begin{lemma}[Principle of optimality]\label{lem:app-poo}
Let $J_{n+1}(w^n):=U(w^n)$ and, for $t=n,n-1,\dots,1$ and every prefix
$w^{t-1}$,
\begin{equation}\label{eq:app-J}
J_t(w^{t-1}):=\max_{q_t\in\Delta(\W)}
\Big\{\rho_t(w^{t-1},q_t)+\sum_{w\in\W}q_t(w\mid w^{t-1})\,J_{t+1}(w^{t-1}w)\Big\}.
\end{equation}
Then $\max_q\mathcal L(q)=J_1(\varnothing)$, and any policy that attains the maximum
in \eqref{eq:app-J} at every node is a global maximiser of \eqref{eq:app-lag}.
\end{lemma}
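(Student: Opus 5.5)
The plan is backward induction on the horizon, applied to a slightly stronger statement about tails of the Lagrangian. For a policy $q$ and a prefix $w^{t-1}$, define the tail value
\[
\mathcal L_t(q;w^{t-1}):=\E_{q}\Big[U(w^n)+\sum_{s=t}^n\rho_s(w^{s-1},q_s)\,\Big|\,w^{t-1}\Big],
\]
where the conditional expectation uses only the conditionals $q_t,\dots,q_n$. Since the prefix is fixed, the empty prefix gives $\mathcal L_1(q;\varnothing)=\mathcal L(q)$ by the tower property applied to \eqref{eq:app-lag}. I will prove by downward induction on $t$ two claims: $\sup_q\mathcal L_t(q;w^{t-1})=J_t(w^{t-1})$ for every prefix, and any policy attaining the node-wise maxima in \eqref{eq:app-J} at all nodes of depth $\ge t$ achieves this supremum. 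The case $t=1$ is the lemma.

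First I check that the maxima in \eqref{eq:app-J} exist, so that $J_t$ is well defined and finite. At each node the objective is $\langle q_t,c\rangle+\tau H(q_t)$ with $\tau=\mu_E/n$ and $c(w)=\lambda\log p_t(w\mid w^{t-1})+J_{t+1}(w^{t-1}w)$. The vector $c$ is finite because $p_t\ge\rho$ and, inductively, $J_{t+1}$ is finite. The objective is continuous on the compact simplex, so the maximum is attained. When $\tau>0$, Lemma~\ref{lem:gibbs} also gives uniqueness and the closed form, although the closed form is not needed here. The base case is $t=n+1$, where $\mathcal L_{n+1}=U=J_{n+1}$ trivially.

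For the inductive step I expand one layer:
\[
\mathcal L_t(q;w^{t-1})=\rho_t(w^{t-1},q_t)+\sum_w q_t(w\mid w^{t-1})\,\mathcal L_{t+1}(q;w^{t-1}w).
\]
The key structural point is the decoupling illustrated in Figure~\ref{fig:proof-tree}. The tail term $\mathcal L_{t+1}(q;w^{t-1}w)$ depends only on the conditionals at nodes inside the subtree rooted at $w^{t-1}w$. These subtrees are disjoint for distinct $w$, and none of them contains the node $w^{t-1}$ where $q_t$ acts.

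\emph{Upper bound.} The induction hypothesis gives $\mathcal L_{t+1}(q;w^{t-1}w)\le J_{t+1}(w^{t-1}w)$. Since $q_t\ge0$, I get $\mathcal L_t\le\rho_t+\sum_w q_t J_{t+1}\le J_t(w^{t-1})$.

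\emph{Attainment.} Take the node-wise maximiser $q^\star$. By the induction hypothesis its tails achieve $J_{t+1}$ on every subtree, and its choice at $w^{t-1}$ attains \eqref{eq:app-J}, so $\mathcal L_t(q^\star;w^{t-1})=J_t(w^{t-1})$. The disjointness matters precisely here: the tail optimisers on different subtrees, together with the choice at $w^{t-1}$, can be combined into a single policy without conflict.

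I expect the main subtlety to be bookkeeping rather than analysis, namely justifying that the global maximisation over $q$ splits into independent node-wise maximisations. This needs two things. The first is that the set of policies is the product $\prod_{\text{nodes}}\Delta(\W)$. The second is that $\rho_t$ depends on $q$ only through the conditional at the current node. Both follow from $q(w^n)=\prod_t q_t(w_t\mid w^{t-1})$, so I will state them explicitly.

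Finally, every prefix is assumed to have positive occupancy, which holds whenever $q$ has full support. Without this the equivalence runs in one direction only: a global maximiser need only attain the node-wise maxima on reachable nodes. The lemma, however, claims only that node-wise optimality implies global optimality, which the argument above delivers without that caveat.
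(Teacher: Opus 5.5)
Your proposal is correct and follows essentially the same route as the paper's proof: downward induction on the conditional tail objective (the paper's $G^q_t$), the one-step tower decomposition, and the disjointness of subtrees to assemble the node-wise maximisers into a global maximiser. Where the paper interchanges the maxima directly, you split the step into an upper bound and an attainment argument, and you also check by compactness that the node-wise maxima exist; these are small refinements of the same argument.
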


\begin{proof}
For a fixed policy $q$ and a prefix $w^{t-1}$, let
\[
G^q_t(w^{t-1}):=
\E_{w_{t:n}\sim q}\Big[U(w^n)+\sum_{s=t}^n\rho_s(w^{s-1},q_s)\ \Big|\ w^{t-1}\Big],
\qquad G^q_{n+1}(w^n):=U(w^n),
\]
the conditional tail objective, the expectation being over $w_t,\dots,w_n$ drawn
from $q_t,\dots,q_n$ started at $w^{t-1}$. By the tower property applied to
\eqref{eq:app-lag}, $\mathcal L(q)=G^q_1(\varnothing)$. We show
$J_t(w^{t-1})=\max_{q_t,\dots,q_n}G^q_t(w^{t-1})$ by downward induction on $t$; the
case $t=1$ is the assertion.

\emph{Base.} $G^q_{n+1}=U=J_{n+1}$, with no maximisation.

\emph{Step.} Assume the claim at $t+1$ and fix $w^{t-1}$. The summand $\rho_t$ is
determined by $q_t$ given the prefix, so conditioning on the next token $w_t=w$ and
using the tower property,
\begin{equation}\label{eq:app-tower}
G^q_t(w^{t-1})=\rho_t(w^{t-1},q_t)+\sum_{w\in\W}q_t(w\mid w^{t-1})\,
G^q_{t+1}(w^{t-1}w).
\end{equation}
For each $w$ the term $G^q_{t+1}(w^{t-1}w)$ depends only on the conditionals
$(q_{t+1},\dots,q_n)$ restricted to the sub-tree rooted at $w^{t-1}w$, and these
sub-trees are pairwise disjoint across distinct $w$. Hence the tail conditionals may
be optimised independently within each sub-tree, and the coefficients
$q_t(w\mid w^{t-1})\ge0$ in \eqref{eq:app-tower} do not couple them:
\begin{align*}
\max_{q_t,\dots,q_n}G^q_t(w^{t-1})
=&\max_{q_t}\Big\{\rho_t(w^{t-1},q_t)
+\sum_w q_t(w\mid w^{t-1})\max_{q_{t+1:n}}G^q_{t+1}(w^{t-1}w)\Big\}\\
=&\max_{q_t}\Big\{\rho_t+\sum_w q_t\,J_{t+1}(w^{t-1}w)\Big\}
=J_t(w^{t-1}),
\end{align*}
where the second equality is the induction hypothesis and the last is
\eqref{eq:app-J}. The interchange of $\max_{q_t}$ with the inner maxima is licit
because, once each $\max_{q_{t+1:n}}G^q_{t+1}(w^{t-1}w)=J_{t+1}(w^{t-1}w)$ is a
constant in $q_t$, the outer bracket is a function of $q_t$ alone. Assembling the
node maximisers into a policy yields a global maximiser.
\end{proof}

Substituting $\rho_t$ from \eqref{eq:app-lag} into \eqref{eq:app-J} writes the
recursion in the form used in the main text,
\[
J_t(w^{t-1})=\max_{q_t\in\Delta(\W)}
\sum_w q_t(w)\big[\lambda\log p_t(w\mid w^{t-1})+J_{t+1}(w^{t-1}w)\big]
+\tfrac{\mu_E}{n}H(q_t),
\qquad J_{n+1}=U.
\]

\subsection{Step 2: Per-node Gibbs solution}

The bracket
$c(w):=\lambda\log p_t(w\mid w^{t-1})+J_{t+1}(w^{t-1}w)$ does not involve $q_t$, and
$\rho_t=\langle q_t,\lambda\log p_t\rangle+\tfrac{\mu_E}{n}H(q_t)$, so the node
problem is $\max_{q_t}\{\langle q_t,c\rangle+\tau H(q_t)\}$ with $\tau=\mu_E/n>0$.
Lemma~\ref{lem:gibbs} (the Gibbs variational principle) applies verbatim and gives
\[
q^\star_t(w\mid w^{t-1})=\frac{e^{c(w)/\tau}}{\sum_{w'}e^{c(w')/\tau}},
\qquad
J_t(w^{t-1})=\tau\log\sum_w e^{c(w)/\tau}.
\]
Since $c(w)/\tau=\tfrac{n}{\mu_E}\lambda\log p_t(w\mid w^{t-1})
+\tfrac{n}{\mu_E}J_{t+1}(w^{t-1}w)=\lambda^\star\log p_t+\mu^\star J_{t+1}$ with
$\lambda^\star:=n\lambda/\mu_E$ and $\mu^\star:=n/\mu_E$ (so $\tau=1/\mu^\star$),
\begin{align}\label{eq:app-node}
q^\star_t(w\mid w^{t-1}) &\propto
p_t(w\mid w^{t-1})^{\lambda^\star}\exp\!\big(\mu^\star J_{t+1}(w^{t-1}w)\big),\\
J_t(w^{t-1})&=\tfrac{1}{\mu^\star}\log\sum_w
p_t(w\mid w^{t-1})^{\lambda^\star}\exp\!\big(\mu^\star J_{t+1}(w^{t-1}w)\big).
\label{eq:app-node2}
\end{align}

\subsection{Step 3: Identification with the potential of Lemma~\ref{lem:bellman}}

Set $V_t(w\mid w^{t-1}):=J_{t+1}(w^{t-1}w)$. The left identity in
\eqref{eq:app-node} is \eqref{eq:opt1}--\eqref{eq:opt2} with temperature
$T=1/\lambda^\star$ and tilt $\eta=\mu^\star/\lambda^\star$. The terminal value is
$V_n(w\mid w^{n-1})=J_{n+1}(w^{n-1}w)=U(w^{n-1}w)=U(w^n)$, and the identity in
\eqref{eq:app-node2}, rewritten through $V$, is
\[
\tfrac1{\mu^\star}\log Z_t(w^{t-1}),\qquad
Z_t(w^{t-1})=\sum_w p_t(w\mid w^{t-1})^{\lambda^\star}
\exp\!\big(\mu^\star V_t(w\mid w^{t-1})\big),
\]
i.e.\ the terminal condition and backward recursion of Lemma~\ref{lem:bellman}. That
recursion involves only $(p,U,\lambda^\star,\mu^\star)$ and no policy, so $V_t$ is
its potential: the optimum is one backward sweep for $V$ followed by the forward
policy \eqref{eq:opt2}, with no self-consistent pair $(q^\star,V^\star)$ to solve.

\subsection{Step 4: Convexity, Slater, strong duality, complementary slackness}

It remains to certify that the Lagrangian stationary point is the constrained
optimum and that the multipliers are the two active constraints. Pass to causal
occupancy-measure coordinates: a policy is equivalent to a state--action visitation
$d=(d_t)_t$, $d_t(w^{t-1},w):=q(w^{t-1})\,q_t(w\mid w^{t-1})$, subject to the linear
causal-consistency (Bellman-flow) constraints
$\sum_w d_t(w^{t-1},w)=\sum_{w'} d_{t-1}(w^{t-2},w')\,\mathbf 1[(w^{t-2},w')=w^{t-1}]$
and $d_t\ge0$; the feasible set $\mathcal D$ is a convex compact polytope. In these
coordinates:
\begin{itemize}
\item $u_q=\sum_{w^n}d_n(w^{n-1},w_n)\,U(w^n)$ and the plausibility functional
$\frac1n\sum_t\sum_{w^{t-1},w}d_t\log p_t$ are \emph{linear} in $d$;
\item the causal entropy
$\frac1n\sum_t\sum_{w^{t-1},w}d_t\log\frac{\sum_{w'}d_t(w^{t-1},w')}{d_t(w^{t-1},w)}$
is a sum of conditional entropies, each \emph{concave} in $d$.
\end{itemize}
Hence \eqref{eq:PE} maximises a linear objective over $\mathcal D$ under one concave
($\ge$) constraint and one linear ($\le$) constraint: a convex program on a convex
compact set. Slater's condition holds whenever $\beta<\log|\W|$ (the uniform policy
is strictly entropy-feasible) and $\gamma$ is not at its binding-tight extreme, so a
strictly feasible $d$ exists. Strong duality then applies: at the optimal multipliers
$(\lambda,\mu_E)$ the unconstrained maximiser of $\mathcal L$ --- which by Steps~1--3
is the tilted-temperature policy \eqref{eq:opt1}--\eqref{eq:opt2} --- solves
\eqref{eq:PE}. Complementary slackness gives
\[
\mu_E\bigl(\bar H(q^\star)-\beta\bigr)=0,
\qquad
\lambda\bigl(\gamma-\mathrm{CE}(q^\star,p)\bigr)=0.
\]
Thus a slack constraint has zero multiplier. In the interior case where both multipliers
are positive, both constraints hold with equality and pin $(\lambda,\mu_E)$---equivalently
$(T,\eta)$---through the budgets $(\beta,\gamma)$.
\qed
\paragraph{The optimum is a single backward sweep.}
By Lemma~\ref{lem:bellman}, the potential $V_t$ in \eqref{eq:opt2} depends only on
$(p,U,\lambda^\star,\mu^\star)$. Hence, for fixed dual prices, it is obtained by
a single backward sweep, after which the optimal forward policy is given directly
by \eqref{eq:opt2}; no policy--value fixed point is required.

\begin{corollary}[Free-energy decomposition]\label{cor:freeenergy}
For $t<n$, with $q^\star$ from \eqref{eq:opt2} and
$\mathrm{CE}(q,p):=-\sum_w q(w)\log p(w)$,
\[
V_t(w\mid w^{t-1})
=
\underbrace{
\E_{w'\sim q^\star_{t+1}(\cdot\mid w^{t-1}w)}
\!\big[V_{t+1}(w'\mid w^{t-1}w)\big]
}_{\text{expectation backup of }V}
+
\frac{1}{\mu^\star}
\Big[
H\big(q^\star_{t+1}\big)
-\lambda^\star\,\mathrm{CE}\big(q^\star_{t+1},p_{t+1}\big)
\Big].
\]
Thus $V_t$ is an expectation backup augmented by an entropy--plausibility
free-energy correction. Since
$\mathrm{CE}(q,p)=H(q)+\mathrm{KL}(q\Vert p)$, the correction can be written as
\[
\frac{1}{\mu^\star}
\Big[
(1-\lambda^\star)H\big(q^\star_{t+1}\big)
-\lambda^\star\mathrm{KL}\big(q^\star_{t+1}\Vert p_{t+1}\big)
\Big],
\]
so its sign is not fixed in general.
\end{corollary}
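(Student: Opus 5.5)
We use the node-wise relations from the proof of Theorem~\ref{thm:tilt}. Fix $t<n$ and the committed prefix $h:=w^{t-1}w$. By Lemma~\ref{lem:bellman},
\[
V_t(w\mid w^{t-1})=\tfrac1{\mu^\star}\log Z_{t+1}(h),\qquad Z_{t+1}(h)=\sum_{w'}p_{t+1}(w'\mid h)^{\lambda^\star}e^{\mu^\star V_{t+1}(w'\mid h)}.
\]
By \eqref{eq:opt1}, the optimal conditional at the next node is $q^\star_{t+1}(w'\mid h)=p_{t+1}(w'\mid h)^{\lambda^\star}e^{\mu^\star V_{t+1}(w'\mid h)}/Z_{t+1}(h)$. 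The plan is to invert this Gibbs form and then average. The averaging is the content of the equality case of Lemma~\ref{lem:gibbs}: the log-partition equals $\langle q,c\rangle+\tau H(q)$ at the maximizer.

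\textbf{Step 1 (pointwise identity).} Take logarithms of the Gibbs form. For every $w'$,
\[
\log Z_{t+1}(h)=\lambda^\star\log p_{t+1}(w'\mid h)+\mu^\star V_{t+1}(w'\mid h)-\log q^\star_{t+1}(w'\mid h).
\]
The left side does not depend on $w'$. This step is legitimate because $q^\star_{t+1}>0$ everywhere, since $p\ge\rho$ and $V$ is finite by the well-posedness bounds \eqref{eq:B-bounds}.

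\textbf{Step 2 (average).} Average the identity against $q^\star_{t+1}(\cdot\mid h)$. The left side is unchanged. On the right side:
\begin{itemize}
\item $-\E\log q^\star_{t+1}$ gives $H(q^\star_{t+1})$;
\item $\lambda^\star\E\log p_{t+1}$ gives $-\lambda^\star\,\mathrm{CE}(q^\star_{t+1},p_{t+1})$;
\item $\mu^\star\E V_{t+1}$ gives the expectation backup.
\end{itemize}
Dividing by $\mu^\star>0$ yields the first display.

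\textbf{Step 3 (second form).} Substitute the standard identity $\mathrm{CE}(q,p)=H(q)+\mathrm{KL}(q\Vert p)$ into the bracket. This gives $(1-\lambda^\star)H-\lambda^\star\mathrm{KL}$. The first term is nonnegative when $\lambda^\star\le1$ and the second is always nonpositive. So the sign of the correction is not fixed in general, for example at $\lambda^\star=1$ versus $\lambda^\star=0$.

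There is no real obstacle here; the argument is an exact algebraic identity. The only care needed is bookkeeping: the backup is over the next-step conditional $q^\star_{t+1}(\cdot\mid w^{t-1}w)$ with the same prices $(\lambda^\star,\mu^\star)$. The statement requires $t<n$ so that $V_{t+1}$ and $q^\star_{t+1}$ exist.
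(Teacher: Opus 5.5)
Your proposal is correct and follows the paper's own proof: take the logarithm of the Gibbs conditional at step $t+1$, average it under $q^\star_{t+1}(\cdot\mid w^{t-1}w)$, solve for $\log Z_{t+1}$, and substitute $V_t=\tfrac{1}{\mu^\star}\log Z_{t+1}$ from Lemma~\ref{lem:bellman}. Working from \eqref{eq:opt1} rather than \eqref{eq:opt2} is harmless, and it even removes the need for $\lambda^\star>0$. Your full-support remark, which justifies taking the logarithm, is a small detail the paper leaves implicit.
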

\vspace{-0.05in}
\begin{proof}
Take $\log$ of \eqref{eq:opt2} at step $t+1$ and average under
$q^\star_{t+1}$. Since
$\sum_{w'}q^\star_{t+1}\log q^\star_{t+1}=-H(q^\star_{t+1})$, we obtain
\[
-H(q^\star_{t+1})
=
-\lambda^\star\mathrm{CE}(q^\star_{t+1},p_{t+1})
+\mu^\star\E[V_{t+1}]
-\log Z_{t+1}.
\]
Solving for $\log Z_{t+1}$ and using
$V_t=\mu^{\star-1}\log Z_{t+1}$ from Lemma~\ref{lem:bellman} gives the claim.
\end{proof}

\providecommand{\X}{\mathcal{X}}
\providecommand{\Dist}{\mathcal{D}}
\providecommand{\Simp}{\Delta}
\providecommand{\W}{\mathcal{W}}
\providecommand{\E}{\mathbb{E}}
\providecommand{\Hbar}{\bar H}
\providecommand{\CE}{\mathrm{CE}}
\providecommand{\Vt}{\widetilde{V}}
\providecommand{\lamst}{\lambda^{\star}}
\providecommand{\must}{\mu^{\star}}
\providecommand{\qst}{q^{\star}}
\providecommand{\argmin}{\operatorname{arg\,min}}

\section{Prompt-Aware and Robust Formulations of (PE)}
\label{app:prompt}

\providecommand{\qst}{q^{\star}}
\providecommand{\argmin}{\operatorname{arg\,min}}

Program (PE) is written for a fixed prompt: $p_t(w\mid w^{t-1})$ already conditions on $x$,
so its optimum is a per-prompt object. In deployment a single decoder is applied across a
prompt distribution, and the natural objectives --- average and worst-case --- must be
written down explicitly. This appendix does that for three formulations and shows that each
is solved by the tilted-temperature family of Theorem~1 applied prompt-wise. The
formulations differ only in how the two prices $(T,\eta)$ are pinned: locally at each
prompt, globally across the prompt distribution, or globally at a least-favorable mixture.

Let $\X$ be the prompt space with reference distribution $\Dist$; take $\X$ finite (or
compact, with the usual continuity, so that the minima below are attained). For each
$x\in\X$ the model provides conditionals $p_t(\cdot\mid w^{t-1},x)\in\Simp(\W)$ with
$p_t\ge\rho>0$, and a possibly $x$-dependent verifier $U(\cdot,x):\W^n\to[0,1]$. A decoder
is a measurable map
\[
\Phi:\ (x,w^{t-1})\ \longmapsto\ q_t(\cdot\mid w^{t-1},x)\in\Simp(\W),\qquad t=1,\dots,n,
\]
i.e.\ it chooses a sampler at every prompt and prefix. Write the per-prompt quantities
\[
u_q(x):=\E_{w^n\sim q(\cdot\mid x)}\big[U(w^n,x)\big],
\qquad
\Hbar_q(x):=\frac1n\sum_t\E_{w^{t-1}\sim q}\,H\big(q_t(\cdot\mid w^{t-1},x)\big),
\]
\[
\CE_q(x):=-\frac1n\sum_t\E_{w^{t-1}\sim q}\Big[\textstyle\sum_w q_t\log p_t\Big].
\]
The observation that drives everything below: $u_q(x)$, $\Hbar_q(x)$ and $\CE_q(x)$ depend
on the decoder only through its restriction $q(\cdot\mid x)$ to prompt $x$. Distinct
prompts are coupled only if a constraint or the objective ties them together.

For prices $\lambda>0$, $\mu>0$ let $V^{\,x}_t(w\mid w^{t-1};\lambda,\mu)$ denote the
verifier potential of Lemma~1 built from the pair $\big(p(\cdot\mid x),U(\cdot,x)\big)$ at
those prices: it involves no policy, and is obtained by one backward sweep over prefix states
per prompt. We write $\varpi\in\Simp(\X)$ for a mixture over prompts, reserving $w$ for
tokens. We propose three prompt-level formulations:

\paragraph{(A) Average objective, per-prompt budgets.}
\begin{equation}\label{eq:A}
\max_{\Phi}\ \E_{x\sim\Dist}\big[u_q(x)\big]
\quad\text{s.t.}\quad
\Hbar_q(x)\ge\beta(x),\ \ \CE_q(x)\le\gamma(x)\ \ \text{for $\Dist$-a.e.\ }x.
\tag{A}
\end{equation}

\paragraph{(B) Average objective, pooled budgets.}
\begin{equation}\label{eq:B}
\max_{\Phi}\ \E_{x\sim\Dist}\big[u_q(x)\big]
\quad\text{s.t.}\quad
\E_{x\sim\Dist}\big[\Hbar_q(x)\big]\ge\beta,\ \
\E_{x\sim\Dist}\big[\CE_q(x)\big]\le\gamma.
\tag{B}
\end{equation}
One diversity budget and one plausibility budget are shared across the whole prompt
distribution --- the deployment regime, where a single temperature is spent over the
benchmark. This is the formulation the experiments of Section~4 instantiate: one
cross-entropy budget anchored at a reference temperature, one deployed $T$, applied across
all tasks.

\paragraph{(C) Worst-case prompt (robust / distributionally robust).}
\begin{equation}\label{eq:C}
\max_{\Phi}\ \min_{x\in\X}\ u_q(x)
\quad\text{s.t.\ per-prompt or pooled budgets}
\qquad=\qquad
\max_{\Phi}\ \min_{\varpi\in\Simp(\X)}\ \E_{x\sim\varpi}\big[u_q(x)\big].
\tag{C}
\end{equation}
The equality rewrites the worst prompt as the worst mixture (the inner objective is linear
in $\varpi$, so its minimum is attained at a vertex); $\min_\varpi$ over all of
$\Simp(\X)$ is the distributionally robust objective with the largest ambiguity set, and
any smaller set $\mathcal K\ni\Dist$ interpolates between \eqref{eq:B}
($\mathcal K=\{\Dist\}$) and \eqref{eq:C}.

\begin{remark}[The optimum is a \emph{rule}, computed in two stages]
\label{rem:two-stage}
It is tempting to read ``averaging over $x$'' in \eqref{eq:B} as eliminating the prompt
from the solution. It does not. Pooling eliminates $x$ from the \emph{constraints} and from
the scalar objective, but the decision variable is the whole decoder
$\Phi:x\mapsto q(\cdot\mid x)$, and the optimum is a rule, not a single distribution.
Equivalently: $(T,\eta)$ is a global \emph{operator} and
$\big(p(\cdot\mid x),U(\cdot,x)\big)$ is its prompt-local \emph{argument}. At fixed global
multipliers the Lagrangian integrand at prompt $x$ depends only on $q(\cdot\mid x)$, so the
inner maximization is pointwise and its argmax responds to the local pair even though the
multipliers do not. Pooling flattens the operator, never the argument; the only way
$\qst(\cdot\mid x)$ could be $x$-free is the degenerate case in which
$p(\cdot\mid x),U(\cdot,x)$ are constant in $x$. (Compare cross-validated ridge: the
penalty is chosen from the data distribution with no test point in hand; the prediction
still depends on the realized $x$.)

The two stages are therefore: \emph{design} (offline, $x$ random and only $\Dist$ known)
solves the two budget equations for the scalars $(T,\eta)$ from a calibration sample of
$\Dist$ alone; \emph{inference} (online, $x$ realized) runs the backward sweep of Lemma~1
on $\big(p(\cdot\mid x),U(\cdot,x)\big)$ at those prices to obtain $V^{\,x}$, then
samples \eqref{eq:tilt}. Since $V$ depends on the prices as well as on the prompt, the
sweep in \eqref{eq:B} uses the same global $(\lamst,\must)$ at every prompt, whereas in
\eqref{eq:A} each prompt's sweep uses its own.
\end{remark}

\begin{theorem}[Invariance]
\label{thm:inv}
Assume $\lambda^\star > 0$ and $\mu^\star > 0$. In each of \eqref{eq:A}, \eqref{eq:B}, \eqref{eq:C} the optimal
decoder is, at every $x$, prefix $w^{t-1}$ and token $w$,
\begin{equation}\label{eq:tilt}
\qst_t(w\mid w^{t-1},x)
=\frac{\big(p_t(w\mid w^{t-1},x)\,\exp\!\big(\eta\,V^{\,x}_t(w\mid w^{t-1})\big)\big)^{1/T}}
{Z_t(w^{t-1},x;T,\eta)},
\end{equation}
the tilted-temperature family of Theorem~1 applied prompt-wise, with $V^{\,x}$ the
potential of Lemma~1 for the pair $\big(p(\cdot\mid x),U(\cdot,x)\big)$ at the prices in
force. The formulations differ only in how $(T,\eta)$ are pinned:
\begin{itemize}[leftmargin=2.2em,itemsep=2pt]
\item[(A)] $(T,\eta)=\big(T(x),\eta(x)\big)$, solving the two budget equations
$\Hbar_q(x)=\beta(x)$, $\CE_q(x)=\gamma(x)$ separately at each prompt; the potential
$V^{\,x}$ is built at that prompt's prices.
\item[(B)] $(T,\eta)$ is a single global pair, solving the two pooled budget equations
$\E_\Dist[\Hbar_q]=\beta$, $\E_\Dist[\CE_q]=\gamma$; every $V^{\,x}$ is built at those
global prices.
\item[(C)] \emph{Per-prompt budgets:} \eqref{eq:C} is solved by the optimum of
\eqref{eq:A}, and robustness is vacuous. \emph{Pooled budgets:} $(T,\eta)$ is the single
global pair pinned by the two budget equations evaluated under a least-favorable mixture
$\varpi^\star\in\Simp(\X)$, and $\qst$ equalizes the per-prompt success on its support,
$u_{\qst}(x)=\min_{x'}u_{\qst}(x')$ for all $x\in\operatorname{supp}\varpi^\star$.
\end{itemize}
\end{theorem}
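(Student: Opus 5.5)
The strategy is to reduce each of \eqref{eq:A}, \eqref{eq:B}, \eqref{eq:C} to a family of per-prompt Lagrangian problems of exactly the form solved in Theorem~\ref{thm:tilt}. The governing observation is the one recorded before Remark~\ref{rem:two-stage}: $u_q(x)$, $\Hbar_q(x)$ and $\CE_q(x)$ depend on the decoder only through its restriction $q(\cdot\mid x)$. Once the multipliers are fixed, the Lagrangian integrand at each $x$ is a per-prompt version of \eqref{eq:app-lag}. Lemma~\ref{lem:app-poo}, Lemma~\ref{lem:gibbs} and the identification in Step~3 then yield the tilted form \eqref{eq:tilt}, with $V^{\,x}$ the potential of Lemma~\ref{lem:bellman} built from $\big(p(\cdot\mid x),U(\cdot,x)\big)$ at the prices in force. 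The three cases differ only in which multipliers appear and whether they may depend on $x$.

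\emph{(A)} Both the objective and the constraints separate across prompts. Maximizing $\E_\Dist[u_q(x)]$ subject to per-$x$ constraints is therefore equivalent to solving, for $\Dist$-a.e.\ $x$, the single-prompt program \eqref{eq:PE} with budgets $(\beta(x),\gamma(x))$; any decoder that is suboptimal on a set of positive $\Dist$-mass can be strictly improved there. Theorem~\ref{thm:tilt} applied at each $x$ then gives \eqref{eq:tilt}, with per-prompt multipliers $(\lambda(x),\mu_E(x))$. Under the standing assumption $\lambda^\star,\mu^\star>0$, complementary slackness makes both budgets bind, which pins $(T(x),\eta(x))$.

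\emph{(B)} I would repeat the convexity argument of Step~4, now in occupancy coordinates $d^x$ for each prompt. The objective and the pooled cross-entropy are linear, the pooled causal entropy is an average of concave functionals, and the product of per-prompt polytopes is convex and compact, so the program is convex. Slater's condition holds via the uniform decoder when $\beta<\log|\W|$, so strong duality gives global multipliers $(\lambda,\mu_E)$. The Lagrangian is $\E_{x\sim\Dist}$ of per-prompt terms of the form \eqref{eq:app-lag} sharing those multipliers, and it is maximized pointwise in $x$. Each pointwise problem is again solved by Theorem~\ref{thm:tilt}'s recursion, now with common $(\lambda^\star,\mu^\star)$, and complementary slackness gives the two pooled budget equations.

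\emph{(C)} With per-prompt budgets, the feasible set is a product over prompts, and the decoder from (A) maximizes $u_q(x)$ simultaneously at every $x$ over that product. It therefore maximizes $\min_x u_q(x)$ as well. With pooled budgets, I would rewrite the problem as $\max_d\min_{\varpi}\E_\varpi[u]$. The objective is linear in $\varpi$ and linear in $d$, and the feasible set in $d$ is convex and compact, so Sion's minimax theorem lets us swap the max and the min and obtain a least-favorable $\varpi^\star$. For that fixed $\varpi^\star$, the inner problem is of type (B), with objective weight $\varpi^\star$ while the constraints remain pooled under $\Dist$. Its Lagrangian is pointwise in $x$, with weight $\varpi^\star(x)$ on the reward and $\Dist(x)$ on the constraints. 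Equalization on $\operatorname{supp}\varpi^\star$ then follows from the saddle-point condition: $\varpi^\star$ minimizes $\E_\varpi[u_{q^\star}]$, so it can only charge prompts attaining the minimum.

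The main obstacle is (C) with pooled budgets. The prompt-dependent ratio $\varpi^\star(x)/\Dist(x)$ rescales the effective multipliers at each prompt, so the per-prompt tilt is really $\mu^\star\varpi^\star(x)/\Dist(x)$ and prompts off the support receive no tilt. To match the theorem's claim of a single global pair I would first reinterpret "budget equations evaluated under $\varpi^\star$" as pooling the constraints under $\varpi^\star$, which makes the weights cancel. If that reading is not the intended one, I would state the result with the density-ratio-adjusted prices instead.
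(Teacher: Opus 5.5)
\textbf{Parts (A), (B) and per-prompt (C).} Your treatment follows the paper's own proof. You fix the multipliers, note that the Lagrangian integrand at $x$ involves only $q(\cdot\mid x)$, maximize pointwise via the dynamic program and Gibbs step from the proof of Theorem~\ref{thm:tilt}, and certify with convexity in occupancy coordinates plus strong duality. For per-prompt (C), simultaneous pointwise optimality over a product feasible set gives max-min optimality.

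One small slip in (B) concerns Slater's condition. The uniform decoder is strictly entropy-feasible but need not satisfy the cross-entropy ceiling, so it is not by itself a Slater point. Since the pooled cross-entropy is linear, what you need is a feasible decoder whose pooled cross-entropy is strictly below $\gamma$. A small occupancy-space mixture of that decoder with the uniform one is then strictly entropy-feasible by concavity. This is what the paper's proviso that $\gamma$ not be binding-tight is meant to supply.

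\textbf{Pooled (C).} Your density-ratio computation is correct, and it exposes a genuine error in the paper's own proof. For the Sion step, the learner's feasible set $\Omega$ cannot depend on $\varpi$, so the constraints stay pooled under $\mathcal{D}$. The paper nevertheless asserts that $\max_{\omega\in\Omega}G(\omega,\varpi)$ ``is exactly formulation (B) with $\mathcal{D}$ replaced by $\varpi$.'' That silently reweights the constraints as well. The actual per-prompt Lagrangian term, with $\lambda_0$ the ceiling multiplier of Step~0, is
\[
\varpi^\star(x)\,u_q(x)+\mathcal{D}(x)\bigl(\mu_E\,\bar H_q(x)-\lambda_0\,\CE_q(x)\bigr).
\]
Dividing by $\varpi^\star(x)$ has two consequences:
\begin{itemize}
\item $\lambda^\star=\lambda_0/\mu_E$ is unchanged, so the temperature $T$ is global.
\item $\mu^\star(x)=\mu^\star\,\varpi^\star(x)/\mathcal{D}(x)$ with $\mu^\star=n/\mu_E$. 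Hence the tilt $\eta(x)$, and the prices at which $V^{\,x}$ is built, vary with $x$. Off $\operatorname{supp}\varpi^\star$ the decoder is the zero-tilt limit of Remark~\ref{rem:B-degenerate}.
\end{itemize}
For $\mathcal{D}$ of full support, the entropy term makes the Lagrangian strictly concave in the conditionals, so this is the unique max-min decoder. The stated ``single global pair'' therefore holds only in the non-generic case $\varpi^\star=\mathcal{D}$. For example, it fails for an easy prompt with non-constant verifier lying off the support, which receives no tilt.

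\textbf{Your first option does not work.} Imposing the budgets under $\varpi^\star$ after the fact changes the feasible set, so that program's maximizer is not the saddle-point decoder. Pooling under the adversary's variable from the outset makes the feasible set $\varpi$-dependent, which the Sion argument does not cover.

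\textbf{Take your second option.} The provable statement is a global $T$, a tilt rescaled by $\varpi^\star(x)/\mathcal{D}(x)$, and $(\lambda^\star,\mu^\star)$ pinned by the two $\mathcal{D}$-pooled budget equations. The equalization on the support survives unchanged, because it comes only from the outer minimization.
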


\begin{proof}
Throughout, write the per-prompt Lagrangian integrand at prices $(\lambda,\mu_E)$,
\begin{equation}\label{eq:Lx}
\mathcal L_x\big(q(\cdot\mid x)\big)
:= u_q(x)+\sum_{t=1}^{n}\E_{w^{t-1}\sim q}\Big[\tfrac{\mu_E}{n}H(q_t)
+\lambda\textstyle\sum_w q_t\log p_t\Big],
\end{equation}
which is exactly the objective appearing in the proof of Theorem~1 for the fixed prompt
$x$: a finite-horizon control problem with terminal reward $U(\cdot,x)$ and per-step reward
$\lambda\E_{q_t}[\log p_t]+\tfrac{\mu_E}{n}H(q_t)$. By that proof --- Lemma~2 applied node
by node to the optimal cost-to-go --- its unique maximizer is \eqref{eq:tilt} with
\begin{equation}\label{eq:prices}
\lamst=\frac{n\lambda}{\mu_E},\qquad \must=\frac{n}{\mu_E},\qquad
T=\frac{1}{\lamst}=\frac{\mu_E}{n\lambda},\qquad \eta=\frac{\must}{\lamst}=\frac1\lambda ,
\end{equation}
and $V^{\,x}$ the Lemma~1 potential of $\big(p(\cdot\mid x),U(\cdot,x)\big)$ at
$(\lamst,\must)$: it is the optimal cost-to-go itself,
$V^{\,x}_t(w\mid w^{t-1})=J_{t+1}(w^{t-1}w)$, produced by the dynamic program and
depending on no policy. Also, in occupancy-measure coordinates $u_q(x)$ and $\CE_q(x)$ are
linear and $\Hbar_q(x)$ is concave, so each per-prompt program is convex on a convex compact
set, and Slater's condition holds whenever $\beta<\log|\W|$ and $\gamma$ is not
binding-tight.

\smallskip
\noindent\emph{(A).} Objective and constraints are both collections of per-prompt terms,
and every term at $x$ involves only $q(\cdot\mid x)$. Hence \eqref{eq:A} separates into
independent copies of (PE), one per $x$; each is solved by Theorem~1, whose duals
$(\lamst(x),\must(x))$ --- and hence
$\big(T(x),\eta(x)\big)=\big(1/\lamst(x),\must(x)/\lamst(x)\big)$ --- are pinned by that
prompt's two active constraints.

\smallskip
\noindent\emph{(B).} Introduce scalar multipliers $\mu_E\ge0$ (pooled entropy floor) and
$\lambda\ge0$ (pooled cross-entropy ceiling). The Lagrangian is
\[
\mathcal L(\Phi;\lambda,\mu_E)=\E_{x\sim\Dist}\big[\mathcal L_x(q(\cdot\mid x))\big]-\mu_E\beta+\lambda\gamma .
\]
At fixed $(\lambda,\mu_E)$ the expectation is of terms each depending on a single
restriction $q(\cdot\mid x)$, so $\max_\Phi\mathcal L$ decouples across prompts,
$\max_\Phi\E_x[\mathcal L_x]=\E_x[\max_{q(\cdot\mid x)}\mathcal L_x]$, the pointwise
maximizers being measurable in $x$ by \eqref{eq:tilt} (a continuous function of the
conditionals and of $V^{\,x}$, itself a continuous function of the conditionals through
$n$ log-sum-exps). Each inner problem is \eqref{eq:Lx}, so its solution is \eqref{eq:tilt}
with the prices \eqref{eq:prices} --- \emph{global}, because $(\lambda,\mu_E)$ are global,
while the tilted object $\big(p(\cdot\mid x),V^{\,x}\big)$ is local. Convexity plus
Slater gives strong duality, so the maximizer of $\mathcal L$ at the optimal multipliers
solves \eqref{eq:B}, and $(\lambda,\mu_E)$ are fixed by complementary slackness at the two
pooled active constraints $\E_\Dist[\Hbar_q]=\beta$, $\E_\Dist[\CE_q]=\gamma$.

\smallskip
\noindent\emph{(C), per-prompt budgets.} In \eqref{eq:A} both the objective and the
constraints separate, so its optimum maximizes $u_q(x)$ \emph{pointwise}, subject to that
prompt's budgets. A decoder that maximizes every $u_q(x)$ simultaneously maximizes
$\min_x u_q(x)$ as well; hence the \eqref{eq:A}-optimum solves \eqref{eq:C} and nothing is
lost by ignoring the robust objective. Prompts do not compete: each has its own budget.

\smallskip
\noindent\emph{(C), pooled budgets.} Now prompts compete for one budget and the problem is
a genuine saddle point. Pass to occupancy-measure (causal) coordinates: a decoder is
equivalent to a causal visitation $\omega=(\omega_x)_{x\in\X}$, in which
$u_q(x)=\langle\omega_x,U_x\rangle$ is linear in $\omega$, the pooled cross-entropy is
linear, and the pooled causal entropy is concave; the feasible set $\Omega$ is therefore
convex and compact. The rewritten objective
\[
G(\omega,\varpi):=\sum_{x}\varpi(x)\,\langle\omega_x,U_x\rangle
\]
is bilinear: linear (hence concave) in $\omega$ on $\Omega$, and linear --- hence convex and
continuous --- in $\varpi$ on the compact simplex $\Simp(\X)$. Sion's minimax theorem
applies, so $\max_\omega\min_\varpi G=\min_\varpi\max_\omega G$ and a saddle point
$(\omega^\star,\varpi^\star)$ exists. For fixed $\varpi$, the inner
$\max_\omega G(\cdot,\varpi)$ is exactly formulation \eqref{eq:B} with $\Dist$ replaced by
$\varpi$; hence its solution is \eqref{eq:tilt} with the \emph{single global} pair
$(T,\eta)$ pinned by the budgets under $\varpi$. Evaluating at $\varpi^\star$ gives the
stated form. Finally, optimality of $\varpi^\star$ in the outer min is a linear program
over $\Simp(\X)$ with cost $u_{\qst}(x)$; its optimal support lies in
$\argmin_x u_{\qst}(x)$, and complementary slackness forces $u_{\qst}(\cdot)$ to be constant
$\big(=\min_{x'}u_{\qst}(x')\big)$ on $\operatorname{supp}\varpi^\star$ --- the
equalization claim.
\end{proof}

\section{Adaptive Optimization: Piecewise-in-Horizon Budgets}
\label{app:adaptive}

Program (PE) uses one diversity floor $\beta$ and one plausibility ceiling
$\gamma$ over the full horizon. To model position-adaptive decoding, we let
these budgets take a small number $D$ of values along the horizon. The main
structural fact is that the optimum remains in the tilted-temperature family:
each block receives its own pair $(T_d,\eta_d)$. The global result of
Theorem~\ref{thm:tilt} is recovered at $D=1$.

This extension is useful for interpreting practical adaptive decoders. A
position-dependent temperature schedule can exploit coarse variation across
the horizon, while a value-aware tilted decoder can additionally react to the
state of the current prefix. The block model isolates the first effect.

\subsection{Block program and optimum}

Fix cut points
\[
0=t_0<t_1<\cdots<t_D=n,
\qquad
B_d:=\{t_{d-1}+1,\ldots,t_d\},
\qquad
n_d:=|B_d|.
\]
Each block carries its own diversity and plausibility budgets
$(\beta_d,\gamma_d)$. The block-constrained problem is
\begin{equation}
\label{eq:PED}
\max_q\ u_q
\quad\text{s.t.}\quad
\frac{1}{n_d}\sum_{t\in B_d}
\E_{w^{t-1}\sim q}
H\!\bigl(q_t(\cdot\mid w^{t-1})\bigr)\ge \beta_d,
\end{equation}
and
\begin{equation}
-\frac{1}{n_d}\sum_{t\in B_d}
\E_{w^{t-1}\sim q}
\Bigl[\sum_w q_t(w\mid w^{t-1})\log p_t(w\mid w^{t-1})\Bigr]
\le \gamma_d,
\qquad d=1,\ldots,D.
\tag{PE-$D$}
\end{equation}
As in (PE), maximizing $\passk$ is equivalent to maximizing the one-sample
success $u_q$ for fixed budgets.

Let $\mu_d>0$ and $\lambda_{0,d}>0$ denote the active block multipliers and
define
\[
\lambda_d:=\frac{\lambda_{0,d}}{n_d},
\qquad
\lambda_d^\star:=\frac{n_d\lambda_d}{\mu_d},
\qquad
\mu_d^\star:=\frac{n_d}{\mu_d},
\qquad
T_d:=\frac{1}{\lambda_d^\star},
\qquad
\eta_d:=\frac{\mu_d^\star}{\lambda_d^\star}.
\]
Write $d(t)$ for the block containing step $t$, set
$J_{n+1}(w^n):=U(w^n)$, and define the backward recursion
\begin{equation}
\label{eq:Jrec}
J_t(w^{t-1})
=
\frac{1}{\mu^\star_{d(t)}}
\log
\sum_{w\in W}
p_t(w\mid w^{t-1})^{\lambda^\star_{d(t)}}
\exp\!\Bigl(
\mu^\star_{d(t)}J_{t+1}(w^{t-1}w)
\Bigr).
\end{equation}
Let
\[
V_t(w\mid w^{t-1}):=J_{t+1}(w^{t-1}w).
\]

\begin{theorem}[$D$-block tilted temperature]
\label{thm:dblock}
Assume the block budgets are jointly attainable and all block entropy floors
are active. Then the optimum of \textup{(PE-$D$)} is
\begin{equation}
\label{eq:dblock-opt}
q_t^\star(w\mid w^{t-1})
\propto
p_t(w\mid w^{t-1})^{\lambda^\star_{d(t)}}
\exp\!\Bigl(
\mu^\star_{d(t)}V_t(w\mid w^{t-1})
\Bigr),
\end{equation}
or equivalently
\[
q_t^\star(w\mid w^{t-1})
\propto
\Bigl(
p_t(w\mid w^{t-1})
e^{\eta_{d(t)}V_t(w\mid w^{t-1})}
\Bigr)^{1/T_{d(t)}}.
\]
Thus the optimum remains tilted-temperature, with a pair
$(T_d,\eta_d)$ for each horizon block.
\end{theorem}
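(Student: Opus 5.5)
The plan is to repeat the proof of Theorem~\ref{thm:tilt} (Appendix~\ref{app:proofTh1}), letting the per-step reward carry block-indexed multipliers. \textbf{Step 0 (Lagrangian).} Attach $\mu_d\ge0$ to the $d$-th entropy floor and $\lambda_{0,d}\ge0$ to the $d$-th cross-entropy ceiling. Drop the constants $-\sum_d\mu_d\beta_d+\sum_d\lambda_{0,d}\gamma_d$. The Lagrangian then becomes
\[
\mathcal L(q)=\E_q[U]+\sum_{t=1}^n\E_{w^{t-1}\sim q}\big[\rho_t(w^{t-1},q_t)\big],
\qquad
\rho_t:=\tfrac{\mu_{d(t)}}{n_{d(t)}}H(q_t)+\tfrac{\lambda_{0,d(t)}}{n_{d(t)}}\textstyle\sum_w q_t\log p_t .
\]
This has exactly the structure of \eqref{eq:app-lag}: a terminal reward $U$ plus a per-step reward that is a concave functional of $q_t$ alone given the prefix. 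The only change is that the coefficients now depend on $t$ through $d(t)$.

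\textbf{Steps 1--3 (dynamic programming and Gibbs).} The principle of optimality, Lemma~\ref{lem:app-poo}, never used that the coefficients of $\rho_t$ are constant in $t$. It used only the tower identity \eqref{eq:app-tower} and the disjointness of subtrees. So it applies verbatim with $J_{n+1}=U$. At each node the problem is $\max_{q_t}\{\langle q_t,c\rangle+\tau_t H(q_t)\}$, where
\[
\tau_t=\mu_{d(t)}/n_{d(t)}=1/\mu^\star_{d(t)},\qquad c(w)=\lambda_{d(t)}\log p_t(w)+J_{t+1}(w^{t-1}w).
\]
Lemma~\ref{lem:gibbs} requires $\tau_t>0$, which is where the assumption that all block entropy floors are active ($\mu_d>0$) enters. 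It gives $q_t^\star\propto\exp(c/\tau_t)=p_t^{\lambda^\star_{d(t)}}\exp(\mu^\star_{d(t)}J_{t+1})$. The value is $J_t=\tau_t\log\sum_w e^{c(w)/\tau_t}$, which is precisely the recursion \eqref{eq:Jrec}. Setting $V_t:=J_{t+1}(w^{t-1}w)$ yields \eqref{eq:dblock-opt}. The $(T_d,\eta_d)$ form follows by factoring out the exponent $\lambda^\star_{d(t)}=1/T_{d(t)}$, exactly as in the $D=1$ case, using $\lambda^\star_d>0$ since $\lambda_{0,d}>0$. I will also note that $V$ is again policy-free, being one backward sweep with block-switching prices. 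Unlike Lemma~\ref{lem:bellman}, the normalization $1/\mu^\star$ in the log-sum-exp changes at block boundaries. Consequently $V$ is defined through $J$ rather than by the closed form \eqref{eq:V-def}.

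\textbf{Step 4 (strong duality), the main obstacle.} I would pass to the occupancy coordinates $d_t(w^{t-1},w)$ as in Step~4 of Appendix~\ref{app:proofTh1}. The objective and the block cross-entropy functionals are linear. Each block causal entropy is a sum of conditional entropies over $t\in B_d$, hence concave. This gives a convex program with $2D$ constraints over a compact polytope. The delicate point is Slater's condition, since it now needs a single policy strictly satisfying all $2D$ inequalities simultaneously. I would phrase the hypothesis ``the block budgets are jointly attainable'' as strict joint feasibility and invoke it directly. A constructive route, if wanted, takes per-block temperature policies that are strictly feasible blockwise. These can be glued because each block's constraints involve only the conditionals at $t\in B_d$, under the occupancy induced by earlier blocks. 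Strong duality then certifies the Lagrangian maximizer at the optimal multipliers as the optimum of (PE-$D$). Complementary slackness pins each $(\lambda_d,\mu_d)$, hence each $(T_d,\eta_d)$, by block $d$'s two active budgets. Setting $D=1$ recovers Theorem~\ref{thm:tilt}.
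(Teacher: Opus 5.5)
Your proposal is correct and follows the same route as the paper's proof sketch. The Lagrangian with block-indexed multipliers is still a finite-horizon entropy-regularized control problem; the principle of optimality plus the Gibbs variational principle at each node give \eqref{eq:Jrec} and \eqref{eq:dblock-opt}; and strong duality with complementary slackness pins each $(T_d,\eta_d)$ to block $d$'s active budgets. Your multiplier bookkeeping ($\tau_t=1/\mu^\star_{d(t)}$ and $c/\tau_t=\lambda^\star_{d(t)}\log p_t+\mu^\star_{d(t)}J_{t+1}$) matches the paper's definitions, and reading ``jointly attainable'' as strict joint feasibility for Slater's condition is a sensible way to make the sketch's duality step precise.
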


\begin{proof}[Proof sketch]
The proof is the proof of Theorem~\ref{thm:tilt} with position-dependent
dual prices. The Lagrangian still decomposes as a finite-horizon control
problem; only the entropy and log-likelihood coefficients are now piecewise
constant in $t$. Applying the Gibbs variational principle at each node gives
\eqref{eq:Jrec} and \eqref{eq:dblock-opt}. Strong duality and complementary
slackness then identify the block prices with the active budgets.
\end{proof}

\subsection{What changes at block boundaries}

For $D=1$, the backward recursion can be flattened into the single
sequence-level free energy used in the main text. With different block prices,
that flattening generally fails across a block boundary: the continuation
computed in one block is re-tempered by the prices of the preceding block.

For example, with two blocks and a boundary at $t_1$, for $t\in B_1$,
\begin{equation}
\label{eq:nested-D2}
V_t
=
\frac{1}{\mu_1^\star}
\log
\sum_{w_{t+1:t_1}}
\prod_{s=t+1}^{t_1}p_s^{\lambda_1^\star}
\left(
\sum_{w_{t_1+1:n}}
\prod_{s=t_1+1}^{n}p_s^{\lambda_2^\star}
e^{\mu_2^\star U(w^n)}
\right)^{\mu_1^\star/\mu_2^\star}.
\end{equation}
The inner and outer log-sum-exp operators merge only when the corresponding
diversity prices agree. Thus the Bellman recursion remains simple, but the
sequence law acquires seam corrections.

\begin{proposition}[Sequence-level block correction]
\label{prop:seq-dblock}
Under Theorem~\ref{thm:dblock},
\begin{equation}
\label{eq:seq-dblock}
q^\star(w^n)
\propto
\left(
\prod_{d=1}^{D}
\prod_{t\in B_d}
p_t(w_t\mid w^{t-1})^{\lambda_d^\star}
\right)
\exp\!\left(
\mu_D^\star U(w^n)
+
\sum_{d=1}^{D-1}
(\mu_d^\star-\mu_{d+1}^\star)
J_{t_d+1}(w^{t_d})
\right).
\end{equation}
When all $\mu_d^\star$ coincide, the seam corrections vanish. If, in
addition, all $\lambda_d^\star$ coincide, \eqref{eq:seq-dblock} reduces to
the sequence-level law of Corollary~1.
\end{proposition}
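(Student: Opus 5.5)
The plan is to telescope the per-token optimum of Theorem~\ref{thm:dblock} along a fixed path $w^n$, exactly as in Corollary~\ref{cor:seqtilt}, while tracking the normalizers. From \eqref{eq:Jrec}, at each step $t$ with block $d=d(t)$ the normalizer of \eqref{eq:dblock-opt} is
\[
Z_t(w^{t-1})=\sum_w p_t(w\mid w^{t-1})^{\lambda^\star_d}e^{\mu^\star_d J_{t+1}(w^{t-1}w)}=e^{\mu^\star_d J_t(w^{t-1})}.
\]
Hence
\[
\log q^\star_t(w_t\mid w^{t-1})=\lambda^\star_{d(t)}\log p_t(w_t\mid w^{t-1})+\mu^\star_{d(t)}\bigl(J_{t+1}(w^t)-J_t(w^{t-1})\bigr).
\]
Summing over $t$ gives the product of the tempered base factors $\prod_d\prod_{t\in B_d}p_t^{\lambda^\star_d}$ plus the weighted sum $\sum_t\mu^\star_{d(t)}\bigl(J_{t+1}(w^t)-J_t(w^{t-1})\bigr)$.

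The key step is the summation by parts of the second sum block by block. Within block $B_d$ the weight $\mu^\star_d$ is constant, so the inner sum telescopes to
\[
\mu^\star_d\bigl(J_{t_d+1}(w^{t_d})-J_{t_{d-1}+1}(w^{t_{d-1}})\bigr).
\]
Summing over $d$, the last block contributes $\mu^\star_D J_{n+1}(w^n)=\mu^\star_D U(w^n)$. The first block contributes $-\mu^\star_1 J_1(\varnothing)$, which is a path-independent constant absorbed into $\propto$. Each interior seam $t_d$, $1\le d\le D-1$, appears twice: with coefficient $+\mu^\star_d$ as the end of block $d$, and with $-\mu^\star_{d+1}$ as the start of block $d+1$. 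This yields exactly $(\mu^\star_d-\mu^\star_{d+1})J_{t_d+1}(w^{t_d})$ and establishes \eqref{eq:seq-dblock}.

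For the specializations, if all $\mu^\star_d$ coincide the seam coefficients vanish, leaving $\prod p_t^{\lambda^\star_{d(t)}}e^{\mu^\star U}$. If in addition all $\lambda^\star_d$ coincide, the product becomes $p(w^n)^{\lambda^\star}$ and we recover Corollary~\ref{cor:seqtilt}.

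The main obstacle is bookkeeping at the boundaries rather than any analytic difficulty. Two points need care. First, one must verify that the index $t_d+1$ in $J_{t_d+1}(w^{t_d})$ matches the first step of $B_{d+1}$. Second, one must check that the normalizer identity uses the prices of the current block $d(t)$, not $d(t+1)$; this is exactly how \eqref{eq:Jrec} is defined. Finiteness of each $J_t$, needed so that the logarithms are legitimate, follows from the same two-sided bounds as in the proof of Lemma~\ref{lem:bellman}, using $p_t\ge\rho$ and $U\in[0,1]$.
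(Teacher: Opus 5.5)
Your proposal is correct and follows the same route as the paper. The paper also writes each optimal conditional as $p_t^{\lambda^\star_{d(t)}}e^{\mu^\star_{d(t)}(J_{t+1}-J_t)}$, using the normalizer identity implied by \eqref{eq:Jrec}, then telescopes within each block and collects the boundary terms at the seams. You simply make explicit the seam bookkeeping that the paper's two-line proof leaves to the reader, including the observation that $-\mu^\star_1 J_1(\varnothing)$ is a path-independent constant.
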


\begin{proof}
Multiplying the conditionals in \eqref{eq:dblock-opt} yields
\[
q^\star(w^n)
=
\left(\prod_t p_t^{\lambda^\star_{d(t)}}\right)
\exp\!\left(
\sum_{t=1}^n
\mu^\star_{d(t)}(J_{t+1}-J_t)
\right).
\]
The sum telescopes within each block. Collecting the remaining boundary terms
gives \eqref{eq:seq-dblock}.
\end{proof}

\subsection{Connection to sample budget and practical adaptive decoding}

For a fixed block schedule
$\{(\beta_d,\gamma_d)\}_{d=1}^D$, the optimizer remains independent of $k$,
because $\passk(q)=1-(1-u_q)^k$ is monotone in $u_q$. As in the global case,
$k$ affects the decoder only through how the diversity budgets are selected.
Hence the ``larger $k$ wants hotter'' mechanism can be applied blockwise:
larger sample budgets may justify more diversity, while the block structure
determines where that diversity is spent.

The distinction between block adaptation and value-aware adaptation is also
useful experimentally. A block temperature schedule can react to position but
not to the realized prefix within a block. By contrast, the continuation
potential $V_t(w\mid w^{t-1})$ is state dependent. This gives a direct
interpretation of the practical comparisons in Section~4: AdapT represents
position-dependent temperature adaptation, while TMAD-Gated uses similar
intervention opportunities but additionally incorporates confidence and a
continuation-success proxy. The block model therefore isolates the benefit
available from coarse position-dependent temperature schedules; any remaining
advantage from explicit continuation information corresponds to structure that
a fixed block schedule cannot express.

Under perfect calibration, a block-temperature decoder remains especially
simple when the diversity price is shared across blocks: the verifier
contribution folds into the block exponents, while unequal diversity prices
produce the seam terms in \eqref{eq:seq-dblock}. More generally, those seam
corrections require explicit continuation-value information rather than a
temperature schedule alone.

\section{How to choose $\beta$} 
\label{app:beta}

\noindent\textbf{Where $k$ enters the decoder.}
Since $\passk(q)=1-(1-u_q)^k$ is strictly increasing in $u_q$, for any fixed budgets $(\beta,\gamma)$ the maximizer of (PE) is the single tilted-temperature policy of Theorem~\ref{thm:tilt} and is \emph{independent of $k$}. The sample budget therefore enters the decoder only through the choice of entropy floor $\beta$: ``larger $k$ wants hotter'' is not a property of the optimal decoder at a fixed operating point, but of how the diversity budget is spent as $k$ grows. Making that dependence explicit requires an exogenous rule for selecting $\beta$ as a function of $k$; a natural one maximizes $\passk$ regularized by a strictly convex, strictly decreasing diversity-cost $S$,
\[
\beta^\star(k):=\argmax_{\beta\in[0,\log|\W|]}\ \big\{1-(1-u(\beta))^k-S(\beta)\big\},\qquad\text{e.g. } S(\beta)=c\,(\log|\W|-\beta)^2 ,
\]
where $u(\beta)$ is the optimal success at floor $\beta$ and $\lambda^\star(\beta)$ the corresponding cross-entropy dual. Under this rule larger $k$ favors a higher floor but,  as one should expect from the non-monotonicity of the pass@$k$ sensitivity $\sigma_k$ (Theorem~\ref{thm:partial}(iii)), only once the operating point is in the high-coverage regime; below it the comparative static reverses. The result below records this; note that $S$ is a modeling choice, so the statement concerns the induced budget schedule, not the optimum of (PE) itself. 

\begin{proposition}[Budget-selection comparative static]\label{prop:mono}
Assume $u:[0,\log|\W|]\to(0,1)$ is $C^1$ and strictly decreasing, $S$ is $C^1$, strictly decreasing and strictly convex, the dual $\lambda^\star(\beta)>0$ is non-increasing in $\beta$ on the budget range under consideration, and each $\beta^\star(k)$ is a non-degenerate interior maximizer. Then at every such $k$,
\[
\operatorname{sign}\tfrac{d}{dk}\beta^\star(k)=\operatorname{sign}\big(\passk(\beta^\star(k))-(1-e^{-1})\big).
\]
Consequently, on any budget range where the operating point clears the coverage line $\passk(\beta^\star(k))\ge 1-e^{-1}$ (equivalently $(1-u(\beta^\star(k)))^k\le e^{-1}$), the floor $\beta^\star(k)$---and hence the induced temperature $T^\star(k):=1/\lambda^\star(\beta^\star(k))$---is non-decreasing, strictly at interior points where the inequality is strict; below the crossover $k_0=-1/\log(1-u^\star)\approx 1/u^\star$ it is non-increasing. Since $\beta^\star(k)\to\log|\W|$ and $\passk(\beta^\star(k))\to1$ as $k\to\infty$, all sufficiently large $k$ lie in the increasing regime.
\end{proposition}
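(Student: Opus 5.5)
The plan is the implicit function theorem applied to the first-order condition, treating $k$ as a continuous parameter. Write $G(\beta,k):=1-(1-u(\beta))^k-S(\beta)$. The maximizer $\beta^\star(k)$ is interior and non-degenerate, so it satisfies $\partial_\beta G=0$ with $\partial_\beta^2G<0$. Then $\beta^\star$ is $C^1$ in $k$, and
\[
\tfrac{d}{dk}\beta^\star(k)=-\frac{\partial_k\partial_\beta G}{\partial_\beta^2 G}\Big|_{\beta^\star(k)},
\]
so $\operatorname{sign}\tfrac{d}{dk}\beta^\star=\operatorname{sign}\,\partial_k\partial_\beta G(\beta^\star(k),k)$. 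This reduces the problem to signing one mixed partial at the optimum.

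Next compute the mixed partial. We have $\partial_\beta G=k(1-u)^{k-1}u'(\beta)-S'(\beta)$. Differentiating in $k$ gives
\[
\partial_k\partial_\beta G=u'(\beta)(1-u)^{k-1}\big(1+k\log(1-u)\big).
\]
Since $u'<0$ and $(1-u)^{k-1}>0$, the sign of this is the sign of $-(1+k\log(1-u))$. That is positive exactly when $k\log(1-u)<-1$, i.e. $(1-u)^k<e^{-1}$, i.e. $\passk>1-e^{-1}$. This gives the stated sign identity and the coverage-line equivalence.

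The first-order condition fixes the sign of $u'$ at the optimum, which I would note as a consistency check. From $\partial_\beta G=0$ we get $k(1-u)^{k-1}u'=S'<0$, which agrees with $u'<0$. Strict convexity of $S$ and the non-degeneracy assumption give $\partial_\beta^2G<0$.

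Then transfer monotonicity to the temperature. On a range where $\passk(\beta^\star(k))\ge1-e^{-1}$, the floor $\beta^\star$ is non-decreasing, strictly where the inequality is strict. Since $\lambda^\star(\beta)$ is non-increasing and positive, $T^\star=1/\lambda^\star(\beta^\star(k))$ is non-decreasing. Strictness of $T^\star$ would need strict decrease of $\lambda^\star$, so I would claim it only in that sense. For the crossover, $(1-u^\star)^{k_0}=e^{-1}$ gives $k_0=-1/\log(1-u^\star)\approx1/u^\star$ for small $u^\star$. Below the crossover the sign reverses.

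The main obstacle is the large-$k$ claim that $\beta^\star(k)\to\log|\W|$ and $\passk\to1$. I would argue as follows. Suppose $\passk(\beta^\star(k))$ stayed bounded away from $1$ along a subsequence. Compare with any fixed $\beta_0$: there $u(\beta_0)>0$, so $1-(1-u(\beta_0))^k\to1$, and $S$ is bounded, so optimality forces $1-(1-u(\beta^\star))^k\ge1-o(1)-\sup S+\inf S$. This alone does not give the limit, so I would sharpen it using the first-order condition. The identity $k(1-u)^{k-1}|u'|=|S'|$, with $|S'|$ bounded below on compact subranges away from the right endpoint, says $(1-u)^{k-1}$ must be of order $1/k$ unless $\beta^\star$ approaches $\log|\W|$. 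If $\beta^\star$ stays in a compact set where $u\ge u_0>0$, then $k(1-u_0)^{k-1}\to0$, which contradicts $|S'|$ being bounded below there. So $\beta^\star\to\log|\W|$ in the interior regime considered, and $(1-u)^{k}\to0$ provided $u(\log|\W|)>0$, which holds because $u$ maps into $(0,1)$. This step, and the implicit interiority assumption it needs, is where I expect care to be required.
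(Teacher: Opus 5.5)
Your core argument is the paper's. You implicitly differentiate the stationarity condition, reduce $\operatorname{sign}\frac{d\beta^\star}{dk}$ to the sign of the mixed partial $u'(1-u)^{k-1}\bigl(1+k\log(1-u)\bigr)$, and compose with the non-increasing $\lambda^\star$ to get the statement for $T^\star$.

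Two of your side remarks are sharper than the paper's write-up.
\begin{itemize}
\item The paper asserts $v'=-u'>0$ from strict monotonicity of $u$, which does not follow pointwise, since $u'$ may vanish at isolated points. Your use of $k(1-u)^{k-1}u'=S'$ does give $u'(\beta^\star)<0$ exactly where it is needed. This requires $S'<0$ off the right endpoint, which follows because strict convexity makes $S'$ strictly increasing and monotonicity gives $S'\le 0$.
\item You are right that strictness for $T^\star$ needs $\lambda^\star$ to be strictly decreasing. The stated hypotheses do not supply this, although the statement (and the paper's corollary) claims it.
\end{itemize}
A minor point: $\partial_\beta^2 G<0$ comes from the non-degeneracy assumption alone. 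Strict convexity of $S$ does not ensure it, because $F_k''$ can be positive.

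The genuine difference is the large-$k$ step. The paper uses the comparison idea you set aside, but on the whole interval. Since $u$ is continuous and positive on the compact $[0,\log|\W|]$, it satisfies $u\ge u_{\min}>0$. Hence $(1-u(\beta))^k\le(1-u_{\min})^k\to0$ uniformly, and $G_k\to 1-S$ uniformly. Because $1-S$ has the unique maximizer $\log|\W|$, this forces $\beta^\star(k)\to\log|\W|$. The same uniform bound gives $\passk(\beta^\star(k))\to1$ immediately, with no need to locate $\beta^\star$.

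Your route instead contradicts the first-order condition. On $[0,\log|\W|-\delta]$ you have $|S'|\ge|S'(\log|\W|-\delta)|>0$, while
\[
k(1-u)^{k-1}|u'|\le k(1-u_{\min})^{k-1}\max|u'|\to0 .
\]
This is valid, using that $u'$ is bounded because $u$ is $C^1$ on a compact interval. It even shows that $|S'(\beta^\star)|$ is exponentially small in $k$, which quantifies how fast $\beta^\star$ approaches the endpoint. However, it depends on interiority, or on the endpoint KKT inequality, which is exactly the delicate point you flagged. The paper's argument uses no derivatives and no interiority, so it is the simpler way to close this step.
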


We prove Proposition~\ref{prop:mono} next. But first we recall the budget-selection objective
\[
G_k(\beta):=\underbrace{1-(1-u(\beta))^k}_{=:F_k(\beta)}-S(\beta),
\qquad
\beta^\star(k):=\argmax_{\beta\in[0,\log|\W|]}G_k(\beta),
\]
with $u:[0,\log|\W|]\to(0,1)$ the optimal success at floor $\beta$ ($C^1$, strictly
decreasing), $S$ the $C^1$, strictly decreasing, strictly convex diversity cost, and
$T^\star(k)=1/\lambda^\star(\beta^\star(k))$ with $\lambda^\star(\cdot)>0$
non-increasing on the budget range under consideration. It is convenient to write $v(\beta):=1-u(\beta)\in(0,1)$, so $v$ is
strictly increasing with $v'(\beta)=-u'(\beta)>0$. We treat the budget $k\ge1$ as a
continuous parameter; the integer statement follows by restriction, and a discrete
cross-check. Figure~\ref{fig:tstar} illustrates the comparative static
synthetically and on a real language model.
\begin{figure}[!htbp]
\centering

\begin{subfigure}[t]{0.48\textwidth}
    \centering
    \includegraphics[width=\linewidth]{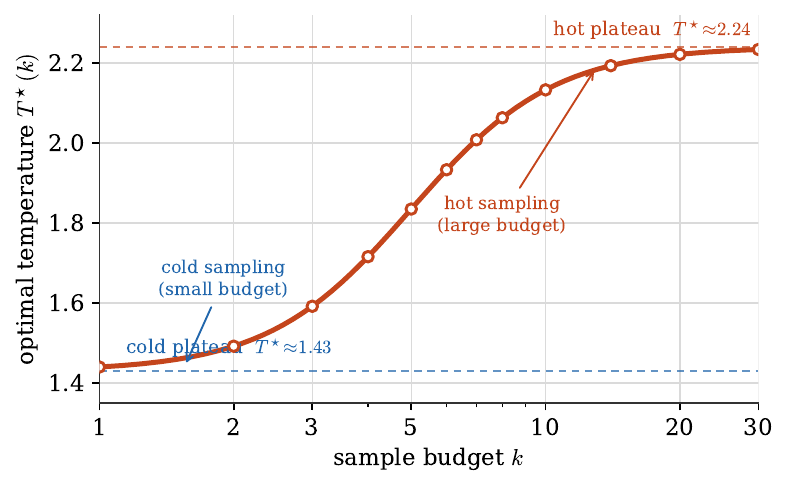}
    \caption{Synthetic illustration.}
    \label{fig:tstar-synth}
\end{subfigure}
\hfill
\begin{subfigure}[t]{0.48\textwidth}
    \centering
    \includegraphics[width=\linewidth]{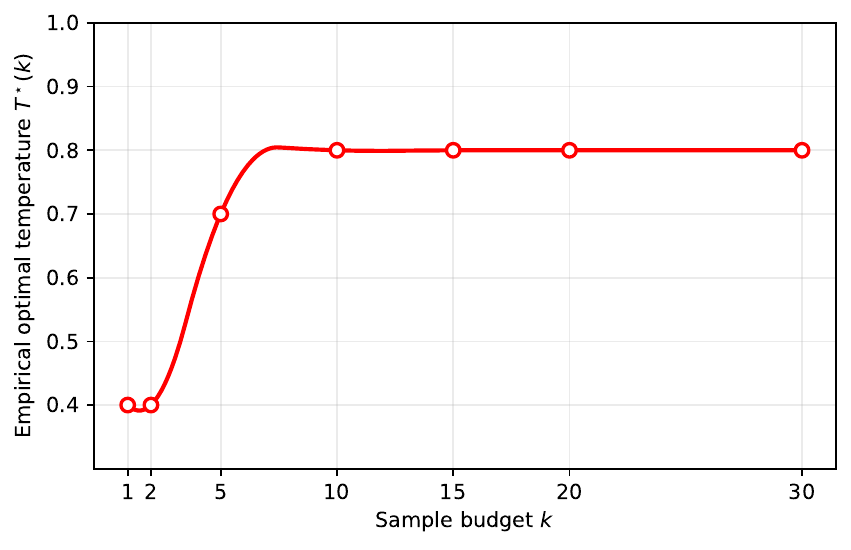}
    \caption{Empirical HumanEval sweep.}
    \label{fig:tstar-human}
\end{subfigure}

\caption{\textbf{Temperature as a function of sample budget.}
\textbf{(a)} For a synthetic instance, we select the entropy-floor point
maximizing pass@$k$ under the diversity cost $S$ and report the induced
temperature $T^\star(k)$.
\textbf{(b)} On 50 HumanEval problems with Qwen2.5-Coder-1.5B, we select the
best fixed temperature from $T\in[0.4,1.2]$ for each sample budget.
Both show the non-decreasing temperature--budget relation predicted by
Proposition~\ref{prop:mono}.}
\label{fig:tstar}
\end{figure}

\paragraph{The exact comparative static.}
Both derivatives below are elementary:
\begin{equation}\label{eq:app-Fk}
F_k'(\beta)=-k\,v(\beta)^{k-1}v'(\beta)<0,
\qquad
\partial_k F_k'(\beta)=-v'(\beta)\,v(\beta)^{k-1}\big(1+k\ln v(\beta)\big).
\end{equation}
The first shows $F_k$ is strictly decreasing in $\beta$ (a higher entropy floor
lowers coverage); the second is obtained by differentiating
$k\,v^{k-1}=k\,e^{(k-1)\ln v}$ in $k$.

\begin{lemma}[Sign of the comparative static]\label{lem:app-sign}
At a non-degenerate interior maximiser $\beta^\star(k)$ (i.e.\
$G_k'(\beta^\star)=0$ and $G_k''(\beta^\star)<0$), the selection $\beta^\star(\cdot)$
is $C^1$ near $k$ and
\[
\operatorname{sign}\frac{d\beta^\star}{dk}
=\operatorname{sign}\big(\partial_k F_k'(\beta^\star)\big)
=\operatorname{sign}\big(k(-\ln v^\star)-1\big)
=\operatorname{sign}\big(\passk(\beta^\star)-(1-e^{-1})\big),
\qquad v^\star:=1-u(\beta^\star(k)).
\]
\end{lemma}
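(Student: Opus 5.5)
The plan is the implicit function theorem applied to the first-order condition $G_k'(\beta)=0$, treating $k$ as a continuous parameter. Write $\Gamma(\beta,k):=G_k'(\beta)=F_k'(\beta)-S'(\beta)$. Since $u$ and $S$ are $C^1$, I need $\Gamma$ to be $C^1$ jointly in $(\beta,k)$ in order to invoke the theorem. This holds if $u$ is $C^2$, which is implicit in the non-degeneracy hypothesis $G_k''(\beta^\star)<0$; that hypothesis already presupposes a second derivative. I would state this regularity reading explicitly. With $\Gamma(\beta^\star(k),k)=0$ and $\partial_\beta\Gamma=G_k''(\beta^\star)<0\neq0$, the implicit function theorem gives a unique $C^1$ branch $\beta^\star(\cdot)$ near $k$. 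Non-degeneracy makes it a strict local maximizer, so it coincides with the argmax selection locally, provided the maximizer is isolated. I take this isolation to be part of the non-degenerate-interior assumption. The branch satisfies
\[
\frac{d\beta^\star}{dk}=-\frac{\partial_k\Gamma(\beta^\star,k)}{\partial_\beta\Gamma(\beta^\star,k)}=-\frac{\partial_kF_k'(\beta^\star)}{G_k''(\beta^\star)}.
\]
Because $S$ does not depend on $k$, $\partial_k\Gamma=\partial_kF_k'$. The denominator is negative, so $\operatorname{sign}(d\beta^\star/dk)=\operatorname{sign}(\partial_kF_k'(\beta^\star))$, which is the first equality.

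For the second equality, I use the formula \eqref{eq:app-Fk}:
\[
\partial_kF_k'(\beta)=-v'(\beta)v(\beta)^{k-1}\big(1+k\ln v(\beta)\big).
\]
Here $v'>0$ and $v^{k-1}>0$, so the sign is $\operatorname{sign}(-(1+k\ln v^\star))=\operatorname{sign}(k(-\ln v^\star)-1)$. For the third, note that $k(-\ln v^\star)>1$ is equivalent to $(v^\star)^k<e^{-1}$. Since $t\mapsto e^{-t}$ is strictly decreasing, the map preserves strict inequality and equality. Also $\passk(\beta^\star)=1-(v^\star)^k$, so the condition is equivalent to $\passk(\beta^\star)>1-e^{-1}$, with equality cases matching. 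This yields the triple sign identity.

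The only delicate point is the first step: the legitimacy of the implicit-function differentiation and the identification of the local branch with $\beta^\star$. I expect this to be the main obstacle. The algebraic sign steps are routine once $v\in(0,1)$ guarantees that $\ln v$ is finite and negative.
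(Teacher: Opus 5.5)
Your proposal is correct and is essentially the paper's proof: apply the implicit function theorem to the stationarity map $F_k'-S'$, read the sign off the negative denominator $G_k''(\beta^\star)<0$, then use the factorization $\partial_kF_k'=-v'v^{k-1}(1+k\ln v)$ and the equivalence $(v^\star)^k<e^{-1}\iff\passk(\beta^\star)>1-e^{-1}$. The regularity caveats you raise are ones the paper leaves implicit, with two refinements: the implicit function theorem needs $S$ (not only $u$) to be $C^2$, and identifying the implicit-function branch with the argmax selection requires the global maximizer at $k$ to be \emph{unique}, after which continuity of the argmax finishes the argument; isolation of a strict local maximum is automatic and is not the real condition.
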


\begin{proof}
Write the stationarity map $\Phi(\beta,k):=\partial_\beta G_k(\beta)=F_k'(\beta)-S'(\beta)$.
By hypothesis $\Phi(\beta^\star,k)=0$ and
$\partial_\beta\Phi(\beta^\star,k)=G_k''(\beta^\star)<0\neq0$, so the implicit
function theorem gives a $C^1$ selection $\beta^\star(\cdot)$ with
\[
\frac{d\beta^\star}{dk}
=-\frac{\partial_k\Phi(\beta^\star,k)}{\partial_\beta\Phi(\beta^\star,k)}
=-\frac{\partial_k F_k'(\beta^\star)}{G_k''(\beta^\star)},
\]
where $\partial_k\Phi=\partial_k F_k'$ because $S$ is $k$-free. As
$G_k''(\beta^\star)<0$, the sign equals that of $\partial_k F_k'(\beta^\star)$. By
\eqref{eq:app-Fk}, and since $v'>0$, $v^{k-1}>0$,
\[
\operatorname{sign}\big(\partial_k F_k'(\beta^\star)\big)
=\operatorname{sign}\big(-(1+k\ln v^\star)\big)
=\operatorname{sign}\big(k(-\ln v^\star)-1\big).
\]
Finally $k(-\ln v^\star)-1>0\iff (v^\star)^k<e^{-1}\iff
\passk(\beta^\star)=1-(v^\star)^k>1-e^{-1}$.
\end{proof}
\begin{corollary}[Hotter sampling for larger $k$]\label{prop:app-mono}
Under the stated regularity, on any budget interval $\mathcal K$ on which the optimal
operating point clears the coverage line,
\[
\passk\big(\beta^\star(k)\big)\ \ge\ 1-e^{-1}
\qquad\Longleftrightarrow\qquad
\big(1-u(\beta^\star(k))\big)^k\le e^{-1}
\qquad(k\in\mathcal K),
\]
the floor $\beta^\star(k)$ is non-decreasing on $\mathcal K$, strictly at every
interior $k$ where the inequality is strict; consequently
$T^\star(k)=1/\lambda^\star(\beta^\star(k))$ is non-decreasing on $\mathcal K$,
strictly at those $k$. Moreover this regime always obtains for all sufficiently
large $k$: $\beta^\star(k)\to\log|\W|$ and $\passk(\beta^\star(k))\to1$ as
$k\to\infty$.
\end{corollary}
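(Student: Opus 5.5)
The plan is to integrate the local sign information of Lemma~\ref{lem:app-sign} over the interval $\mathcal K$, transfer monotonicity from $\beta^\star$ to $T^\star$ through the monotone dual, and then treat the large-$k$ limit on its own.

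\emph{Monotonicity of the floor.} Fix $\mathcal K$ on which every $\beta^\star(k)$ is a non-degenerate interior maximiser and $(1-u(\beta^\star(k)))^k\le e^{-1}$. Lemma~\ref{lem:app-sign} gives two things at each $k\in\mathcal K$. First, $\beta^\star(\cdot)$ is $C^1$ near $k$ by the implicit function theorem. Second,
\[
\operatorname{sign}\frac{d\beta^\star}{dk}=\operatorname{sign}\big(\passk(\beta^\star(k))-(1-e^{-1})\big)\ \ge 0 .
\]
Since $\beta^\star$ is locally $C^1$ at every point of $\mathcal K$, it is $C^1$ on the interval. A nonnegative derivative then yields that $\beta^\star$ is non-decreasing by the mean value theorem. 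At points where the coverage inequality is strict the derivative is strictly positive, which gives strict local increase there.

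\emph{Transfer to temperature.} The map $\beta\mapsto\lambda^\star(\beta)$ is positive and non-increasing by hypothesis. Hence $\beta\mapsto 1/\lambda^\star(\beta)$ is non-decreasing, and composing with a non-decreasing $\beta^\star(k)$ gives $T^\star(k)$ non-decreasing. Strictness of $T^\star$ needs strict decrease of $\lambda^\star$ at the relevant $\beta$. I would either add that remark or read ``strictly'' as inherited wherever $\lambda^\star$ is strictly monotone, which is the honest version of the statement.

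\emph{Large-$k$ regime.} This is the main obstacle, because it is a global statement about argmaxima rather than a local one. The plan is as follows.
\begin{itemize}
\item For fixed $\beta$, $F_k(\beta)\to 1$ and $F_k'(\beta)=-k\,v^{k-1}v'\to0$ uniformly on compacts, since $v<1$ on $[0,\log|\W|]$ and $\sup v<1$ by compactness and continuity.
\item Hence $G_k\to 1-S$ uniformly, and $1-S$ is uniquely maximised at $\beta=\log|\W|$ because $S$ is strictly decreasing.
\item By a standard argmax-continuity argument for uniform convergence to a function with a unique maximiser on a compact set, $\beta^\star(k)\to\log|\W|$.
\item Then $\passk(\beta^\star(k))\ge 1-(\sup v)^k\to1$, so the coverage line $1-e^{-1}$ is cleared for all large $k$.
\end{itemize}
The subtlety is that $\beta^\star(k)$ tends to the boundary $\log|\W|$, which is in tension with the interiority hypothesis. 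I would state the limit claim for the argmax over the closed interval. The monotonicity conclusion would apply on whatever sub-range the interior/non-degeneracy hypotheses actually hold, with the coverage inequality there guaranteed by the bound $\passk\ge 1-(\sup v)^k$.
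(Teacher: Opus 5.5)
Your proposal is correct and follows essentially the same route as the paper: you integrate the sign from Lemma~\ref{lem:app-sign} over $\mathcal K$, compose with the non-decreasing map $\beta\mapsto 1/\lambda^\star(\beta)$, and obtain the large-$k$ claim from uniform convergence $G_k\to 1-S$. Your use of $\sup v<1$ on the whole interval, with the bound $\passk(\beta)\ge 1-(\sup v)^k$, is a slightly cleaner version of the paper's restriction to $[\beta_0,\log|\W|]$. Your caveat on strictness is also well-founded: the paper's proof only composes monotone maps, so strict increase of $T^\star$ genuinely needs strict decrease of $\lambda^\star$ at the relevant $\beta$, and the stated hypothesis (only ``non-increasing'') does not supply this.
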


\begin{proof}
On $\mathcal K$ the coverage inequality is exactly $(v^\star)^k\le e^{-1}$, i.e.\
$k(-\ln v^\star)-1\ge0$, so Lemma~\ref{lem:app-sign} gives
$\frac{d}{dk}\beta^\star(k)\ge0$ there, with strict inequality where
$(v^\star)^k<e^{-1}$. Integrating the $C^1$ selection over $\mathcal K$,
$\beta^\star$ is non-decreasing (strict at interior strict points). Since
$\lambda^\star(\cdot)$ is non-increasing, $\beta\mapsto1/\lambda^\star(\beta)$ is
non-decreasing, and composing with the non-decreasing $\beta^\star(\cdot)$ gives the
claim for $T^\star$.

For the large-$k$ statement, fix any $\beta_0<\log|\W|$. Because $u>0$ on
$[0,\log|\W|]$, $\min_{[\beta_0,\log|\W|]}u=:u_0>0$ and hence
$F_k(\beta)=1-(1-u(\beta))^k\to1$ uniformly on $[\beta_0,\log|\W|]$. Thus
$G_k\to1-S$ uniformly there, and since $S$ is strictly decreasing its maximiser over
$[\beta_0,\log|\W|]$ is $\log|\W|$; as $\beta_0$ is arbitrary,
$\beta^\star(k)\to\log|\W|$. Then $u(\beta^\star(k))\to u(\log|\W|)>0$, so
$\passk(\beta^\star(k))=1-(1-u(\beta^\star(k)))^k\to1\ge1-e^{-1}$, placing all large
$k$ in the regime.
\end{proof}

\begin{remark}[Where the naive global claim fails, and why it is expected]\label{rem:app-reverse}
Below the coverage line the comparative static \emph{reverses}: by
Lemma~\ref{lem:app-sign}, $\frac{d}{dk}\beta^\star(k)<0$ whenever
$\passk(\beta^\star(k))<1-e^{-1}$. The crossover budget is
\[
k_0(u^\star)=\frac{1}{-\ln(1-u^\star)}\ \approx\ \frac{1}{u^\star}\quad(u^\star\to0),
\]
which is exactly the maximiser of the pass@$k$ sensitivity
$\sigma_k=k(1-u)^{k-1}$ appearing in Theorem~\ref{thm:partial}(iii). At a base success rate
$u^\star\approx0.2$ one has $k_0\approx4.5$, so at $k=1,\dots,4$ the optimal floor
can \emph{decrease} in $k$ before the monotone regime takes over at $k\gtrsim5$; a
grid search over the quadratic cost $S(\beta)=c(\log|\W|-\beta)^2$ reproduces this
reversal precisely at the $1-e^{-1}$ threshold. The non-monotonicity is therefore
not a defect of the selection rule but the same low-coverage effect the paper
already isolates for $\sigma_k$; the folklore ``larger $k$ wants hotter'' is a
theorem precisely on the high-coverage side of $k_0$.
\end{remark}
\paragraph{Choosing the budgets $\beta$ and $\gamma$.}
The two budgets decouple, which makes them easy to set in practice. The
plausibility ceiling $\gamma$ fixes the \emph{temperature scale}: on the
tempered family $q_a\propto p^{a}$ the per-token cross-entropy
$\mathrm{CE}(q_a,p)$ is strictly monotone in the exponent $a=1/T$
(proof of Theorem~\ref{thm:collapse}), so $\gamma\leftrightarrow T$ is a
bijection. We therefore anchor $\gamma$ at a reference temperature $T_0$
trusted for the model and domain, $\gamma:=\mathrm{CE}(q_{1/T_0},p)$, computed
with a single forward pass over the calibration set and no sampling; $\gamma$
then reads as ``$q$ may drift from $p$ by at most the per-token cross-entropy
$T_0$ induces,'' and is loosened to explore or tightened for precision. The
entropy floor $\beta$ fixes \emph{diversity} and carries the budget dependence:
following Proposition~\ref{prop:mono}, we set
$\beta^\star(k)=\arg\max_{\beta\in[0,\log|W|]}\{\,1-(1-u(\beta))^k-S(\beta)\,\}$
with a strictly convex, decreasing diversity cost $S(\beta)=c(\log|W|-\beta)^2$,
whose maximizer is non-decreasing in $k$ on the high-coverage regime---cold
floors for small $k$, hot floors for large $k$. The single constant $c$ trades
coverage against diversity and is fixed once per domain on validation pass@$k$;
absent an optimization, any schedule rising from $H(p)$ toward $\log|W|$
reproduces the same behavior. Feasibility requires only joint attainability:
the base model realizes $(\beta,\gamma)=(H(p),H(p))$, so any $\beta\le H(p)$
with $\gamma\ge H(p)$ admits $q=p$, and driving $\beta$ above $H(p)$ demands a
$\gamma$ loose enough to license the hotter optimum. Both quantities are
occupancy-weighted per-token averages, hence estimated from a handful of
calibration prompts at one forward pass each---exactly what CTD's Phase~1 computes.

\section{Proof of Theorems 2 and 3}
\label{app:thm23}

Throughout this appendix $\theta=(\lambda,\mu)^\top\in\Theta\subset\mathbb R^2$, $\ell_t(w):=\log p_t(w\mid w^{t-1})$,
$V_t(w):=V_t(w\mid w^{t-1})$, $\phi_t(w):=(\ell_t(w),V_t(w))^\top$, and
\[
q^\theta_t(w) \;=\; \frac{\exp\big(\lambda\,\ell_t(w)+\mu\,V_t(w)\big)}{\sum_{w'\in\mathcal W}\exp\big(\lambda\,\ell_t(w')+\mu\,V_t(w')\big)}
\;\propto\; p_t(w)^\lambda\,e^{\mu V_t(w)},
\]
as in Section~\ref{sec:tracking}, with cumulative constraint map
\[
F_t(\theta) \;=\;
\begin{pmatrix}
\sum_{i=1}^t \big(\mathrm{CE}(q^\theta_i,p_i)-\gamma\big)\\[2pt]
\sum_{i=1}^t \big(H(q^\theta_i)-\beta\big)
\end{pmatrix},
\]
so that the offline active KKT point $\theta^\star=(\lambda^\star,\mu^\star)$ satisfies $F_n(\theta^\star)=0$, the prefix
tracker satisfies $F_t(\theta_t)=0$ for every $t$, and $R_t:=F_t(\theta^\star)$, $R:=\max_{1\le t\le n}\|R_t\|_2$ as in the
main text. Throughout, $V_t$ is held fixed with respect to $\theta$, isolating the tracking question addressed here
from the separate question of how $V_t$ is estimated.

\subsection{Proof of Theorem~\ref{thm:prefix-tracking} (Prefix-price tracking)}

Recall Assumption~\ref{ass:jacobian}: for every $t$ at which \eqref{eq:prefix-root} is solved,
$J_t := \int_0^1 DF_t\big(\theta_t+s(\theta^\star-\theta_t)\big)\,ds$ satisfies $\sigma_{\min}(J_t)\ge ct$ for a constant
$c>0$ independent of $t$.

\begin{proof}
By the fundamental theorem of calculus along the segment joining $\theta_t$ and $\theta^\star$,
\[
F_t(\theta^\star) - F_t(\theta_t) \;=\; \int_0^1 DF_t\big(\theta_t+s(\theta^\star-\theta_t)\big)\,ds\;(\theta^\star-\theta_t)
\;=\; J_t\,(\theta^\star-\theta_t).
\]
Since $F_t(\theta_t)=0$ and $F_t(\theta^\star)=R_t$ by definition, this reads $R_t = J_t(\theta^\star-\theta_t)$. Taking
norms and using $\|J_t x\|_2 \ge \sigma_{\min}(J_t)\|x\|_2$,
\[
\|R_t\|_2 \;\ge\; \sigma_{\min}(J_t)\,\|\theta^\star-\theta_t\|_2 \;\ge\; ct\,\|\theta_t-\theta^\star\|_2,
\]
by Assumption~\ref{ass:jacobian}. Rearranging, and using $\|R_t\|_2\le R$ for every $t$, gives
$\|\theta_t-\theta^\star\|_2 \le \|R_t\|_2/(ct) \le R/(ct)$.
\end{proof}

\begin{remark}
No global uniqueness of the root of $F_t$ is assumed. The bound only requires that the particular root $\theta_t$
selected by the tracker and the offline point $\theta^\star$ are joined by a segment on which the averaged Jacobian
$J_t$ is uniformly nonsingular in the sense of Assumption~\ref{ass:jacobian}; this is exactly what the mean-value
step above uses.
\end{remark}

\subsection{A conditional KL perturbation lemma}

The bridge from the parameter-space guarantee of Theorem~\ref{thm:prefix-tracking} to a distributional and, ultimately,
a $\mathrm{pass@}k$ guarantee goes through the local geometry of the exponential family $q^\theta_t$, not through
summing total-variation distances directly (the KL-versus-TV discussion in Section~\ref{sec:tracking}). For $\theta\in\Theta$ let
$A_t(\theta) := \log\sum_{w\in\mathcal W}\exp\big(\theta^\top\phi_t(w)\big)$ be the log-partition function of $q^\theta_t$,
so that $\nabla A_t(\theta) = \mathbb E_{q^\theta_t}[\phi_t(W)]$ and $\nabla^2 A_t(\theta) = \mathrm{Cov}_{q^\theta_t}(\phi_t(W))$.

\begin{lemma}[Conditional KL perturbation]
\label{lem:kl-perturbation}
Under Assumption~\ref{ass:cov},
\[
D_{\mathrm{KL}}\big(q^{\theta_t}_t \,\big\|\, q^{\theta^\star}_t\big) \;\le\; \frac{M}{2}\,\|\theta_t-\theta^\star\|_2^2.
\]
\end{lemma}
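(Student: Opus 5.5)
The plan is to use the standard identity expressing the KL divergence between two members of an exponential family as a Bregman divergence of the log-partition function, and then bound that divergence by a second-order Taylor expansion along the segment joining the two parameters. Write $\theta:=\theta_t$ and $\theta':=\theta^\star$. Since $q^\theta_t(w)=\exp(\theta^\top\phi_t(w)-A_t(\theta))$, we get
\[
D_{\mathrm{KL}}\big(q^{\theta}_t\,\big\|\,q^{\theta'}_t\big)
=\E_{q^\theta_t}\big[(\theta-\theta')^\top\phi_t(W)\big]-A_t(\theta)+A_t(\theta')
=A_t(\theta')-A_t(\theta)-\nabla A_t(\theta)^\top(\theta'-\theta),
\]
using $\nabla A_t(\theta)=\E_{q^\theta_t}[\phi_t(W)]$. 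Here $V_t$ is held fixed in $\theta$, as stipulated in this appendix. Only this fact and finiteness of $\W$ are needed for $A_t$ to be smooth (indeed real-analytic) on $\Theta$.

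Next apply Taylor's theorem with integral remainder to $g(s):=A_t(\theta+s(\theta'-\theta))$ on $[0,1]$. This gives
\[
D_{\mathrm{KL}}=\int_0^1(1-s)\,(\theta'-\theta)^\top\nabla^2A_t\big(\theta+s(\theta'-\theta)\big)(\theta'-\theta)\,ds .
\]
Every point $\theta+s(\theta'-\theta)$ lies on the segment joining $\theta_t$ and $\theta^\star$. On that segment $\nabla^2A_t=\Cov_{q_t^{\cdot}}(\phi_t(W))$ has largest eigenvalue at most $M$ by Assumption~\ref{ass:cov}. Hence the integrand is at most $(1-s)M\|\theta_t-\theta^\star\|_2^2$, and integrating $\int_0^1(1-s)\,ds=\tfrac12$ yields the claim.

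The only point needing care is the domain of Assumption~\ref{ass:cov}. The Hessian bound must hold on the whole closed segment, not just at its endpoints, and that is exactly how the assumption is phrased. For completeness, I would also note the sufficient condition stated in the assumption. With $|\ell_t|\le\log(1/\rho)$ and $|V_t|\le V_{\max}$, the trace of the covariance is at most $\E\|\phi_t\|^2\le\log^2(1/\rho)+V_{\max}^2$, which supplies such an $M$ uniformly in $\theta$. I expect no real obstacle. The main thing to get right is the sign and direction convention of the Bregman identity, with KL$(q^{\theta_t}\|q^{\theta^\star})$ corresponding to expanding $A_t$ around $\theta_t$.
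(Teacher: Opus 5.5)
Your proposal is correct and follows the paper's proof essentially verbatim: the Bregman-divergence identity for the exponential family $q_t^\theta$, Taylor's theorem with integral remainder along the segment from $\theta_t$ to $\theta^\star$, the eigenvalue bound $M$ from Assumption~\ref{ass:cov} on that segment, and $\int_0^1(1-s)\,ds=\tfrac12$. Your extra check that $\lambda_{\max}\le\operatorname{tr}\Cov\le\log^2(1/\rho)+V_{\max}^2$ correctly justifies the sufficient condition stated in the assumption, which the paper asserts without proof.
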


\begin{proof}
For an exponential family with natural parameter $\theta$ and log-partition $A_t$, the KL divergence between two
members is exactly the Bregman divergence of $A_t$:
\[
D_{\mathrm{KL}}\big(q^{\theta_t}_t\,\|\,q^{\theta^\star}_t\big)
= A_t(\theta^\star) - A_t(\theta_t) - \nabla A_t(\theta_t)^\top(\theta^\star-\theta_t).
\]
By Taylor's theorem with integral remainder,
\[
A_t(\theta^\star) - A_t(\theta_t) - \nabla A_t(\theta_t)^\top(\theta^\star-\theta_t)
= \int_0^1 (1-s)\,(\theta^\star-\theta_t)^\top \nabla^2 A_t\big(\theta_t+s(\theta^\star-\theta_t)\big)(\theta^\star-\theta_t)\,ds.
\]
Since $\nabla^2 A_t(\theta) = \mathrm{Cov}_{q^\theta_t}(\phi_t(W))$, Assumption~\ref{ass:cov} bounds its largest
eigenvalue by $M$ at every point on the segment, so the integrand is at most
$(1-s)\,M\,\|\theta^\star-\theta_t\|_2^2$; integrating over $s\in[0,1]$ contributes a factor $\int_0^1(1-s)\,ds=\tfrac12$,
giving the claim.
\end{proof}

\subsection{Proof of Theorem~\ref{thm:passk-tracking} (Sequence-law and pass@k stability)}

Let $Q$ denote the autoregressive sequence law induced by the online conditionals $\{q^{\theta_t}_t\}_{t=1}^n$, and
$Q^\star$ the sequence law induced by the offline optimum $\{q^{\theta^\star}_t\}_{t=1}^n$.

\begin{proof}
\emph{KL bound.} By the chain rule for relative entropy on autoregressive sequence laws,
\[
D_{\mathrm{KL}}(Q\,\|\,Q^\star) \;=\; \sum_{t=1}^n \mathbb E_Q\Big[D_{\mathrm{KL}}\big(q^{\theta_t}_t(\cdot\mid W^{t-1})\,\big\|\,q^{\theta^\star}_t(\cdot\mid W^{t-1})\big)\Big].
\]
Applying Lemma~\ref{lem:kl-perturbation} termwise, then Theorem~\ref{thm:prefix-tracking} ($\|\theta_t-\theta^\star\|_2\le R/(ct)$),
\[
D_{\mathrm{KL}}(Q\,\|\,Q^\star) \;\le\; \frac{M}{2}\sum_{t=1}^n \|\theta_t-\theta^\star\|_2^2
\;\le\; \frac{M}{2}\sum_{t=1}^n \frac{R^2}{c^2t^2}
\;=\; \frac{MR^2}{2c^2}\sum_{t=1}^n\frac1{t^2}
\;\le\; \frac{M\pi^2}{12\,c^2}\,R^2,
\]
using $\sum_{t=1}^\infty 1/t^2=\pi^2/6$.

\emph{TV and success-probability bounds.} Pinsker's inequality, $\mathrm{TV}(Q,Q^\star)\le\sqrt{\tfrac12 D_{\mathrm{KL}}(Q\|Q^\star)}$,
gives $\mathrm{TV}(Q,Q^\star) \le \tfrac{\pi\sqrt M}{2\sqrt6\,c}\,R$. Since $U:\mathcal W^n\to[0,1]$ is bounded,
\[
|u_Q - u_{Q^\star}| \;=\; \big|\mathbb E_Q[U]-\mathbb E_{Q^\star}[U]\big| \;\le\; \mathrm{TV}(Q,Q^\star) \;\le\; \frac{\pi\sqrt M}{2\sqrt6\,c}\,R.
\]

\emph{$\mathrm{pass@}k$ bounds.} With $g_k(u):=1-(1-u)^k$, $g_k'(u)=k(1-u)^{k-1}\in[0,k]$ for $u\in[0,1]$, so $g_k$ is
$k$-Lipschitz and $|\mathrm{pass@}k(Q)-\mathrm{pass@}k(Q^\star)| = |g_k(u_Q)-g_k(u_{Q^\star})| \le k\,|u_Q-u_{Q^\star}|$;
combined with $\mathrm{pass@}k\in[0,1]$ this gives the stated
$\big|\mathrm{pass@}k(Q)-\mathrm{pass@}k(Q^\star)\big| \le \min\big\{1,\ k\,\tfrac{\pi\sqrt M}{2\sqrt6\,c}R\big\}$.
For the sharper form, the mean value theorem gives $\xi$ between $u_Q$ and $u_{Q^\star}$ with
$|g_k(u_Q)-g_k(u_{Q^\star})| = k(1-\xi)^{k-1}|u_Q-u_{Q^\star}|$; since $k(1-u)^{k-1}$ is decreasing in $u$ and
$\xi\ge u_{\min}:=\min\{u_Q,u_{Q^\star}\}$, this is at most $k(1-u_{\min})^{k-1}|u_Q-u_{Q^\star}|$, giving
$\big|\mathrm{pass@}k(Q)-\mathrm{pass@}k(Q^\star)\big| \le k(1-u_{\min})^{k-1}\,\tfrac{\pi\sqrt M}{2\sqrt6\,c}\,R$.
\end{proof}

\begin{remark}[Reading $R$]
\label{rem:reading-R}
$R=\max_{t\le n}\|F_t(\theta^\star)\|_2$ measures the maximal prefix imbalance of the \emph{final, offline} prices: how
far $\theta^\star$ is from satisfying the length-$t$ truncated budgets, at the worst truncation point. Theorem~\ref{thm:passk-tracking}
is a deterministic stability statement -- small prefix residual implies small $\mathrm{pass@}k$ deviation -- rather than
a regret bound in the online-learning sense, since $V_t$ is held fixed throughout this appendix; Appendix~\ref{app:mc}
addresses the separate problem of estimating $V_t$ from terminal verifier feedback. Because $\mathrm{pass@}k\in[0,1]$,
any bound exceeding one is vacuous, which is why the loose form of the bound is explicitly truncated at $1$.
\end{remark}
\section{Online Estimation of the Verifier Potential in J-EOS}
\label{app:mc}

J-EOS needs an estimate of the Bellman potential $V_t(w\mid w^{t-1})$ from
terminal pass/fail feedback.  After the causal price-tracking pass of round $r$,
it has a sequence of prices $\theta_{r,t}=(\lambda_{r,t},\mu_{r,t})$,
$t=1,\dots,n$.  We freeze the resulting J-EOS decoder and use it as an
importance-sampling behavior policy.  This appendix states the corresponding
rollout identity, the step-only estimator used by Algorithm~\ref{alg:jeos}, and
the approximation introduced by collapsing prefixes into a table indexed only
by $(t,w)$.

\paragraph{Target prices and frozen behavior.}
At the end of round $r$, let
\[
\vartheta_r=(\lambda_r,\mu_r):=\theta_{r,n}, \qquad \mu_r>0,
\]
be the price pair whose Bellman potential we want to estimate.  Freeze the
within-round prices $\{\theta_{r,t}\}_{t=1}^n$ and define the behavior policy
\begin{equation}
 b_{r,t}(w\mid w^{t-1})
 :=
 \frac{p_t(w\mid w^{t-1})^{\lambda_{r,t}}
       \exp\!\big(\mu_{r,t}\widehat V^{\rm prev}_t(w)\big)}
      {\sum_{a\in\W}p_t(a\mid w^{t-1})^{\lambda_{r,t}}
       \exp\!\big(\mu_{r,t}\widehat V^{\rm prev}_t(a)\big)} .
\label{eq:jeos-behavior}
\end{equation}
After freezing \eqref{eq:jeos-behavior}, draw a fresh batch of $m$ independent
rollouts from $b_r$.  Using a fresh batch is important: the causal trajectory
used to construct $\theta_{r,1:n}$ is adaptive and is not, in general, an i.i.d.
draw from the subsequently frozen behavior policy.

For a fixed target pair $\vartheta=(\lambda,\mu)$, define the Bellman partition
function after committing to $(w^{t-1},w)$ by
\begin{equation}
 Z^{\vartheta}_{t+1}(w^{t-1}w)
 :=
 \sum_{w_{t+1:n}}
 \left[\prod_{s=t+1}^n
 p_s(w_s\mid w^{s-1})^{\lambda}\right]
 e^{\mu U(w^n)},
 \qquad
 V_t^{\vartheta}(w\mid w^{t-1})
 =\frac1\mu\log Z^{\vartheta}_{t+1}(w^{t-1}w).
\label{eq:jeos-target-Z}
\end{equation}
This is exactly the potential of Lemma~\ref{lem:bellman}, evaluated at the
current target prices $\vartheta_r$.

\begin{proposition}[Off-policy rollout identity for J-EOS]
\label{prop:jeos-rollout}
Fix a round $r$ and target prices $\vartheta_r=(\lambda_r,\mu_r)$ with
$\mu_r>0$.  For every $t$, prefix $w^{t-1}$, and token $w$, if the behavior
policy has full support, then
\begin{equation}
Z^{\vartheta_r}_{t+1}(w^{t-1}w)
=
\E_{w_{t+1:n}\sim b_r}
\!\left[
 e^{\mu_r U(w^n)}
 \prod_{s=t+1}^{n}
 \frac{p_s(w_s\mid w^{s-1})^{\lambda_r}}
      {b_{r,s}(w_s\mid w^{s-1})}
 \ \middle|\ w^t
\right].
\label{eq:jeos-rollout}
\end{equation}
Consequently,
$V_t^{\vartheta_r}(w\mid w^{t-1})=\mu_r^{-1}\log
Z^{\vartheta_r}_{t+1}(w^{t-1}w)$.
\end{proposition}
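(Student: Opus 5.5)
The plan is to treat \eqref{eq:jeos-rollout} as a standard importance-sampling (change-of-measure) identity on the finite set $\W^{\,n-t}$ of continuations. First, fix the round $r$, a step $t$, and the committed prefix $w^t=(w^{t-1},w)$. Then write the conditional law of the continuation under the frozen behavior policy explicitly. Since $b_r$ is autoregressive, it assigns
\[
b_r(w_{t+1:n}\mid w^t)=\prod_{s=t+1}^n b_{r,s}(w_s\mid w^{s-1}).
\]
For $s>t$, each factor depends on $w_{t+1:s}$ only through the conditioning prefix $w^{s-1}$. This is the same disjoint-subtree observation used in Lemma~\ref{lem:bellman}.

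Next, expand the right-hand side of \eqref{eq:jeos-rollout} as a finite sum:
\[
\sum_{w_{t+1:n}}\Big[\prod_{s>t} b_{r,s}(w_s\mid w^{s-1})\Big]\, e^{\mu_r U(w^n)}\prod_{s>t}\frac{p_s(w_s\mid w^{s-1})^{\lambda_r}}{b_{r,s}(w_s\mid w^{s-1})}.
\]
The behavior factors cancel term by term, leaving exactly the sum \eqref{eq:jeos-target-Z} defining $Z^{\vartheta_r}_{t+1}(w^{t-1}w)$. The step $t=n$ is the empty-product case, where both sides equal $e^{\mu_r U(w^n)}$.

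The only point needing care is the full-support hypothesis. The cancellation requires $b_{r,s}(w_s\mid w^{s-1})>0$ for every continuation in the sum, since otherwise the ratio is undefined and target mass would be missed. I will verify this directly from \eqref{eq:jeos-behavior}. Because $p_s\ge\rho>0$ and $\widehat V^{\rm prev}$ is finite, every numerator $p_s^{\lambda_{r,s}}e^{\mu_{r,s}\widehat V^{\rm prev}_s}$ is strictly positive. So full support holds automatically, and all expectations are finite sums of finite terms.

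I will also note why the argument is valid despite the adaptivity of the prices. The frozen within-round prices $\theta_{r,1:n}$ enter only $b_r$, never the target, so the identity holds for any fixed positive $b_r$. This is also why the fresh batch matters for unbiasedness of the Monte Carlo version, though it is not needed for the identity itself.

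Finally, $Z^{\vartheta_r}_{t+1}>0$ by the bound \eqref{eq:B-bounds} from the proof of Lemma~\ref{lem:bellman}. Applying $\mu_r^{-1}\log$ to both sides then gives $V_t^{\vartheta_r}$, and the last claim follows.
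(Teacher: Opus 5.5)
Your proposal is correct and follows essentially the paper's proof: write the conditional expectation as a finite sum over continuations, cancel the autoregressive behavior probability $\prod_{s>t}b_{r,s}(w_s\mid w^{s-1})$ against the likelihood-ratio denominators, and recognize the defining sum \eqref{eq:jeos-target-Z}. Your extra checks, namely full support from $p_s\ge\rho$ and finite $\widehat V^{\rm prev}$, and $Z^{\vartheta_r}_{t+1}>0$ so that the logarithm is defined, are harmless additions the paper leaves implicit; one small note is that positivity of $Z$ follows directly because every summand is positive, so you do not need the bound \eqref{eq:B-bounds}, which was derived under $\lambda^\star\ge0$.
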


\begin{proof}
Expand the expectation in \eqref{eq:jeos-rollout}.  The continuation probability
under $b_r$ is $\prod_{s>t}b_{r,s}(w_s\mid w^{s-1})$; it cancels the denominator
of the likelihood ratio, leaving
$\prod_{s>t}p_s(w_s\mid w^{s-1})^{\lambda_r}e^{\mu_rU(w^n)}$.
Summing over all continuations gives \eqref{eq:jeos-target-Z}.
\end{proof}

\paragraph{Step-only table used by J-EOS.}
Let $w^n_{(1)},\dots,w^n_{(m)}\stackrel{\rm iid}{\sim}b_r$ be the fresh rollout
batch and $U_{(j)}=U(w^n_{(j)})$.  For every visited cell $(t,w)$ define
\begin{align}
X^{(r,t)}_{(j)}
&:= e^{\mu_r U_{(j)}}
\prod_{s=t+1}^{n}
\frac{p_s(w_{(j),s}\mid w^{s-1}_{(j)})^{\lambda_r}}
     {b_{r,s}(w_{(j),s}\mid w^{s-1}_{(j)})},
\label{eq:jeos-weight}\\
\widehat Z^{(r)}_{t+1}(w)
&:=\frac{1}{N_{r,t}(w)}
\sum_{j:\,w_{(j),t}=w}X^{(r,t)}_{(j)},
\qquad
\widehat V^{(r)}_t(w)
:=\frac1{\mu_r}\log \widehat Z^{(r)}_{t+1}(w),
\label{eq:jeos-steponly}
\end{align}
where $N_{r,t}(w)=\#\{j:w_{(j),t}=w\}$.  Cells with
$N_{r,t}(w)=0$ retain their previous value (or are smoothed with a stated
pseudocount); they must not be assigned the logarithm of an empty sum.

Equation~\eqref{eq:jeos-steponly} is the update intended at the potential-update
line of Algorithm~\ref{alg:jeos}.  It requires storing the behavior probability
$b_{r,s}(w_{(j),s}\mid w^{s-1}_{(j)})$ for each sampled token, but it does not
require an oracle value of $(\lambda^\star,\mu^\star)$.

\paragraph{What the step-only table identifies.}
Because $\widehat V_t$ is indexed by $(t,w)$ rather than the full prefix, the
estimator cannot recover the exact prefix-conditioned potential in general.
Conditioning only on the event $w_t=w$ gives
\begin{equation}
\bar Z^{(r)}_{t+1}(w)
:=
\E_{w^{t-1}\sim b_r(\cdot\mid w_t=w)}
\!\left[Z^{\vartheta_r}_{t+1}(w^{t-1}w)\right]
=
\E_{w^{t-1}\sim b_r(\cdot\mid w_t=w)}
\!\left[e^{\mu_r V_t^{\vartheta_r}(w\mid w^{t-1})}\right],
\label{eq:jeos-marginal-target}
\end{equation}
and therefore
\begin{equation}
\bar V^{(r)}_t(w)
:=\frac1{\mu_r}\log \bar Z^{(r)}_{t+1}(w).
\label{eq:jeos-marginal-V}
\end{equation}
Thus the table converges to a behavior-weighted \emph{soft average} of the
prefix-conditioned potential.  This is an approximation intrinsic to the
$O(n|\W|)$ step-only representation, not a Monte-Carlo error.  It disappears
when $V_t^{\vartheta_r}(w\mid w^{t-1})$ is prefix-independent, and it can be
reduced by conditioning the value estimator on a richer state or learned hidden
representation.

\begin{theorem}[Per-round consistency of the J-EOS estimator]
\label{thm:jeos-consistency}
Fix a round $r$ and suppose that the $m$ rollouts are i.i.d.
from the frozen behavior policy $b_r$, that
$\Pr_{b_r}(W_t=w)\ge\rho_\star>0$ for all cells of interest, that
$p_s(a\mid h)\ge\rho>0$, and that
$b_{r,s}(a\mid h)\ge\kappa>0$ on the same support.  Set
\begin{equation}
\Delta_{r,t}
:=
 e^{\mu_r}\kappa^{-(n-t)}
 -\rho^{\lambda_r(n-t)}.
\label{eq:jeos-range}
\end{equation}
For $\delta\in(0,1)$, if
$m\ge\frac{8}{\rho_\star}\log\frac{2n|\W|}{\delta}$, then with probability
at least $1-\delta$, simultaneously for all visited cells,
\begin{equation}
\left|\widehat Z^{(r)}_{t+1}(w)-\bar Z^{(r)}_{t+1}(w)\right|
\le
\Delta_{r,t}
\sqrt{\frac{\log(4n|\W|/\delta)}{m\rho_\star}}.
\label{eq:jeos-concentration}
\end{equation}
Consequently $\widehat V^{(r)}_t(w)$ is consistent for
$\bar V^{(r)}_t(w)$.  Its logarithm produces the usual downward Jensen bias,
\begin{equation}
\E\!\left[\widehat V^{(r)}_t(w)\right]
=
\bar V^{(r)}_t(w)
-
\frac{\Var(\widehat Z^{(r)}_{t+1}(w))}
     {2\mu_r\,\bar Z^{(r)}_{t+1}(w)^2}
+o\!\left(\Var(\widehat Z^{(r)}_{t+1}(w))\right).
\label{eq:jeos-logbias}
\end{equation}
\end{theorem}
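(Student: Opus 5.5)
The plan is to treat each visited cell $(t,w)$ as a conditional Monte-Carlo average of bounded i.i.d.\ weights. First I will identify its conditional mean through Proposition~\ref{prop:jeos-rollout}. Then I will control the random cell count $N_{r,t}(w)$ and the average given that count. Finally, a union bound over the $n|\W|$ cells gives the simultaneous bound, and a Taylor expansion of $\log$ gives consistency and the bias formula.

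\emph{Step 1 (range and mean).} Since $U\in[0,1]$, $\rho\le p_s\le 1$ and $b_{r,s}\ge\kappa$, every weight in \eqref{eq:jeos-weight} satisfies
\[
\rho^{\lambda_r(n-t)}\le X^{(r,t)}_{(j)}\le e^{\mu_r}\kappa^{-(n-t)}.
\]
So the weights lie in an interval of length $\Delta_{r,t}$ from \eqref{eq:jeos-range}. Because the rollouts are i.i.d.\ from $b_r$, conditional on the index set $\{j:w_{(j),t}=w\}$ the corresponding $X_{(j)}$ are i.i.d.\ with law equal to that of $X$ given $W_t=w$. By the tower property, first conditioning on the full prefix $w^t$ and applying \eqref{eq:jeos-rollout}, then averaging $w^{t-1}\sim b_r(\cdot\mid w_t=w)$, this conditional mean is exactly $\bar Z^{(r)}_{t+1}(w)$ of \eqref{eq:jeos-marginal-target}.

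\emph{Step 2 (counts and Hoeffding).} We have $N_{r,t}(w)\sim\mathrm{Bin}(m,\pi)$ with $\pi\ge\rho_\star$. The multiplicative Chernoff bound gives
\[
\Pr\bigl(N_{r,t}(w)<m\rho_\star/2\bigr)\le e^{-m\rho_\star/8}\le \delta/(2n|\W|),
\]
where the last inequality is exactly the hypothesis on $m$. Conditional on $N=N_{r,t}(w)$, Hoeffding's inequality yields, with conditional probability at least $1-\delta/(2n|\W|)$,
\[
|\widehat Z-\bar Z|\le\Delta_{r,t}\sqrt{\log(4n|\W|/\delta)/(2N)}.
\]
On the event $N\ge m\rho_\star/2$ this is at most the right side of \eqref{eq:jeos-concentration}. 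A union bound over both failure events and all $n|\W|$ cells then gives total failure probability at most $\delta$.

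\emph{Step 3 (consistency and bias).} Both $\widehat Z$ and $\bar Z$ lie in $[\rho^{\lambda_r(n-t)},\,e^{\mu_r}\kappa^{-(n-t)}]$, which is bounded away from zero. On this interval $\log$ is Lipschitz with constant $\rho^{-\lambda_r(n-t)}$. Hence \eqref{eq:jeos-concentration} transfers to $|\widehat V-\bar V|=O\bigl(\mu_r^{-1}\rho^{-\lambda_r(n-t)}\Delta_{r,t}\sqrt{\log(n|\W|/\delta)/(m\rho_\star)}\bigr)$, and letting $m\to\infty$ gives consistency.

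For \eqref{eq:jeos-logbias}, expand
\[
\log\widehat Z=\log\bar Z+\frac{\widehat Z-\bar Z}{\bar Z}-\frac{(\widehat Z-\bar Z)^2}{2\bar Z^2}+R_3,
\]
with Lagrange remainder $|R_3|\le|\widehat Z-\bar Z|^3/(3\,(\min\widehat Z)^3)$. Take expectations, using the conditional unbiasedness $\E[\widehat Z\mid N\ge1]=\bar Z$ from Step~1, and divide by $\mu_r$.

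\emph{Main obstacle.} The delicate part is making the remainder genuinely $o(\Var\widehat Z)$. Two issues arise.
\begin{itemize}
\item The event $N=0$, on which the retained-value convention applies, breaks exact unbiasedness. It has probability $e^{-\Omega(m\rho_\star)}$ and contributes a bounded error. Since $\Var\widehat Z$ decays only like $\Theta(1/(m\rho_\star))$ when the weights are nondegenerate, this exponentially small term is $o(\Var\widehat Z)$.
\item The third-moment term needs its own bound. Boundedness of the range gives $\E|\widehat Z-\bar Z|^3\le\Delta_{r,t}\cdot\max|\widehat Z-\bar Z|\cdot\Var(\widehat Z)$-type control. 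I would tighten this via the concentration bound of Step~2, splitting on the high-probability event where $|\widehat Z-\bar Z|\to0$, so that $\E|R_3|=o(\Var\widehat Z)$.
\end{itemize}
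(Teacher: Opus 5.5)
Your proposal is correct and follows the paper's proof essentially step for step. It identifies the conditional mean through the rollout identity plus prefix marginalization, bounds the weights in $[\rho^{\lambda_r(n-t)},e^{\mu_r}\kappa^{-(n-t)}]$, applies multiplicative Chernoff to the cell counts and then conditional Hoeffding with a union bound over the $n|\W|$ cells, and finishes with a Taylor expansion of $\log$ for the bias. Your bias step, which handles the empty-cell event and uses a split argument to make the cubic remainder $o(\Var(\widehat Z))$ under nondegenerate weights, is more careful than the paper's one-line appeal to Jensen and a second-order expansion. One small correction: the crude bound in your second bullet should read $\E|\widehat Z-\bar Z|^3\le\Delta_{r,t}\,\E(\widehat Z-\bar Z)^2$.
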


\begin{proof}
Conditional on $W_t=w$, the fresh rollouts are i.i.d. and the importance-weighted
quantity in \eqref{eq:jeos-weight} has mean
$\bar Z^{(r)}_{t+1}(w)$ by Proposition~\ref{prop:jeos-rollout} followed by the
prefix marginalization in \eqref{eq:jeos-marginal-target}.  Moreover,
$U\in[0,1]$ implies $e^{\mu_rU}\in[1,e^{\mu_r}]$, while
$p_s^{\lambda_r}/b_{r,s}\in[\rho^{\lambda_r},\kappa^{-1}]$.
Hence each weight lies in
$[\rho^{\lambda_r(n-t)},e^{\mu_r}\kappa^{-(n-t)}]$, an interval of width
$\Delta_{r,t}$.  A multiplicative Chernoff bound gives
$N_{r,t}(w)\ge\tfrac12m\rho_\star$ simultaneously with probability at least
$1-\delta/2$ under the stated sample-size condition.  Conditional on that
event, Hoeffding's inequality applied to each cell, followed by a union bound,
gives \eqref{eq:jeos-concentration}.  Finally,
$z\mapsto\mu_r^{-1}\log z$ is smooth on the positive range above; Jensen's
inequality gives the sign of the bias and a second-order Taylor expansion gives
\eqref{eq:jeos-logbias}.
\end{proof}

\begin{remark}[This does not by itself prove joint J-EOS convergence]
\label{rem:jeos-two-timescale}
Appendix~\ref{app:thm23} proves stability of the price tracker with the potential
held fixed.  The theorem above proves consistency of one potential update with
the target prices and behavior policy frozen within a round.  Combining these
two facts is a two-timescale tracking argument; it requires an additional
assumption controlling how quickly $\vartheta_r$ and the induced potential move
between rounds.  Therefore the present theory supports J-EOS as a per-round
price-tracking plus potential-estimation procedure, but it should not be stated
as a full no-regret or global-convergence theorem unless that extra argument is
added.
\end{remark}

\begin{remark}[Boundary case $\mu_r=0$]
The representation $V=\mu^{-1}\log Z$ is undefined at $\mu=0$.  Since J-EOS is
initialized at $\mu_0=0$, the first round should retain the zero-potential warm
start until the tracked solution enters the interior ($\mu_r>0$), or else the
implementation should separately track the log-partition product $\mu V$ near
the boundary.  The interior analysis in Section~\ref{sec:tracking} and
Appendix~\ref{app:thm23} already excludes the degenerate boundary case.
\end{remark}

\section{Verifier-feedback lower bound: Proof of Theorem~\ref{thm:lower}}\label{app:lower}
\newcommand{\Wn}{\mathcal{W}^n}
\newcommand{\Prob}{\mathbb{P}}
\newcommand{\etamax}{\eta_{\max}}
\newcommand{\qstar}{q^{\star}}
\newcommand{\hr}{\hat{\eta}_r}
\newcommand{\Ber}{\mathrm{Ber}}

The argument is Le~Cam's two--point method: we exhibit two instances whose $\passk$ curves agree everywhere except on two disjoint tilt windows, where one is higher and the other lower by $\Theta(\Delta)$. Distinguishing them under terminal bandit verifier feedback costs $\Omega(1/\Delta^2)$ observations, each misallocated round incurs regret $\Omega(\Delta)$, and balancing $\Delta\asymp N^{-1/2}$ yields cumulative regret $\Omega(\sqrt{N})$. The result is independent of any particular online update rule.

{\bf Setup and reduction to tilt regret.}
At rollout $r=1,\dots,N$ the algorithm $\mathcal{A}$ picks a tilt $\hr\in[0,\etamax]$ measurable w.r.t.\ the past, draws the $k$ sequences of round $r$ i.i.d.\ from the offline optimum $\qstar_{\hr}$ of Theorem~\ref{thm:tilt} (with $\lambda$ adjusted to meet the entropy floor), and observes only the binary verdicts $U_j\in\{0,1\}$, $j\in[k]$. Write $u_q:=\E_{w^n\sim q}[U(w^n)]$ and
\begin{equation}\label{eq:fdef}
  f(\eta):=\passk(\qstar_\eta)=1-\bigl(1-u_{\qstar_\eta}\bigr)^{k},
  \qquad
  \eta^{\star}\in\argmax_{\eta\in[0,\etamax]} f(\eta).
\end{equation}
Under the protocol the round-$r$ sampler is $q_r=\qstar_{\hr}$, so the regret of $\mathcal{A}$ is exactly the tilt regret
\begin{equation}\label{eq:reg}
  \Reg_{\mathcal{A}}(N)
  = N\,\passk(\qstar)-\sum_{r=1}^{N}\passk(q_r)
  = \sum_{r=1}^{N}\bigl(f(\eta^{\star})-f(\hr)\bigr),
\end{equation}
since $\passk(\qstar)=f(\eta^{\star})$ for the family we construct. The hard family is chosen so that the feasible tilted policies are parameterized by this single scalar tilt, hence lower-bounding \eqref{eq:reg} is algorithm-independent within the stated feedback/action model.

To host the perturbation we require the base instance to be non-degenerate and the verifier family rich, which is exactly the hypothesis referenced by Theorem~\ref{thm:lower}.
\begin{assumption}[Non-degenerate, rich verifier family]\label{ass:lb}
There is a base instance $(p,U,\mathcal{W},n,k)$ whose curve $f$ in \eqref{eq:fdef} is $C^2$ on $[0,\etamax]$ with $|f''|\le M$, $f'(\eta^{\star})=0$, and $f(\eta^{\star})\in(c_0,1-c_0)$ for absolute constants $M,c_0>0$. Moreover the family $\bigl\{\eta\mapsto \Prob_{\qstar_\eta}(S):S\subseteq\Wn\text{ measurable}\bigr\}$ is, after scaling, $L^\infty$-dense in the smooth $\etamax$-supported perturbations of $[0,\etamax]$.
\end{assumption}
\noindent The smoothness and interior-optimum conditions follow from Theorem~1 (the tilted optimum, hence $f$, is a smooth function of $\eta$); $f(\eta^{\star})\in(c_0,1-c_0)$ excludes the trivial regime where bandit feedback is uninformative; the richness clause is what lets us realize a prescribed curve perturbation at the verifier level (Lemma~\ref{lem:realize}).

{\bf The hard pair.}
\emph{Bumps.} Fix $\eta_-=\etamax/3$, $\eta_+=2\etamax/3$. Let $b:\mathbb{R}\to[0,1]$ be smooth and compactly supported with $b(0)=1$ and $\operatorname{supp} b\subseteq[-\etamax/6,\etamax/6]$, and set $\psi(\eta):=b(\eta-\eta_+)-b(\eta-\eta_-)$. The supports $B_-=[\etamax/6,\etamax/2]$ and $B_+=[\etamax/2,5\etamax/6]$ are disjoint (they meet only at the endpoint $\etamax/2$, where both bumps vanish), so $\psi(\eta_+)=+1$, $\psi(\eta_-)=-1$, and $\|\psi\|_\infty\le1$.

\emph{Curves.} Take the base verifier to have a \emph{flat} curve $f_0\equiv c$ with $c\in(c_0,1-c_0)$. For $\varepsilon\in\{+1,-1\}$ define the target
\begin{equation}\label{eq:target}
  f_\varepsilon(\eta)=c+\varepsilon\,\Delta\,\psi(\eta),
  \qquad \Delta\in(0,c_0/2).
\end{equation}
A flat base makes every gap inequality below hold directly, with no slope condition on $f_0$.

\emph{Realization in $\passk$ space.} The map $u\mapsto 1-(1-u)^k$ is nonlinear, so the perturbation must be designed one layer down, on the per-sample rate. Let $u_0(\eta)=u_{\qstar_\eta}$ under the base verifier, so $1-c=(1-u_0(\eta))^k$; flatness makes $u_0$ constant in $\eta$. To move the curve to \eqref{eq:target} the per-sample rate must move by $g_\varepsilon$ solving $(1-u_0-g_\varepsilon)^k=(1-u_0)^k-\varepsilon\Delta\psi$, i.e.
\begin{equation}\label{eq:jac}
  g_\varepsilon(\eta)=\frac{\varepsilon\,\Delta\,\psi(\eta)}{J}+O(\Delta^2),
  \qquad
  J:=k\,(1-u_0)^{k-1}=k\,(1-c)^{(k-1)/k}.
\end{equation}
Since $(k-1)/k\in(0,1)$ and $1-c\in(c_0,1-c_0)$ we have $k c_0\le J\le k$, hence $\|g_\varepsilon\|_\infty\le \Delta/(k c_0)+O(\Delta^2)$. The budget $k$ enters \eqref{eq:target} \emph{only} through this $O(1/k)$ rescaling of the per-sample rate; the target curves are $k$-free except through $c_0$. The next lemma records the two-sided scale of $u_0$ that this induces; it is what keeps the divergence bound below independent of $k$.

\begin{lemma}[Per-sample scale]\label{lem:scale}
For the flat base, $u_0=1-(1-c)^{1/k}$ is constant in $\eta$ and
\[
  \frac{c_0}{k}\ \le\ u_0\ \le\ c, \qquad 1-u_0\ \ge\ c_0,
  \qquad\text{so}\qquad J=k(1-u_0)^{k-1}\in[k c_0,\ k].
\]
\end{lemma}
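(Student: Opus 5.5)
The argument is a direct computation from the flat-base identity $1-c=(1-u_0(\eta))^k$. Since $f_0\equiv c$ is constant in $\eta$ and $u\mapsto 1-(1-u)^k$ is strictly increasing on $[0,1]$, we invert it to get $u_0=1-(1-c)^{1/k}$ for every $\eta$. Constancy in $\eta$ is then immediate. The three bounds come from elementary estimates on $x\mapsto x^{1/k}$ for $x=1-c\in(c_0,1-c_0)$, using the standing hypothesis $c\in(c_0,1-c_0)$ of the hard-pair construction.

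For the upper bound, we use $(1-c)^{1/k}\ge 1-c$ (valid since $0<1-c<1$ and $1/k\le 1$), which gives $u_0\le c$. The same inequality gives $1-u_0=(1-c)^{1/k}\ge 1-c> c_0$.

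For the lower bound $u_0\ge c_0/k$, we use Bernoulli's inequality in the form $(1-c)^{1/k}\le 1-c/k$. This holds because $x\mapsto x^{1/k}$ is concave and lies below its tangent line at $x=1$. Hence $u_0\ge c/k> c_0/k$.

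Finally, for $J=k(1-u_0)^{k-1}=k(1-c)^{(k-1)/k}$, the upper bound $J\le k$ holds because $1-u_0\le 1$. For the lower bound, since $(k-1)/k\in[0,1)$ and $1-c<1$, we have $(1-c)^{(k-1)/k}\ge 1-c> c_0$, so $J\ge kc_0$. This matches the range already quoted after \eqref{eq:jac}.

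The only point needing care is the direction of the monotonicity of $a\mapsto x^{a}$ for $x\in(0,1)$. Smaller exponents give larger values, and this single fact drives both $u_0\le c$ and $J\ge kc_0$. The lower bound $u_0\ge c_0/k$ relies on the concavity (Bernoulli) step, which I expect to be the main subtlety. Nothing depends on $\eta$, so the lemma holds uniformly on $[0,\etamax]$. This uniformity is what later keeps the per-round KL divergence between $\Ber(u_0+g_{+})$ and $\Ber(u_0+g_{-})$ bounded by $O(\Delta^2/(k^2 u_0))=O(\Delta^2/k)$ per sample, and hence $O(\Delta^2)$ per round after summing the $k$ samples, independently of $k$.
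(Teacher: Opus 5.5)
Your proof is correct and follows the paper's argument: you invert $(1-u_0)^k=1-c$ the same way, and you use the same bound $(1-c)^{1/k}\ge 1-c$ for both $u_0\le c$ and $J\ge kc_0$. The only difference is cosmetic and sits in the lower bound. You use the tangent-line (Bernoulli) estimate $(1-c)^{1/k}\le 1-c/k$, whereas the paper shows $x\mapsto\bigl(1-(1-c)^x\bigr)/x$ is nonincreasing and compares $x=1/k$ with $x=1$; both routes prove the same inequality $k\bigl(1-(1-c)^{1/k}\bigr)\ge c$.
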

\begin{proof}
Flatness gives $(1-u_0)^k=1-c$, so $u_0=1-(1-c)^{1/k}$ is constant. Since $1-c\in(0,1)$ and $1/k\le1$ we have $(1-c)^{1/k}\ge1-c$, whence $u_0\le c$ and $1-u_0\ge1-c\ge c_0$. For the lower bound, the map $x\mapsto\varphi(x):=(1-(1-c)^{x})/x$ is nonincreasing on $(0,1]$: with $t:=x\log\tfrac{1}{1-c}>0$ one has $\varphi(x)=\log\tfrac{1}{1-c}\cdot(1-e^{-t})/t$, and $t\mapsto(1-e^{-t})/t$ is decreasing while $t$ increases in $x$. Hence $k u_0=\varphi(1/k)\ge\varphi(1)=1-(1-c)=c$, i.e.\ $u_0\ge c/k\ge c_0/k$. Finally $J=k(1-u_0)^{k-1}=k(1-c)^{(k-1)/k}\in[k(1-c),k]\subseteq[kc_0,k]$.
\end{proof}

\begin{lemma}[Realizability]\label{lem:realize}
Under Assumption~\ref{ass:lb}, for every $\delta\in(0,\tfrac12]$ there is $\Delta_0>0$ such that for all $\Delta\le\Delta_0$ there exist measurable $h_\varepsilon:\Wn\to[-c_0/2,c_0/2]$ with $U_\varepsilon:=U_0+h_\varepsilon\in[0,1]$ and
\[
  \bigl\|\,\eta\mapsto\E_{\qstar_\eta}[h_\varepsilon]-g_\varepsilon\,\bigr\|_\infty\ \le\ \delta\,\|g_\varepsilon\|_\infty .
\]
Consequently the realized curve is $f_\varepsilon(\eta)=c+\varepsilon\Delta\psi(\eta)+e_\varepsilon(\eta)$ with $\|e_\varepsilon\|_\infty\le\delta'\Delta$, where $\delta'=\delta/c_0+O(\Delta)\to0$ as $\delta\to0$.
\end{lemma}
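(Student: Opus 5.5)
The plan is to obtain $h_\varepsilon$ directly from the richness clause of Assumption~\ref{ass:lb}, then control how the verifier perturbation moves the curve. Note that $\qstar_\eta$ is itself built from $U$ (through $V$ and the entropy-pinned $\lambda$), so changing $U_0$ to $U_\varepsilon$ also changes the policy. I will treat that feedback as a higher-order term.

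\textbf{Step 1 (choose the perturbation).} Write $g_\varepsilon=\varepsilon\Delta\psi/J+O(\Delta^2)$ as in \eqref{eq:jac}. The rescaled profile $\tilde g:=g_\varepsilon/\|g_\varepsilon\|_\infty$ is a smooth perturbation supported in $[\etamax/6,5\etamax/6]$ with sup norm one. By the density clause there is a finite combination $\tilde h=\sum_i a_i\mathbf 1_{S_i}$, whose coefficients depend only on $\tilde g$ and $\delta$ and not on $\Delta$, such that
\[
\bigl\|\eta\mapsto\E_{\qstar_\eta}[\tilde h]-\tilde g\bigr\|_\infty\le\delta/2 .
\]
Set $h_\varepsilon:=\|g_\varepsilon\|_\infty\,\tilde h$. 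Then $\|h_\varepsilon\|_\infty\le C_\delta\Delta/(kc_0)$ by Lemma~\ref{lem:scale}. Choosing $\Delta_0$ small gives $|h_\varepsilon|\le c_0/2$.

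\textbf{Step 2 (keep the verifier in range).} To ensure $U_0+h_\varepsilon\in[0,1]$, take the flat base $U_0$ with values in $[c_0/2,1-c_0/2]$ on the support of the $S_i$. If $U_0$ is binary, first replace it by a constant-curve $U_0$ bounded away from $\{0,1\}$, which is possible since only the flat curve $f_0\equiv c$ is prescribed. A perturbation of size at most $c_0/2$ then stays in range.

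\textbf{Step 3 (account for policy feedback).} Let $\qstar_{\eta}^{\varepsilon}$ be the optimum under $U_\varepsilon$. By Theorem~\ref{thm:tilt} and Lemma~\ref{lem:bellman}, $V$ is a composition of log-sum-exps in $U$, and $\lambda$ is defined by the entropy equation with nonzero derivative under Slater's condition. Hence $U\mapsto\qstar_\eta$ is $C^1$, uniformly on the compact interval $[0,\etamax]$. This gives
\[
u_{\qstar^\varepsilon_\eta}=u_0+\E_{\qstar_\eta}[h_\varepsilon]+O(\|h_\varepsilon\|_\infty^2).
\]
Taking $\Delta_0$ small enough that this $O(\Delta^2)$ term, together with the $O(\Delta^2)$ in \eqref{eq:jac}, is below $(\delta/2)\|g_\varepsilon\|_\infty$ yields the stated sup bound.

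\textbf{Step 4 (pass to the curve).} Apply $u\mapsto1-(1-u)^k$, whose derivative at $u_0$ is $J$ and whose second derivative is bounded. The curve error becomes $J\cdot\delta\|g_\varepsilon\|_\infty+O(\Delta^2)\le\delta\Delta(k/J)+O(\Delta^2)$. Since $J\ge kc_0$ by Lemma~\ref{lem:scale}, this gives $\delta'=\delta/c_0+O(\Delta)$.

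The main obstacle is Step 3. Uniform differentiability of $U\mapsto\qstar_\eta$, in particular of the entropy-pinned $\lambda(\eta)$, needs a nondegeneracy of $\partial_\lambda\bar H$ that is uniform in $\eta$. I would obtain it from the strict monotonicity of entropy along the tempered family, using compactness of $[0,\etamax]$.
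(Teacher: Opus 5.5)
Your Steps 1 and 4 follow the paper's argument: normalize $g_\varepsilon$, approximate at unit scale via the density clause, rescale by $\|g_\varepsilon\|_\infty$, then push the per-sample error through $u\mapsto 1-(1-u)^k$ using $J\ge kc_0$. Steps 3 and 2 each contain a genuine gap.

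\textbf{Step 3 is wrong: the policy feedback is first order, not second order.} At fixed $\eta$ the sequence law is $q^\star_\eta\propto p^{\lambda}e^{\eta\lambda U}$, with $\lambda$ pinned by the entropy floor (Corollary~\ref{cor:seqtilt}). Differentiate $u=\mathbb{E}_{q}[U]$ along $U_0+\epsilon h$, and eliminate $\dot\lambda$ using $\frac{d}{d\epsilon}H=0$. With $Y:=\log p+\eta U_0$ this gives
\[
\frac{du}{d\epsilon}\Big|_{\epsilon=0}=\mathbb{E}_{q^\star_\eta}[h]+\eta\lambda\Bigl[\mathrm{Cov}_{q^\star_\eta}(U_0,h)-\frac{\mathrm{Cov}_{q^\star_\eta}(Y,h)\,\mathrm{Cov}_{q^\star_\eta}(Y,U_0)}{\mathrm{Var}_{q^\star_\eta}(Y)}\Bigr].
\]
The policy moves by $O(\|h\|_\infty)$, and testing that movement against the $O(1)$ function $U_0$ changes $u$ by $O(\|h\|_\infty)$, not $O(\|h\|_\infty^2)$. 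The bracket is a residual covariance that is generically nonzero, and $\eta\ge\eta_{\max}/6$ on the bumps.

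Since $\|h_\varepsilon\|_\infty$ and $\|g_\varepsilon\|_\infty$ are both $\Theta(\Delta/k)$, the feedback error is a fraction of $\|g_\varepsilon\|_\infty$ that does not shrink with $\Delta$. So no $\Delta_0$ brings it below $(\delta/2)\|g_\varepsilon\|_\infty$. The obstacle you flagged, uniform nondegeneracy of $\partial_\lambda\bar H$, only secures differentiability; the real problem is the size of the derivative.

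The paper's proof avoids this by working with the base family $\{q^\star_\eta\}$ throughout. It identifies the realized per-sample rate with $u_0+\mathbb{E}_{q^\star_\eta}[h_\varepsilon]$, which is exactly what the displayed inequality controls, so under that reading Step 3 is unnecessary. If you insist on re-optimizing, you would need density for the linearized response map $h\mapsto du/d\epsilon$ above, and Assumption~\ref{ass:lb} does not supply that.

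\textbf{Step 2 is circular.} The density clause is a hypothesis about the family $\eta\mapsto q^\star_\eta$ of the given base instance, and that family depends on $U_0$ through the tilt. You choose the sets $S_i$ in Step 1 against that family. Replacing $U_0$ afterwards by a non-binary flat-curve verifier changes every $q^\star_\eta$. That destroys the approximation $\mathbb{E}_{q^\star_\eta}[\tilde h]\approx\tilde g$, and density for the new family is not assumed. For a constant replacement, $q^\star_\eta$ would not even depend on $\eta$. If instead you keep $U_0\in\{0,1\}$, a signed $h$ with $|h|\le c_0/2$ does leave $[0,1]$.

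The missing idea is the paper's sign alignment:
\begin{itemize}
\item keep $U_0$ binary and choose $S_+\subseteq\{U_0=0\}$, $S_-\subseteq\{U_0=1\}$;
\item set $h_\varepsilon=\alpha\|g_\varepsilon\|_\infty(\mathbf{1}_{S_+}-\mathbf{1}_{S_-})$ for a scale $\alpha>0$, so that $h_\varepsilon\ge0$ where $U_0=0$ and $h_\varepsilon\le0$ where $U_0=1$;
\item randomize $U_\varepsilon$ into a binary verifier with the same per-sample rate.
\end{itemize}
This presumes the density approximation can be achieved with $S_\pm$ so restricted, which the paper asserts rather than derives.
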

\begin{proof}
By \eqref{eq:jac}, $g_\varepsilon$ is smooth, $\etamax$-supported with $\|g_\varepsilon\|_\infty=O(\Delta/k)$. Normalize $\bar g_\varepsilon:=g_\varepsilon/\|g_\varepsilon\|_\infty$ (smooth, $\etamax$-supported, unit sup-norm). By the richness clause, scaled differences of $\eta\mapsto\Prob_{\qstar_\eta}(S)$ are $L^\infty$-dense in such functions; pick $S_+,S_-$ and a scale so that $\beta\,(\Prob_{\qstar_\eta}(S_+)-\Prob_{\qstar_\eta}(S_-))$ approximates $\bar g_\varepsilon$ to sup-norm $\delta$, and set $h_\varepsilon:=\|g_\varepsilon\|_\infty\,\beta(\mathbf 1_{S_+}-\mathbf 1_{S_-})$, so $\E_{\qstar_\eta}[h_\varepsilon]=g_\varepsilon+ (\text{error }\le\delta\|g_\varepsilon\|_\infty)$ and $\|h_\varepsilon\|_\infty=O(\Delta/k)\le c_0/2$ for $\Delta$ small. To keep $U_\varepsilon=U_0+h_\varepsilon\in[0,1]$ with $U_0\in\{0,1\}$, take $S_+\subseteq\{U_0=0\}$ and $S_-\subseteq\{U_0=1\}$ (both carry $\qstar_\eta$-mass since $c\in(c_0,1-c_0)$), so $h_\varepsilon\ge0$ where $U_0=0$ and $h_\varepsilon\le0$ where $U_0=1$; replacing $U_\varepsilon$ by the randomized verifier that outputs $1$ w.p.\ $U_\varepsilon(w^n)$ makes it binary without changing the per-sample rate. Finally the per-sample error propagates to the curve through $u\mapsto1-(1-u)^k$: since $\mathrm d f=J\,\mathrm d u$ with $J\le k$ and $J\ge kc_0$, the pass$@k$ error is $|e_\varepsilon|\le J\cdot\delta\|g_\varepsilon\|_\infty+O(\Delta^2)\le k\cdot\delta\cdot\tfrac{\Delta}{kc_0}+O(\Delta^2)=(\delta/c_0)\Delta+O(\Delta^2)$, i.e.\ $\delta'=\delta/c_0+O(\Delta)$.
\end{proof}
\noindent (The factor $J\asymp k$ amplifies the $O(\Delta/k)$ per-sample error into an $O(\Delta)$ curve error; this is why the realization is designed and controlled at the per-sample scale, and why $\delta'$ comes out $k$-free.)

Fix $\delta$ so that $\delta'\le 1/4$ and absorb $e_\varepsilon$ into the effective gap $\Delta':=\tfrac34\Delta$, written $\Delta$ below; \eqref{eq:target} is then exact up to this constant factor.

\begin{figure}[!htbp]
\centering
\includegraphics[width=0.85\textwidth]{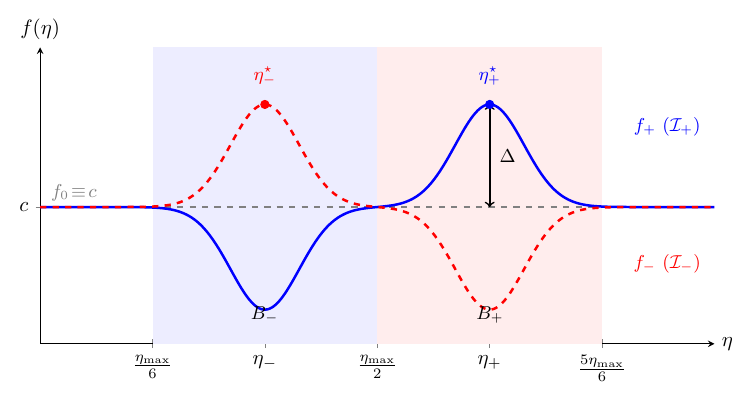}
\caption{\textbf{The hard pair.} The two $\mathrm{pass}@k$ curves coincide outside $B_-\cup B_+$ and differ by $\Delta$ inside; $f_+$ peaks in $B_+$ and $f_-$ in $B_-$. Under bandit feedback the learner reads the round's $k$ verdicts (equivalently, up to constants, the pass bit $\Pi_r\sim\mathrm{Ber}(f_\varepsilon(\hat\eta_r))$), so the two worlds are statistically close ($\mathrm{KL}\le C_0\Delta^2$ per round) yet have optima in different bumps.}
\label{fig:hardpair}
\end{figure}

\emph{Gap structure} (Figure~\ref{fig:hardpair}). With $f_0$ flat and $\psi\ge0$ on $B_+$, $\psi\le0$ on $B_-$, the maximizer of $f_\varepsilon$ lies in $B_\varepsilon$ with $f_\varepsilon(\eta^\star_\varepsilon)=c+\Delta$. For every $\eta\in B_-$, $f_+(\eta)=c-\Delta\,b(\eta-\eta_-)\le c$, so
\begin{equation}\label{eq:gap}
  f_+(\eta^\star_+)-f_+(\eta)=\Delta\bigl(1+b(\eta-\eta_-)\bigr)\ge\Delta
  \qquad\forall\,\eta\in B_-,
\end{equation}
and symmetrically for $f_-$ with $B_\pm$ swapped. Off both bumps $f_\varepsilon\equiv c$, so any off-bump query has regret exactly $\Delta$.

\medskip
\noindent {\bf Closing the argument.}
\emph{Occupancy regret.} Let $N_-=\sum_r\mathbf 1\{\hr\in B_-\}$, $N_+=\sum_r\mathbf 1\{\hr\in B_+\}$, and $N_\circ=N-N_--N_+$. By \eqref{eq:gap} and the exact off-bump regret $\Delta$,
\begin{equation}\label{eq:occ}
  \Reg_{\mathcal A}\ge\Delta\bigl(N_-+\tfrac12 N_\circ\bigr)\ \text{on}\ \mathcal I_+,
  \qquad
  \Reg_{\mathcal A}\ge\Delta\bigl(N_++\tfrac12 N_\circ\bigr)\ \text{on}\ \mathcal I_-,
\end{equation}
where $(N_-+\tfrac12 N_\circ)+(N_++\tfrac12 N_\circ)=N$ exactly.

\emph{KL divergence.} Round $r$ produces the $k$ i.i.d.\ verdicts $U^{(r)}_1,\dots,U^{(r)}_k\sim\Ber\bigl(u_\varepsilon(\hr)\bigr)$, where $u_\varepsilon(\eta):=u_{\qstar_\eta}$ under the realized verifier $U_\varepsilon$ of Lemma~\ref{lem:realize}; these verdicts are the learner's only observation, and $\hr=\hr(\text{past})$ is the same measurable map under both worlds. By the chain rule for the interaction and independence of the $k$ draws within a round,
\[
  \KL\!\bigl(\Prob_+^{N}\,\big\|\,\Prob_-^{N}\bigr)
  =\sum_{r=1}^{N}\E_+\Bigl[\, k\,\KL\!\bigl(\Ber(u_+(\hr))\,\big\|\,\Ber(u_-(\hr))\bigr)\Bigr].
\]
It remains to bound the per-round divergence. Write $d(\eta):=u_+(\eta)-u_-(\eta)$. By \eqref{eq:jac} and Lemma~\ref{lem:realize}, $d(\eta)=2\Delta\psi(\eta)/J+\rho(\eta)$ where the remainder $\rho$ collects the $O(\Delta^2)$ linearization term of \eqref{eq:jac} and the Lemma~\ref{lem:realize} error, both $O(\Delta/k)$; choosing $\Delta\le\Delta_\star$ makes $\|\rho\|_\infty\le\tfrac12\cdot 2\Delta/J$, so with $\|\psi\|_\infty\le1$ and $J\ge kc_0$ (Lemma~\ref{lem:scale}),
\[
  |d(\eta)|\ \le\ \frac{3\Delta}{J}\ \le\ \frac{3\Delta}{k c_0},
  \qquad \eta\in[0,\etamax].
\]
By Lemma~\ref{lem:scale} and $\|g_\varepsilon\|_\infty,\ \| \text{error}\|_\infty=O(\Delta/k)$, for $\Delta\le\Delta_\star$ we also have $u_-(\eta)\in[c_0/(2k),\,1-c_0/2]$, hence $u_-(1-u_-)\ge c_0^2/(4k)$. Using $\KL(\Ber(p)\|\Ber(q))\le(p-q)^2/(q(1-q))$,
\[
  k\,\KL\!\bigl(\Ber(u_+(\eta))\,\big\|\,\Ber(u_-(\eta))\bigr)
  \ \le\ \frac{k\,d(\eta)^2}{u_-(1-u_-)}
  \ \le\ \frac{k\cdot 9\Delta^2/(k^2c_0^2)}{c_0^2/(4k)}
  \ =\ \frac{36}{c_0^{4}}\,\Delta^2
  \ =:\ C_0\,\Delta^2 ,
\]
uniformly in $\eta$ (hence in $\hr$), with $C_0=36/c_0^{4}$ depending only on $c_0$ --- the budget $k$ cancels, precisely because $u_0=\Theta(1/k)$ (Lemma~\ref{lem:scale}) shrinks the per-round information at the same rate the per-sample gap does. Therefore
\begin{equation}\label{eq:kl}
  \KL\!\bigl(\Prob_+^{N}\,\big\|\,\Prob_-^{N}\bigr)\ \le\ C_0\,N\,\Delta^2.
\end{equation}

\emph{Remark (bandit sufficiency).} The pass bit $\Pi_r=1-\prod_{j=1}^k(1-U^{(r)}_j)\sim\Ber(f_\varepsilon(\hr))$ is a deterministic function of the round's verdicts, so by data processing $\KL(\Ber(f_+)\|\Ber(f_-))\le k\,\KL(\Ber(u_+)\|\Ber(u_-))$; conversely both sides are $\Theta(\Delta^2)$ because the flat base forces $u_0=\Theta(1/k)$ (successes are rare, so $\Pi_r$ is a sufficient statistic up to constants). Thus \eqref{eq:kl} is unchanged up to the constant $C_0$ whether the learner reads all $k$ verdicts or only the round's pass bit; we bound the former, which is the richer of the two feedback models.

\emph{Bretagnolle--Huber.} Let $A:=\{N_-+\tfrac12 N_\circ\ge N/4\}$; since the two counts sum to $N$, on $A^c$ we have $N_++\tfrac12 N_\circ>3N/4$. By the Bretagnolle--Huber inequality (see \cite{bretagnolle1979estimation}) and \eqref{eq:kl},
\[
  \Prob_+(A)+\Prob_-(A^c)
  \ge\tfrac12\exp\!\bigl(-\KL(\Prob_+^{N}\|\Prob_-^{N})\bigr)
  \ge\tfrac12\exp\!\bigl(-C_0 N\Delta^2\bigr).
\]
Choose $\Delta=(2C_0 N)^{-1/2}\wedge(c_0/2)$. For $N\ge N_0$, where $N_0=N_0(c_0,k,M)$ is large enough that both $\Delta=(2C_0N)^{-1/2}$ (so the minimum is the first term) and $\Delta\le\Delta_\star$ hold, we get $C_0 N\Delta^2=\tfrac12$ and
\begin{equation}\label{eq:bh}
  \Prob_+(A)+\Prob_-(A^c)\ \ge\ \tfrac12 e^{-1/2}\ \ge\ 0.30.
\end{equation}
By \eqref{eq:occ}, on $A$ the $\mathcal I_+$-regret is $\ge\Delta N/4$ and on $A^c$ the $\mathcal I_-$-regret is $\ge\Delta N/4$. Passing to the maximum via the average and using \eqref{eq:bh},
\[
  \max\{\E_+[\Reg],\E_-[\Reg]\}
  \ge\tfrac12\bigl(\E_+[\Reg]+\E_-[\Reg]\bigr)
  \ge\frac{\Delta N}{8}\bigl(\Prob_+(A)+\Prob_-(A^c)\bigr)
  \ge\frac{0.30\,\Delta N}{8}.
\]
Substituting $\Delta=(2C_0 N)^{-1/2}$, so that $\Delta N=\sqrt{N/(2C_0)}$,
\[
  \sup_{\theta\in\Theta}\E_\theta\bigl[\Reg_{\mathcal A}(N)\bigr]
  \ \ge\ \max\{\E_+[\Reg],\E_-[\Reg]\}
  \ \ge\ \frac{0.30}{8\sqrt{2C_0}}\,\sqrt{N}\ =:\ c\,\sqrt{N}
\]
for all $N\ge N_0$, with $c=0.30/(8\sqrt{2C_0})>0$ depending only on $c_0$. This is Theorem~\ref{thm:lower}. \hfill$\blacksquare$

\begin{remark}[Interpretation]
Dividing the cumulative lower bound by $N$ gives a worst-case average
pass@$k$ discrepancy of order $\Omega(N^{-1/2})$. This statistical barrier is
separate from the deterministic prefix-tracking discrepancy controlled by
Theorem~\ref{thm:passk-tracking}: even if price tracking were exact, finite
terminal verifier feedback still prevents uniformly faster adaptation over
the hard family.
\end{remark}

\section{Proof of Theorems~\ref{thm:collapse} and \ref{thm:partial}}\label{app:partial}
Throughout we use the \emph{sequence-level} calibration of Definition~\ref{def:calib},
\begin{equation}\label{eq:calib-recall}
U(w^n)=s\,\bar\ell(w^n)+\bar c+\bar r(w^n),\qquad
\bar\ell(w^n)=\tfrac1n\log p(w^n),\qquad \sup_{w^n}|\bar r(w^n)|\le\varepsilon,
\end{equation}
and write $\CE(q):=-\tfrac1n\E_q[\log p(w^n)]$, $\bar H(q):=\tfrac1n\sum_t\E_{w^{t-1}\sim q}\,H\bigl(q_t(\cdot\mid w^{t-1})\bigr)$. Every distribution on $\W^n$ factorizes autoregressively, so~\eqref{eq:PE} is an optimization over $\Delta(\W^n)$; we use this freely. Since $p_t\ge\rho>0$, $p(w^n)\ge\rho^n>0$ on all of $\W^n$.

\begin{lemma}[Sequence-level form of the budgets]\label{lem:app-seqbudgets}
For every policy $q$:
\textup{(a)} $-\tfrac1n\sum_t\E_{w^{t-1}\sim q}\bigl[\sum_w q_t\log p_t\bigr]=\CE(q)$;
\textup{(b)} $\bar H(q)=\tfrac1n H(q)$;
\textup{(c)} under~\eqref{eq:calib-recall}, $u_q=\bar c-s\,\CE(q)+\E_q[\bar r]$.
\end{lemma}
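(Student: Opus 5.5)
We prove the three parts of Lemma~\ref{lem:app-seqbudgets} in order, treating $q$ as an autoregressive law on $\W^n$, $q(w^n)=\prod_t q_t(w_t\mid w^{t-1})$. Parts (a) and (b) rest on the tower property and the chain rule, and part (c) is then pure substitution.

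For (a), fix $t$ and note that
\[
\E_{w^{t-1}\sim q}\Bigl[\sum_w q_t(w\mid w^{t-1})\log p_t(w\mid w^{t-1})\Bigr]=\E_{w^n\sim q}\bigl[\log p_t(w_t\mid w^{t-1})\bigr].
\]
This holds because the inner sum is the conditional expectation of $\log p_t(w_t\mid w^{t-1})$ given the prefix. The later tokens $w_{t+1:n}$ do not enter the integrand, so they integrate out to one. Summing over $t$ and using linearity gives $\E_q[\sum_t\log p_t(w_t\mid w^{t-1})]=\E_q[\log p(w^n)]$. Multiplying by $-1/n$ yields $\CE(q)$.

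For (b), the chain rule for Shannon entropy gives
\[
H(q)=\sum_t H(W_t\mid W^{t-1})=\sum_t\E_{w^{t-1}\sim q}\bigl[H(q_t(\cdot\mid w^{t-1}))\bigr].
\]
To verify it directly, write $-\log q(w^n)=-\sum_t\log q_t(w_t\mid w^{t-1})$, take the $q$-expectation, and apply the same tower-property argument as in (a) to each term. Dividing by $n$ gives $\bar H(q)=\tfrac1n H(q)$.

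For (c), take the $q$-expectation of \eqref{eq:calib-recall}. Since $U$ and $\bar r$ are bounded and $\bar\ell$ is bounded by $\log(1/\rho)$, all expectations are finite sums over the finite set $\W^n$. This gives $u_q=s\,\E_q[\bar\ell]+\bar c+\E_q[\bar r]$. Now $\E_q[\bar\ell]=\tfrac1n\E_q[\log p(w^n)]=-\CE(q)$ by the definition of $\CE$, which is matched by (a). Hence $u_q=\bar c-s\,\CE(q)+\E_q[\bar r]$.

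The only point needing care is the tower-property step, namely that integrating out future tokens is legitimate. Since everything is a finite sum with strictly positive $p$, this is routine, and there is no real obstacle.
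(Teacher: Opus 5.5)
Your proposal is correct and follows the paper's proof essentially line for line: the tower property gives (a), the chain rule of entropy gives (b), and for (c) you take $\E_q$ of the calibration identity and use $\E_q[s\bar\ell]=\tfrac{s}{n}\E_q[\log p(w^n)]=-s\,\CE(q)$. Your remarks on finiteness and on integrating out future tokens are harmless extra detail that the paper leaves implicit.
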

\begin{proof}
(a) $\E_q[\log p(w^n)]=\sum_t\E_q[\log p_t(w_t\mid w^{t-1})]=\sum_t\E_{w^{t-1}\sim q}\bigl[\sum_w q_t(w\mid w^{t-1})\log p_t(w\mid w^{t-1})\bigr]$ by the tower property. (b) Chain rule of entropy. (c) Take $\E_q$ in~\eqref{eq:calib-recall} and use $\E_q[s\bar\ell]=\tfrac sn\E_q\log p=-s\,\CE(q)$.
\end{proof}

\begin{lemma}[Tempered family]\label{lem:app-tempered}
For $a\ge0$ let $q_a(w^n)\propto p(w^n)^a$ and $A(a):=\log\sum_{w^n}p(w^n)^a$. Then:
\begin{enumerate}[label=\textup{(\roman*)},leftmargin=2.2em]
\item $H(q_a)=A(a)-aA'(a)$ and $\tfrac{d}{da}H(q_a)=-a\,\Var_{q_a}(\log p)$.
\item If $p(w^n)$ has a unique mode, then $H(q_0)=n\log|\W|$, $H(q_a)\to0$ as $a\to\infty$, and $a\mapsto H(q_a)$ is continuous and strictly decreasing on $(0,\infty)$. Hence for every $\beta\in(0,\log|\W|)$ there is a unique $a_\beta>0$ with $\bar H(q_{a_\beta})=\beta$, and $\beta\mapsto a_\beta$ is continuous and strictly decreasing.
\item \textup{(Entropy-constrained CE minimality.)} For every $q\in\Delta(\W^n)$ with $\bar H(q)\ge\beta$, $\CE(q)\ge\CE(q_{a_\beta})$, with equality iff $q=q_{a_\beta}$.
\end{enumerate}
\end{lemma}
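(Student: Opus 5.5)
The three parts all rest on one fact: $q_a$ is an exponential family on the finite set $\W^n$, with sufficient statistic $\log p(w^n)$ and natural parameter $a$, and $A(a)$ is its log-partition function.

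\emph{Part (i).} I will first record the standard log-partition identities
\[
A'(a)=\E_{q_a}[\log p],\qquad A''(a)=\Var_{q_a}(\log p).
\]
Since $\log q_a=a\log p-A(a)$, taking the expectation gives $H(q_a)=A(a)-a\,\E_{q_a}[\log p]=A(a)-aA'(a)$. Differentiating in $a$ then gives
\[
\tfrac{d}{da}H(q_a)=A'-A'-aA''=-a\Var_{q_a}(\log p).
\]
All quantities are smooth, because the sum defining $A$ is finite and every term is positive ($p\ge\rho^n$).

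\emph{Part (ii).} At $a=0$ the family $q_0$ is uniform on $\W^n$, so $H(q_0)=n\log|\W|$. As $a\to\infty$, $q_a$ concentrates on the unique mode, hence $H(q_a)\to0$.

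For strict monotonicity on $(0,\infty)$ I need $\Var_{q_a}(\log p)>0$. Since $q_a$ has full support, this variance is zero iff $p$ is constant on $\W^n$. A constant $p$ would make every sequence a mode, which contradicts uniqueness when $|\W|^n\ge2$.

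The intermediate value theorem then gives a unique $a_\beta$ with $H(q_{a_\beta})=n\beta$ for every $\beta\in(0,\log|\W|)$. Because the map $a\mapsto\bar H(q_a)$ is a continuous strictly decreasing bijection from $(0,\infty)$ onto $(0,\log|\W|)$, its inverse $\beta\mapsto a_\beta$ is continuous and strictly decreasing.

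\emph{Part (iii).} I will compare CE and entropy through a KL divergence. For any $q$,
\[
\mathrm{KL}(q\|q_a)=-H(q)-a\,\E_q[\log p]+A(a)=-H(q)+naCE(q)+A(a)\ge0.
\]
This gives $naCE(q)\ge H(q)-A(a)$. For $q_a$ itself the KL term vanishes, so $naCE(q_a)=H(q_a)-A(a)$. Subtracting the two relations at $a=a_\beta>0$ yields
\[
na_\beta\bigl(CE(q)-CE(q_{a_\beta})\bigr)=\mathrm{KL}(q\|q_{a_\beta})+H(q)-H(q_{a_\beta}).
\]
Now $H(q)\ge n\beta=H(q_{a_\beta})$, so the right-hand side is nonnegative, and since $a_\beta>0$ this gives $CE(q)\ge CE(q_{a_\beta})$.

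Equality forces both nonnegative terms on the right to vanish. In particular $\mathrm{KL}(q\|q_{a_\beta})=0$, hence $q=q_{a_\beta}$.

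The main obstacle is the degenerate case inside (ii): the argument for strict monotonicity must exclude a constant $p$, which is where the unique-mode hypothesis is used. Everything else is routine exponential-family calculus on a finite set.
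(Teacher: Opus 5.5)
Your proposal is correct and follows essentially the same route as the paper: log-partition identities for (i); for (ii), the uniform law at $a=0$, concentration on the mode as $a\to\infty$, and $\Var_{q_a}(\log p)>0$ from the unique-mode hypothesis; and for (iii), a nonnegative-KL comparison against $q_{a_\beta}$. Your exact identity $n a_\beta(\CE(q)-\CE(q_{a_\beta}))=\mathrm{KL}(q\|q_{a_\beta})+H(q)-H(q_{a_\beta})$ repackages the paper's inequality chain and makes the equality case slightly more transparent.
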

\begin{proof}
(i) $\log q_a=a\log p-A(a)$ and $A'(a)=\E_{q_a}[\log p]$, so $H(q_a)=-\E_{q_a}[a\log p-A(a)]=A-aA'$; differentiating, $\tfrac{d}{da}H(q_a)=-aA''(a)=-a\Var_{q_a}(\log p)$.
(ii) $q_0$ is uniform on $\W^n$ and $q_a$ concentrates on the unique mode as $a\to\infty$. A unique mode with $|\W^n|\ge2$ makes $\log p$ non-constant, and $q_a$ has full support, so $\Var_{q_a}(\log p)>0$ and (i) gives strict decrease. The claims on $a_\beta$ follow from continuity and strict monotonicity, using (b) of Lemma~\ref{lem:app-seqbudgets}.
(iii) For any $q$, $0\le\mathrm{KL}(q\,\|\,q_{a_\beta})=-H(q)-a_\beta\E_q[\log p]+A(a_\beta)$. Hence, using $H(q)\ge n\beta=H(q_{a_\beta})$ and (i),
\[
a_\beta\,n\,\CE(q)=-a_\beta\E_q[\log p]\ \ge\ H(q)-A(a_\beta)\ \ge\ H(q_{a_\beta})-A(a_\beta)=-a_\beta\E_{q_{a_\beta}}[\log p]=a_\beta\,n\,\CE(q_{a_\beta}).
\]
Divide by $a_\beta n>0$. Equality forces $\mathrm{KL}(q\|q_{a_\beta})=0$.
\end{proof}
Part (iii) uses no calibration: it is the maximum-entropy characterization of tempering, read as a cross-entropy minimization.

\begin{proof}[Proof of Theorem~\ref{thm:collapse}]
With $\varepsilon=0$, Lemma~\ref{lem:app-seqbudgets}(c) gives $u_q=\bar c-s\,\CE(q)$. Since $\passk(q)=1-(1-u_q)^k$ is strictly increasing in $u_q$ and $s>0$, for every $k$ program~\eqref{eq:PE} is equivalent to minimizing $\CE(q)$ over $\mathcal F:=\{q:\bar H(q)\ge\beta,\ \CE(q)\le\gamma\}$. By Lemma~\ref{lem:app-tempered}(iii), $\inf\{\CE(q):\bar H(q)\ge\beta\}=\CE(q_{a_\beta})$, attained only at $q_{a_\beta}$. If $\gamma<\CE(q_{a_\beta})$, then $\mathcal F=\emptyset$. If $\gamma\ge\CE(q_{a_\beta})$, then $q_{a_\beta}\in\mathcal F\subseteq\{\bar H\ge\beta\}$, so it is the unique minimizer over $\mathcal F$. This proves feasibility iff $\gamma\ge\CE(q_{a_\beta})$; $\gamma$ enters only through feasibility.

\emph{Relation to Theorem~\ref{thm:tilt}.} With $\lambda^\star=0$ and $\mu^\star=na_\beta/s$, the sequence law of Corollary~\ref{cor:seqtilt} in $(\lambda^\star,\mu^\star)$ form is $p^{\lambda^\star}e^{\mu^\star U}\propto e^{\mu^\star\frac sn\log p}=p^{a_\beta}$. This is the boundary case of Theorem~\ref{thm:tilt} in which the ceiling is slack (its multiplier vanishes) and the entropy multiplier is $\mu_E=n/\mu^\star=s/a_\beta>0$. The $(T,\eta)$ parametrization degenerates there, and the exponent is carried by $\mu^\star$ alone.

\emph{Conditionals.} Write $p(w^n)=p(w^t)\prod_{s>t}p_s(w_s\mid w^{s-1})$ and marginalize the future:
\begin{equation}\label{eq:pertoken-app}
q^\star_t(w\mid w^{t-1})=\frac{\sum_{w_{t+1:n}}p(w^n)^{a_\beta}}{\sum_{w,w_{t+1:n}}p(w^n)^{a_\beta}}\ \propto\ p_t(w\mid w^{t-1})^{a_\beta}\,\Psi^{(a_\beta)}_t(w^{t-1}w),
\end{equation}
the prefix factor $p(w^{t-1})^{a_\beta}$ cancelling, with $\Psi^{(a)}_t$ as in~\eqref{eq:pertoken} and $\Psi^{(a)}_n\equiv1$. Under prefix-independent conditionals, $\Psi^{(a)}_t=\prod_{s>t}\sum_{w_s}p_s(w_s)^a$ does not depend on the prefix or on $w$, leaves the normalizer, and~\eqref{eq:pertoken-app} reduces to~\eqref{eq:tempopt}.

At $s=0$, $u_q\equiv\bar c$ and every feasible $q$ is optimal. Finally, $\beta\mapsto a_\beta$ is strictly decreasing by Lemma~\ref{lem:app-tempered}(ii), which is the monotonicity used after Theorem~\ref{thm:collapse}.
\end{proof}

\begin{remark}[Why sequence-level scoring]\label{rem:app-tokenvsseq}
Lemma~\ref{lem:app-seqbudgets}(c) is exact because the calibration score $\bar\ell$ is the same functional as the plausibility ceiling, by Lemma~\ref{lem:app-seqbudgets}(a). A token-level hypothesis $V_t(w)=s\log p_t(w)+c_t$ does not yield it. Under~\eqref{eq:calib-recall} with $\varepsilon=0$, the potential of Lemma~1 at the prices above is
$V_t(w\mid w^{t-1})=\bar c+\tfrac sn\log p(w^{t-1})+\tfrac sn\log p_t(w\mid w^{t-1})+\tfrac1{\mu^\star}\log\Psi^{(a_\beta)}_t(w^{t-1}w)$:
affine in $\log p_t$ with slope $s/n$, plus the remaining-horizon term. Sequence-level scoring is also exactly what~\eqref{eq:ols} estimates.
\end{remark}
\begin{lemma}[Sharp mean-difference inequality]\label{lem:meandiff}
For probability measures $P,Q$ on a common space and any bounded $g$,
$\bigl|\E_P[g]-\E_Q[g]\bigr|\le\bigl(\sup g-\inf g\bigr)\,\TV(P,Q)$.
\end{lemma}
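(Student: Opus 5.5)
The plan is to reduce to the variational (dual) characterization of total variation. I will use $\TV(P,Q)=\sup_{S}\,|P(S)-Q(S)|$, which equals $\tfrac12\sum_x|P(x)-Q(x)|$ in the discrete setting relevant here (the general case is identical with densities with respect to $\nu=P+Q$). First I will reduce to a centered function. Let $m:=\tfrac12(\sup g+\inf g)$ and $\tilde g:=g-m$. Since both $P$ and $Q$ are probability measures, $\E_P[g]-\E_Q[g]=\E_P[\tilde g]-\E_Q[\tilde g]$, and $\|\tilde g\|_\infty=\tfrac12(\sup g-\inf g)$. This centering step is the only place where sharpness (the range rather than $2\|g\|_\infty$) enters.

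Second, I will write the difference against the signed measure $P-Q$. With $p,q$ the densities with respect to $\nu$,
\[
\bigl|\E_P[\tilde g]-\E_Q[\tilde g]\bigr|=\Bigl|\int \tilde g\,(p-q)\,d\nu\Bigr|\le \|\tilde g\|_\infty\int|p-q|\,d\nu=\|\tilde g\|_\infty\cdot 2\,\TV(P,Q).
\]
Combining this with the first step gives $(\sup g-\inf g)\,\TV(P,Q)$.

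An alternative route that avoids densities: split the space at $\{p>q\}$ and use $\int (p-q)_+\,d\nu=\int(p-q)_-\,d\nu=\TV(P,Q)$. Then
\[
\int g(p-q)\,d\nu\le \sup g\cdot\TV-\inf g\cdot\TV,
\]
and symmetrically for the lower bound.

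No step is a real obstacle. The only point needing care is the normalization convention for $\TV$: the paper uses Pinsker in the form $\TV\le\sqrt{\KL/2}$ in the proof of Theorem~\ref{thm:passk-tracking}, which fixes $\TV=\tfrac12\|P-Q\|_1$. The factor of $\tfrac12$ absorbed by centering must match that convention. Sharpness follows by taking $g=\one_S$ for the maximizing set $S=\{p>q\}$.
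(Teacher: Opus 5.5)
Your proof is correct and follows the paper's argument: you center $g$ at $\tfrac12(\sup g+\inf g)$ so that $|g-c|\le\tfrac12(\sup g-\inf g)$, then apply $\sum_x|P(x)-Q(x)|=2\,\TV(P,Q)$. Your splitting alternative over $\{p>q\}$ and the remark that $g=\mathbf{1}_{\{p>q\}}$ attains the bound are valid extras that the paper does not include.
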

\begin{proof}
For any constant $c$, $\E_P[g]-\E_Q[g]=\sum_x(P-Q)(x)(g(x)-c)$. Take $c=\tfrac12(\sup g+\inf g)$, so $|g-c|\le\tfrac12(\sup g-\inf g)$, and use $\sum_x|P(x)-Q(x)|=2\TV(P,Q)$.
\end{proof}

\begin{proof}[Proof of Theorem~\ref{thm:partial}]
Let $\tilde q:=q_{a_\beta}$, which is defined for $s\ge0$ since it depends only on $(p,\beta)$.

\textbf{(i)} By Lemma~\ref{lem:app-tempered}(ii), $\bar H(\tilde q)=\beta$, and $\CE(\tilde q)\le\gamma$ by hypothesis, so $\tilde q$ is feasible and $u_{q^\star}\ge u_{\tilde q}$. The optimum $q^\star$ is feasible, so $\bar H(q^\star)\ge\beta$, and Lemma~\ref{lem:app-tempered}(iii) gives $\CE(q^\star)\ge\CE(\tilde q)$. By Lemma~\ref{lem:app-seqbudgets}(c) and $s\ge0$,
\[
u_{q^\star}-u_{\tilde q}=s\bigl[\CE(\tilde q)-\CE(q^\star)\bigr]+\E_{q^\star}[\bar r]-\E_{\tilde q}[\bar r]\ \le\ \E_{q^\star}[\bar r]-\E_{\tilde q}[\bar r]\ \le\ 2\varepsilon.
\]
Lemma~\ref{lem:meandiff} sharpens the last step to $2\varepsilon\,\TV(q^\star,\tilde q)$.

\textbf{(ii)} The map $u\mapsto1-(1-u)^k$ has derivative $k(1-u)^{k-1}$, positive and non-increasing in $u$. By the mean value theorem there is $\xi\in[u_{\tilde q},u_{q^\star}]$ with $\passk(q^\star)-\passk(\tilde q)=k(1-\xi)^{k-1}(u_{q^\star}-u_{\tilde q})$. Since $\xi\ge u_{\tilde q}\ge u_{q^\star}-2\varepsilon$ by (i), $k(1-\xi)^{k-1}\le\sigma_k^{+}$. Combined with (i), the gap lies in $[0,2\varepsilon\sigma_k^{+}]$.

\textbf{(iii)} For $v\in(0,1)$ let $\sigma_v(k):=k(1-v)^{k-1}$, $k>0$. Then $\tfrac{d}{dk}\log\sigma_v(k)=\tfrac1k+\log(1-v)$, which is positive for $k<k_v:=-1/\log(1-v)$ and negative after. So $\sigma_v$ is strictly unimodal with maximizer $k_v$, and $\sigma_v(k)\to0$ as $k\to0^+$ and as $k\to\infty$. With $v=u_{q^\star}-2\varepsilon\in(0,1)$ we get $\sigma^{+}_k=\sigma_v(k)$, peaking at $k_v=1/v+O(1)$. Hence for every $\delta>0$ the set $\{k:2\varepsilon\sigma_k^{+}>\delta\}$ is a bounded (possibly empty) interval, and outside it $\tilde q$ is $\delta$-optimal by (ii).
\end{proof}

\section{Proof of Theorem~\ref{thm:ctd}: suboptimality and certification of CTD}\label{app:ctd}
We keep the notation of Appendix~\ref{app:partial}. The deployed policy is $\hat q_{\mathrm{CTD}}=q^{\mathrm{temp}}_{\hat a}$, per-token temperature at the Phase-2 exponent. The regression of Phase~1 does \emph{not} enter the policy: $\hat a$ depends only on $(p,\beta)$. The proof therefore has a deterministic part, the $\passk$ bound, and a stochastic part, the probability that Phase~1 certifies.

\paragraph{Part A: the deterministic bound.}
The proof uses only the following property of Phase~2:
\begin{equation}\label{eq:app-floor}
\bar H(\hat q_{\mathrm{CTD}})\ \ge\ \beta,\qquad \CE(\hat q_{\mathrm{CTD}})\le\gamma,
\end{equation}
which holds with equality in the entropy when $\hat a$ is the exact root. Then:
\begin{enumerate}[label=\textup{(\alph*)},leftmargin=2.2em]
\item $\hat q_{\mathrm{CTD}}$ is feasible, so $u_{q^\star}\ge u_{\hat q_{\mathrm{CTD}}}$.
\item By Lemma~\ref{lem:app-tempered}(iii) applied to $\hat q_{\mathrm{CTD}}$, $\Delta_{\CE}=\CE(\hat q_{\mathrm{CTD}})-\CE(q_{a_\beta})\ge0$, with equality iff $\hat q_{\mathrm{CTD}}=q_{a_\beta}$.
\item Applied to $q^\star$, which is feasible, the same lemma gives $\CE(q^\star)\ge\CE(q_{a_\beta})$.
\end{enumerate}
By Lemma~\ref{lem:app-seqbudgets}(c) and $s\ge0$,
\[
u_{q^\star}-u_{\hat q_{\mathrm{CTD}}}
= s\bigl[\CE(\hat q_{\mathrm{CTD}})-\CE(q^\star)\bigr]+\E_{q^\star}[\bar r]-\E_{\hat q_{\mathrm{CTD}}}[\bar r]
\ \le\ s\bigl[\CE(\hat q_{\mathrm{CTD}})-\CE(q_{a_\beta})\bigr]+2\varepsilon
= s\Delta_{\CE}+2\varepsilon .
\]
As in Theorem~\ref{thm:partial}(ii), the mean value theorem with $\xi\ge u_{q^\star}-(s\Delta_{\CE}+2\varepsilon)$ gives
$0\le\passk(q^\star)-\passk(\hat q_{\mathrm{CTD}})\le\sigma_k^{++}(s\Delta_{\CE}+2\varepsilon)$, which is~\eqref{eq:ctdbound}.

\emph{When $\Delta_{\CE}=0$.} Under prefix-independent conditionals, $\Psi^{(a)}_t$ is token-free for every $a$, so by~\eqref{eq:pertoken-app} $q^{\mathrm{temp}}_a=q_a$ for all $a$. Then $\bar H(\hat q_{\mathrm{CTD}})=\beta$ forces $\hat a=a_\beta$ by Lemma~\ref{lem:app-tempered}(ii), and $\Delta_{\CE}=0$. In general, $\Delta_{\CE}$ is the cross-entropy price, at fixed entropy, of dropping the remaining-horizon factor $\Psi^{(a)}_t$.

\begin{remark}[Monte-Carlo root in Phase~2]
Part~A needs only~\eqref{eq:app-floor}, not an exact root. If Phase~2 estimates $\bar H(q^{\mathrm{temp}}_a)$ to accuracy $\tau$ and bisects on the conservative target $\beta+\tau$, then~\eqref{eq:app-floor} holds on the accuracy event, and the extra slack enters the bound only through a larger $\Delta_{\CE}$.
\end{remark}

\paragraph{Part B: certification probability.}
Write $s_{\mathrm{spr}}^2=\Var(\bar\ell)\ge s_0^2$ and $s_{\mathrm{pop}}:=\Cov(\bar\ell,U)/\Var(\bar\ell)$.

\emph{Population slope} (Proposition~\ref{prop:ident}). By~\eqref{eq:calib-recall}, $\Cov(\bar\ell,U)=s\Var(\bar\ell)+\Cov(\bar\ell,\bar r)$. Since $|\bar r|\le\varepsilon$, Cauchy--Schwarz gives $|\Cov(\bar\ell,\bar r)|\le s_{\mathrm{spr}}\varepsilon$. Hence
\begin{equation}\label{eq:app-spop}
|s_{\mathrm{pop}}-s|\le\varepsilon/s_0=:C_0\varepsilon,\qquad |s_{\mathrm{pop}}|\le\tfrac{\sqrt{\Var U}}{s_{\mathrm{spr}}}\le\tfrac{1}{2s_0}.
\end{equation}

\emph{Concentration.} The empirical slope~\eqref{eq:ols} is $\hat s=\widehat\Cov/\widehat\Var$, a function of four empirical means: those of $\bar\ell\in[-B,0]$, $U\in\{0,1\}$, $\bar\ell U\in[-B,0]$, and $\bar\ell^2\in[0,B^2]$. By Hoeffding's inequality and a union bound, with probability at least $1-\delta/2$,
\[
|\widehat\Cov-\Cov|\vee|\widehat\Var-\Var|\le\eta_M:=c_1\bar B^2\sqrt{\tfrac{\log(8/\delta)}{N}},\qquad \bar B:=\max\{1,B\},
\]
with $c_1$ absolute. On this event, if $\eta_M\le s_0^2/2$, then $\widehat\Var\ge s_0^2/2$, and
\[
|\hat s-s_{\mathrm{pop}}|=\Bigl|\tfrac{\widehat\Cov-\Cov}{\widehat\Var}+s_{\mathrm{pop}}\tfrac{\Var-\widehat\Var}{\widehat\Var}\Bigr|
\le\frac{2\eta_M(1+|s_{\mathrm{pop}}|)}{s_0^2}\le\frac{C_s}{2}\sqrt{\tfrac{\log(8/\delta)}{N}},
\qquad C_s:=\frac{4c_1\bar B^2}{s_0^2}\Bigl(1+\frac1{2s_0}\Bigr).
\]

\emph{Sign.} Suppose $s>C_0\varepsilon$ and choose $\delta:=8\exp\bigl(-N(s-C_0\varepsilon)^2/C_s^2\bigr)$, so that $\tfrac{C_s}{2}\sqrt{\log(8/\delta)/N}=\tfrac12(s-C_0\varepsilon)$.
The side condition holds automatically: by~\eqref{eq:app-spop},
\[
\eta_M=c_1\bar B^2(s-C_0\varepsilon)/C_s\le c_1\bar B^2|s_{\mathrm{pop}}|/C_s<s_0^2/8.
\]
On the event, $\hat s\ge s_{\mathrm{pop}}-\tfrac12(s-C_0\varepsilon)\ge\tfrac12(s-C_0\varepsilon)>0$, using $s_{\mathrm{pop}}\ge s-C_0\varepsilon$. Hence
\[
\Pr(\hat s\le0)\ \le\ \delta/2\ \le\ 8\exp\!\bigl(-N(s-C_0\varepsilon)^2/C_s^2\bigr),
\]
with $C_0=1/s_0$ and $C_s$ depending only on $(B,s_0)$. This is the second claim of Theorem~\ref{thm:ctd}. \hfill$\blacksquare$

\begin{remark}[What the estimate does and does not control]
Unlike an analysis in which $\hat s$ sets the exponent, the policy here does not depend on $\hat s$. As a result there is no $O(1/\sqrt{N})$ term in~\eqref{eq:ctdbound} and no horizon factor to cancel. The estimate $(\hat s,\hat\varepsilon)$ serves two purposes: it decides whether the calibrated regime applies, and $\hat\varepsilon$ provides an empirical residual diagnostic for the deployment gate.
\end{remark}

\newif\ifPASSKstandalone
\ifdefined\qst\PASSKstandalonefalse\else\PASSKstandalonetrue\fi

\ifPASSKstandalone
  \documentclass[10pt]{article}
  \usepackage[letterpaper,margin=1in]{geometry}
  \usepackage{amsmath,amssymb,amsthm,bm}
  \usepackage[expansion=false]{microtype}
  \usepackage[colorlinks=true,allcolors=blue]{hyperref}
  \theoremstyle{plain}
  \newtheorem{theorem}{Theorem}
  \newtheorem{proposition}[theorem]{Proposition}
  \newtheorem{lemma}[theorem]{Lemma}
  \newtheorem{corollary}[theorem]{Corollary}
  \theoremstyle{remark}
  \newtheorem{remark}[theorem]{Remark}
  \setcounter{section}{9} % render as Appendix J when read alone
  \renewcommand{\thesection}{\Alph{section}}
  \begin{document}
\fi
\providecommand{\pass}{\mathrm{pass}@k}
\providecommand{\qst}{q^{\star}}
\providecommand{\lamst}{\lambda^{\star}}
\providecommand{\must}{\mu^{\star}}
\providecommand{\W}{\mathcal{W}}
\providecommand{\E}{\mathbb{E}}
\providecommand{\Var}{\mathrm{Var}}
\providecommand{\CE}{\mathrm{CE}}
\providecommand{\logit}{\mathrm{logit}}
\providecommand{\texorpdfstring}[2]{#1}   % harmless if hyperref is loaded

\def\sig{\sigma}
\def\one{\mathbb{1}}

\section{The optimal value \texorpdfstring{$\pass(\qst)$}{pass@k(q*)}}\label{app:passk}

Theorem~1 fixes the \emph{form} of the optimal decoder; here we compute the
\emph{value} it attains,
$\pass(\qst)=1-(1-u_{\qst})^k$ with
$u_{\qst}=\E_{w^n\sim\qst}[U(w^n)]$.
This provides an exact yardstick against which both estimators are scored,
without requiring Monte Carlo estimation of the optimum. Two facts organize the
computation. The potential $V$ of Lemma~1 depends only on $(p,U,\lamst,\must)$ and on no
policy, so nothing here requires a fixed point; and since the tilt carries $V$ while the value
averages $U$ --- objects that differ by a free energy (Corollary~\ref{cor:freeenergy}) --- $u_{\qst}$ is
\emph{not} a single softmax recursion but the second of two decoupled backward sweeps. At the
root the sequence-level tilt (Corollary~\ref{cor:seqtilt}), $\qst(w^n)\propto p(w^n)^{\lamst}e^{\must U(w^n)}$,
makes $(U,\log p)$ an exponential family, which pins $u_{\qst}$ in closed form. We then
specialize twice: to a binary verifier, where the value is a logistic in the tilt, and to the
count-threshold instances, where it is an $O(nM)$ dynamic program.

\paragraph{Two backward sweeps.}
Index the potential by the completed prefix, $V_t(w^{t}):=V_t(w_t\mid w^{t-1})$, so
$V_n(w^n)=U(w^n)$ and $V_0(\varnothing):=\tfrac1{\must}\log Z_1$. Lemma~1 reads, for
$0\le t\le n-1$,
\begin{equation}\label{eq:sweep1}
V_t(w^{t})=\tfrac1{\must}\log Z_{t+1}(w^{t}),\qquad
Z_{t+1}(w^{t})=\sum_{w'\in\W}p_{t+1}(w'\mid w^{t})^{\lamst}e^{\must V_{t+1}(w^{t}w')}=e^{\must V_t(w^{t})},
\end{equation}
and the policy is
\begin{equation}\label{eq:policy}
\qst_{t+1}(w'\mid w^{t})=p_{t+1}(w'\mid w^{t})^{\lamst}\,e^{\must(V_{t+1}(w^{t}w')-V_t(w^{t}))}.
\end{equation}
Equation~\eqref{eq:policy} is worth reading slowly: the optimal decoder is the tempered base
model reweighted by the \emph{increment} of the potential, the term $\must V_t(w^t)$ acting as
the log-normalizer at the node. Both objects in \eqref{eq:sweep1} are computable from the
instance alone.

\begin{proposition}[Exact value: two decoupled sweeps]\label{prop:value}
Define the pass cost-to-go $u_{n+1}(w^n)=U(w^n)$,
$u_t(w^{t-1})=\sum_{w}\qst_t(w\mid w^{t-1})\,u_{t+1}(w^{t-1}w)$. Then
$\pass(\qst)=1-(1-u_1(\varnothing))^{k}$. Sweep \eqref{eq:sweep1} does not involve $u$ and
the value backup does not feed back into $V$; each is a finite acyclic pass, so the pair is
exact in two sweeps with no iteration.
\end{proposition}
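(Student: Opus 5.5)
The plan has three parts: identify $u_1(\varnothing)$ with $u_{\qst}$, pass from $u_{\qst}$ to $\pass(\qst)$, and check that the two sweeps do not feed back into each other.

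\emph{Step 1: the backup computes the success probability.} We show by downward induction on $t$ that
\[
u_t(w^{t-1})=\E_{w_{t:n}\sim\qst}\bigl[U(w^n)\mid w^{t-1}\bigr].
\]
\begin{itemize}
\item At $t=n+1$ this is the definition $u_{n+1}=U$.
\item For the inductive step, condition on the next token $w_t=w$ and apply the tower property exactly as in \eqref{eq:app-tower}, now with $\rho_t\equiv0$. Since $\qst(w^n)=\prod_t\qst_t(w_t\mid w^{t-1})$, the conditional law of $w_{t+1:n}$ given $w^{t-1}w$ is the product of the remaining conditionals. The recursion defining $u_t$ is therefore precisely one step of iterated conditional expectation.
\end{itemize}
At $t=1$ the induction gives $u_1(\varnothing)=\E_{\qst}[U]=u_{\qst}$. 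Substituting into $\pass(q)=1-(1-u_q)^k$ yields the value formula.

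\emph{Step 2: the forward-sweep policy is well defined.} First, the conditionals \eqref{eq:policy} produced by sweep \eqref{eq:sweep1} must be the policy of Theorem~\ref{thm:tilt}. By \eqref{eq:sweep1}, $e^{\must V_t(w^t)}=Z_{t+1}(w^t)$, so \eqref{eq:policy} is \eqref{eq:opt1} with its normalizer written explicitly. Second, each conditional must be a probability vector. This follows because the normalizer equals $Z_{t+1}$ by Lemma~\ref{lem:bellman}, so the weights sum to one; positivity is inherited from $p\ge\rho$ together with the finiteness bounds \eqref{eq:B-bounds}.

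\emph{Step 3: decoupling and exactness.} This is the conceptual heart of the proposition; no step is hard, but the care goes here.
\begin{itemize}
\item Sweep \eqref{eq:sweep1} uses only $(p,U,\lamst,\must)$, as Lemma~\ref{lem:bellman} already records. So $V$, and hence $\qst$ via \eqref{eq:policy}, is fully determined before any $u$ is formed.
\item The value backup uses $\qst$ and $U$ but never writes into $V$.
\item Both recursions are indexed by prefix depth on the finite token tree $\W^{\le n}$. Each layer depends only on the layer below, so each is evaluated once from depth $n$ down to $0$. There is no fixed point, and uniqueness follows by downward induction as in the proof of Lemma~\ref{lem:bellman}.
\end{itemize}

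I would close with a remark explaining why a single sweep cannot suffice. By Corollary~\ref{cor:freeenergy}, $V_t$ differs from the expectation backup by the entropy--plausibility correction. The quantity $u_t$ is the pure expectation backup of $U$, whereas $V$ carries an additional free-energy term, so in general $u_t\neq V_t$.
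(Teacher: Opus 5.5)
Your proposal is correct and follows essentially the same route as the paper. The value backup is the tower property for the fixed sampler $q^\star$, which gives $u_1(\varnothing)=u_{q^\star}$ and, with i.i.d.\ draws, the $\mathrm{pass}@k$ formula; and Lemma~\ref{lem:bellman} makes $V$, hence $q^\star$, a function of $(p,U,\lambda^\star,\mu^\star)$ alone, so the first sweep closes without $u$. Your Step~2 (checking that the policy is normalized) and your closing free-energy remark are extra detail that the paper leaves implicit or places after the proof; they do not change the argument.
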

\begin{proof}
The backup is the tower property for the fixed sampler $\qst$; by Lemma~1, $V$ (hence
$\qst$) is a deterministic function of $(p,U,\lamst,\must)$, so \eqref{eq:sweep1} closes
without $u$, and the $k$ draws are i.i.d.\ under $\qst$.
\end{proof}

\noindent The two sweeps must be kept apart. Folding them into the single recursion
$u_t\propto\sum_w p_t^{\lamst}e^{\must u_{t+1}}u_{t+1}$ is invalid once $V^\star\ne J$: by
Corollary~\ref{cor:freeenergy}, $V_{t-1}=\E_{\qst_t}[V_t]+\tfrac1{\must}[H(\qst_t)-\lamst\CE(\qst_t,p_t)]$ exceeds
its own backup by a free energy, so $V$ is not the pass probability. The two objects are
distinct in general; at the terminal step the free-energy continuation reduces directly to $U$,
while degenerate-price limits should be interpreted through the product $\must V_t$ as in
Appendix~\ref{app:lem1}. The first sweep is a log-sum-exp over children, the second a plain
average; only the first sits in an exponent.

\paragraph{Free-energy form.}
Let $\mathcal A(\lambda,\mu):=\log\sum_{w^n}p(w^n)^{\lambda}e^{\mu U(w^n)}$ be the cumulant
generating function of the pair $(\log p,U)$ over $\W^n$, so
$\mathcal A(\lamst,\must)=\log Z_1=\must V_0(\varnothing)$ by \eqref{eq:sweep1} at $t=0$: the
root potential \emph{is} the sequence free energy, obtained by one sweep with no enumeration
of $\W^n$.
\begin{lemma}[Dual gradients]\label{lem:dual}
For $q^{\lambda,\mu}\propto p^{\lambda}e^{\mu U}$,
$\partial_\mu\mathcal A=u_{q^{\lambda,\mu}}$, $\partial_\lambda\mathcal A=\E[\log p]=-\overline{\CE}$,
$\partial^2_\mu\mathcal A=\Var(U)\ge0$, $\partial^2_\lambda\mathcal A=\Var(\log p)\ge0$; in
particular $u_{\qst}=\partial_\mu\mathcal A|_{(\lamst,\must)}$.
\end{lemma}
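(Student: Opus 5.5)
The lemma is the standard exponential-family cumulant identity applied to the finite sum $\mathcal A(\lambda,\mu)=\log\sum_{w^n}p(w^n)^{\lambda}e^{\mu U(w^n)}$. Since $\W^n$ is finite and $p(w^n)\ge\rho^n>0$, every summand $\exp(\lambda\log p(w^n)+\mu U(w^n))$ is a smooth positive function of $(\lambda,\mu)\in\R^2$. So $\mathcal A$ is $C^\infty$, and we may differentiate term by term with no interchange-of-limits issue. Writing $q^{\lambda,\mu}(w^n)=p(w^n)^\lambda e^{\mu U(w^n)}/e^{\mathcal A(\lambda,\mu)}$ exhibits $q^{\lambda,\mu}$ as the exponential family with sufficient statistic $\phi(w^n)=(\log p(w^n),U(w^n))$, natural parameter $(\lambda,\mu)$, and log-partition $\mathcal A$.

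\emph{First derivatives.} Differentiate $\log$ of the sum: $\partial_\mu\mathcal A=\sum_{w^n}U\,p^\lambda e^{\mu U}/\sum_{w^n}p^\lambda e^{\mu U}=\E_{q^{\lambda,\mu}}[U]=u_{q^{\lambda,\mu}}$. In the same way, $\partial_\lambda\mathcal A=\E_{q^{\lambda,\mu}}[\log p]$. By Lemma~\ref{lem:app-seqbudgets}(a), this equals $-n\,\CE(q^{\lambda,\mu})$. I read $\overline{\CE}$ as the sequence-level (unnormalized) cross-entropy $-\E_q[\log p]$; I would state explicitly that $\overline{\CE}=n\,\CE$ in the per-token normalization, to avoid a factor-$n$ ambiguity.

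\emph{Second derivatives.} Differentiate once more using the quotient rule, or equivalently note $\partial_\mu\E_q[U]=\E_q[U^2]-(\E_q[U])^2$ because $\partial_\mu q=q\,(U-\E_q U)$. This gives $\partial^2_\mu\mathcal A=\Var_{q^{\lambda,\mu}}(U)\ge0$ and likewise $\partial^2_\lambda\mathcal A=\Var_{q^{\lambda,\mu}}(\log p)\ge0$. The mixed partial is $\Cov(U,\log p)$, so the full Hessian is the covariance matrix of $\phi$, which is positive semidefinite. This is the same computation already used in Lemma~\ref{lem:app-tempered}(i) and in Lemma~\ref{lem:kl-perturbation}.

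\emph{Specialization.} By Corollary~\ref{cor:seqtilt}, the optimum has sequence law $q^\star\propto p^{\lambda^\star}e^{\mu^\star U}$, that is, $q^\star=q^{\lambda^\star,\mu^\star}$. Evaluating the first identity at $(\lambda^\star,\mu^\star)$ gives $u_{q^\star}=\partial_\mu\mathcal A|_{(\lambda^\star,\mu^\star)}$.

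There is essentially no obstacle. The only point needing care is identifying the autoregressive optimum of Theorem~\ref{thm:tilt} with the sequence-level Gibbs law, and that is exactly Corollary~\ref{cor:seqtilt}, obtained by telescoping the conditionals $q^\star_t=p_t^{\lambda^\star}e^{\mu^\star(V_t-V_{t-1})}$ so that the normalizers cancel down to $Z_1$.
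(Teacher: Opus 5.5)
Your proposal is correct and follows the paper's proof. Both treat $q^{\lambda,\mu}$ as the exponential family with statistic $(\log p,U)$ and log-partition $\mathcal A$, then read off the means and variances by differentiation (the paper differentiates $\sum q^{\lambda,\mu}=1$, you differentiate $\log\sum$ directly, which is the same computation), use $\log p(w^n)=\sum_t\log p_t$ for the cross-entropy term, and specialize through Corollary~\ref{cor:seqtilt}, with your reading $\overline{\CE}=n\,\CE$ matching the unnormalized total defined in Proposition~\ref{prop:budget}.
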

\begin{proof}
$q^{\lambda,\mu}=\exp(\lambda\log p+\mu U-\mathcal A)$; differentiating $\sum q^{\lambda,\mu}=1$
once gives the means, twice the variances, and $\log p(w^n)=\sum_t\log p_t$ gives $-\overline{\CE}$.
\end{proof}
\noindent So the value is a dual gradient of the same free energy the decoder normalizes by;
the free energy is convex and the value is non-decreasing in the tilt. The next identity converts this into a
quantity the root-solve already produces.
\begin{proposition}[Value $=$ root potential minus budget]\label{prop:budget}
Here $\bar H:=\sum_{t=1}^n \E_{\qst}[H(\qst_t)]$ and
$\overline{\CE}:=-\sum_{t=1}^n \E_{\qst}\sum_w \qst_t(w\mid w^{t-1})
\log p_t(w\mid w^{t-1})$ denote the unnormalized horizon totals.
With $\tfrac1n\bar H=\beta$ and $\tfrac1n\overline{\CE}=\gamma$ the active budgets of (PE),
\begin{equation}\label{eq:budget}
u_{\qst}=V_0(\varnothing)-\frac{n}{\must}\big(\beta-\lamst\gamma\big),\qquad
\pass(\qst)=1-(1-u_{\qst})^{k}.
\end{equation}
\end{proposition}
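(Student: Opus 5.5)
The plan is to read off the identity from the sequence-level exponential-family form of the optimum, by computing the entropy of $\qst$ in two ways.

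First, by Corollary~\ref{cor:seqtilt} the optimal sequence law is
$\qst(w^n)=\exp\bigl(\lamst\log p(w^n)+\must U(w^n)-\mathcal A(\lamst,\must)\bigr)$. By \eqref{eq:sweep1} at $t=0$, the log-normalizer satisfies $\mathcal A(\lamst,\must)=\log Z_1=\must V_0(\varnothing)$. Taking $-\E_{\qst}$ of $\log\qst$ then gives the sequence entropy:
\[
H(\qst)=-\lamst\,\E_{\qst}[\log p(w^n)]-\must\,u_{\qst}+\must V_0(\varnothing).
\]

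Second, I rewrite the two expectations through the horizon totals. The chain rule of entropy, as in Lemma~\ref{lem:app-seqbudgets}(b), gives $H(\qst)=\sum_t\E_{\qst}[H(\qst_t)]=\bar H$. The tower property, as in Lemma~\ref{lem:app-seqbudgets}(a), gives $\E_{\qst}[\log p(w^n)]=-\overline{\CE}$. Since both budgets are active, $\bar H=n\beta$ and $\overline{\CE}=n\gamma$. This equality is exactly complementary slackness from Step~4 of the proof of Theorem~\ref{thm:tilt} in the interior case where both multipliers are positive. Substituting yields
\[
n\beta=\lamst n\gamma-\must u_{\qst}+\must V_0(\varnothing).
\]
Dividing by $\must>0$ and rearranging gives \eqref{eq:budget}. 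The $\pass$ formula then follows because the $k$ draws are i.i.d.\ under $\qst$, as in Proposition~\ref{prop:value}.

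The only delicate point is bookkeeping about normalizations, and I would state it explicitly. The per-token factors telescope to the sequence law with normalizer $Z_1$, using \eqref{eq:policy} and $V_n=U$. The unnormalized totals $\bar H,\overline{\CE}$ must be matched to $n\beta$ and $n\gamma$. One must also assume $\must>0$ and that both constraints are active, so that $V_0$ is defined and equality holds. If the ceiling were slack, one would set $\lamst=0$ and the $\gamma$ term would drop out. An alternative check is via Lemma~\ref{lem:dual}: the Fenchel identity $\mathcal A=\lamst\E\log p+\must u+H$ gives the same result.
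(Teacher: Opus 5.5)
Your proof is correct and is essentially the paper's argument. The paper averages $\log$ of the per-token policy \eqref{eq:policy} under $q^\star_t$ (Corollary~\ref{cor:freeenergy}) and then telescopes the potential increments over $t$ using $V_n=U$; you telescope first into the sequence-level Gibbs law with log-normalizer $\mu^\star V_0(\varnothing)$, then average, recovering the horizon totals through the entropy chain rule and the tower property. Both orderings yield the same identity $V_0(\varnothing)=u_{q^\star}+\tfrac{1}{\mu^\star}\bigl[\bar H-\lambda^\star\,\overline{\CE}\bigr]$, and your bookkeeping ($\bar H=n\beta$, $\overline{\CE}=n\gamma$, $\mu^\star>0$) is right.
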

\begin{proof}
Averaging $\log\eqref{eq:policy}$ under $\qst_t(\cdot\mid w^{t-1})$ gives Corollary~\ref{cor:freeenergy}; taking
the $\qst$-occupancy expectation and summing over $t$ telescopes to
$V_0(\varnothing)=u_{\qst}+\tfrac1{\must}[\bar H-\lamst\overline{\CE}]$, using
$V_n=U$. Substitute $\bar H=n\beta$, $\overline{\CE}=n\gamma$.
\end{proof}
\noindent Every term on the right comes from the same root-solve that returns $(T,\eta)$, so
comparing \eqref{eq:budget} with the forward value sweep of Proposition~\ref{prop:value} is a
free implementation check: the two must agree to machine precision.

\paragraph{Binary verifier: a logistic in the tilt.}
When $U\in\{0,1\}$ the sequence space splits into a passing and a failing part, and only the
tempered mass of each survives. For $\lambda>0$ put
\begin{equation}\label{eq:logodds}
P_\lambda:=\!\!\sum_{w^n:\,U(w^n)=1}\!\! p(w^n)^{\lambda},\qquad
Q_\lambda:=\!\!\sum_{w^n:\,U(w^n)=0}\!\! p(w^n)^{\lambda},\qquad
\ell_\lambda:=\log\frac{P_\lambda}{Q_\lambda}.
\end{equation}
The scalar $\ell_\lambda$ is the \emph{tempered log-odds of passing}: writing
$\pi_\lambda(w^n)\propto p(w^n)^{\lambda}$ for pure sequence-level tempering at $T=1/\lambda$
(no verifier tilt), one has $\Pr_{\pi_\lambda}[U=1]=P_\lambda/(P_\lambda+Q_\lambda)$ and hence
$\ell_\lambda=\logit\Pr_{\pi_\lambda}[U=1]$. At $\lambda=1$ it is the base model's own log-odds
of solving the task; $\ell_\lambda$ is finite exactly when both classes carry mass, which is the
only non-degeneracy needed below. Two cautions: $\ell_\lambda$ is a single number attached to
the whole instance and is sequence-level (it grows like $n$ through $\log p$), so it is not a
per-token quantity; and despite the shared letter it is unrelated to the per-sequence score
$\bar\ell(w^n)=\tfrac1n\log p(w^n)$ of Definition~1, which reappears in
Corollary~\ref{cor:ctdlink}.
\begin{theorem}[Exact value, binary verifier]\label{thm:logistic}
Let $U\in\{0,1\}$ and $\sig(x)=1/(1+e^{-x})$. Then $\mathcal A(\lambda,\mu)=\log(Q_\lambda+e^{\mu}P_\lambda)$,
$u_{q^{\lambda,\mu}}=\sig(\mu+\ell_\lambda)$, and
\begin{equation}\label{eq:logistic}
u_{\qst}=\sig(\must+\ell_{\lamst}),\qquad
\pass(\qst)=1-\big(1-\sig(\must+\ell_{\lamst})\big)^{k},
\end{equation}
with exact sensitivities $\partial_\mu u=u(1-u)>0$ and
$\partial_\lambda u=u(1-u)\big(\E_{\pi^{+}_\lambda}[\log p]-\E_{\pi^{-}_\lambda}[\log p]\big)$,
where $\pi^{\pm}_\lambda$ are $p^{\lambda}$ restricted to the passing / failing sequences and
normalized.
\end{theorem}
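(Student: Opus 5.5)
The plan is to work entirely at the sequence level, through the cumulant generating function $\mathcal A(\lambda,\mu)=\log\sum_{w^n}p(w^n)^{\lambda}e^{\mu U(w^n)}$, and to invoke Lemma~\ref{lem:dual} for the means and sensitivities. Everything will reduce to splitting $\W^n$ into the passing set $\{U=1\}$ and the failing set $\{U=0\}$.

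\emph{Step 1: closed form of $\mathcal A$.} Since $U\in\{0,1\}$, the factor $e^{\mu U(w^n)}$ equals $e^{\mu}$ on the passing set and $1$ on the failing set. Splitting the sum accordingly gives $\mathcal A(\lambda,\mu)=\log(Q_\lambda+e^{\mu}P_\lambda)$ with $P_\lambda,Q_\lambda$ as in \eqref{eq:logodds}. Both are finite and positive because $p(w^n)\ge\rho^n>0$ and, by the non-degeneracy hypothesis, both classes are nonempty.

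\emph{Step 2: the logistic value.} By Lemma~\ref{lem:dual}, $u_{q^{\lambda,\mu}}=\partial_\mu\mathcal A$. Differentiating gives
\[
\partial_\mu\mathcal A=\frac{e^{\mu}P_\lambda}{Q_\lambda+e^{\mu}P_\lambda}=\frac{1}{1+e^{-\mu}Q_\lambda/P_\lambda}=\sig(\mu+\ell_\lambda).
\]
The optimum has sequence law $\qst\propto p^{\lamst}e^{\must U}$ by Corollary~\ref{cor:seqtilt}, so $\qst=q^{\lamst,\must}$. Evaluating at $(\lamst,\must)$ yields \eqref{eq:logistic}, and the $\pass$ formula follows from $\pass(q)=1-(1-u_q)^k$. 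One can cross-check this against Proposition~\ref{prop:budget}, though that is not needed.

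\emph{Step 3: sensitivities.} The identity $\sig'=\sig(1-\sig)$ gives $\partial_\mu u=u(1-u)>0$, since $u\in(0,1)$ whenever both classes carry mass. For $\lambda$, the chain rule gives $\partial_\lambda u=u(1-u)\,\partial_\lambda\ell_\lambda$. Next compute $\partial_\lambda\log P_\lambda=\sum_{U=1}p^{\lambda}\log p/P_\lambda=\E_{\pi^+_\lambda}[\log p]$, and likewise $\partial_\lambda\log Q_\lambda=\E_{\pi^-_\lambda}[\log p]$. Their difference is the stated expression.

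The only delicate point is checking that $\qst$ coincides with $q^{\lamst,\must}$ as a sequence law. This is exactly Corollary~\ref{cor:seqtilt}, which holds for any $\lamst\ge0$, even though the statement restricts to $\lambda>0$. Finite differentiability is also needed, but it is immediate on a finite sum with strictly positive terms.
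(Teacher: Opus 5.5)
Your proposal is correct and follows essentially the same route as the paper: split $\mathcal A$ by the value of $U$, read off $u=\partial_\mu\mathcal A$ (Lemma~\ref{lem:dual}) as the logistic $\sigma(\mu+\ell_\lambda)$, and get both sensitivities by chaining through $\sigma'=\sigma(1-\sigma)$ and $\partial_\lambda\log P_\lambda=\E_{\pi^{+}_\lambda}[\log p]$. You also make explicit two points the paper leaves implicit: the identification of $q^{\star}$ with $q^{\lambda^{\star},\mu^{\star}}$ via Corollary~\ref{cor:seqtilt}, and the requirement that both classes carry mass so that $\ell_\lambda$ is finite and $u\in(0,1)$.
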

\begin{proof}
Split $\mathcal A$ by the value of $U$; then $u=\partial_\mu\mathcal A=e^{\mu}P_\lambda/(Q_\lambda+e^{\mu}P_\lambda)=\sig(\mu+\ell_\lambda)$,
$\partial_\mu u=\sig'=u(1-u)$, and $\partial_\lambda u$ follows by chaining through $\ell_\lambda$
with $\partial_\lambda\log P_\lambda=\E_{\pi^{+}_\lambda}[\log p]$.
\end{proof}
\noindent Read \eqref{eq:logistic} as an additive decomposition on the logit scale: temperature
sets the prior log-odds $\ell_{\lamst}$ and the verifier tilt shifts it by exactly $\must$. The
two prices therefore act on the same axis but through different channels --- $\must$ enters with
unit slope, while $\lamst$ enters only through how tempering redistributes mass between the
passing and failing classes, which is the difference of means in $\partial_\lambda u$ and can
vanish.
\begin{corollary}[The $\lambda$-sensitivity is CTD's regression slope]\label{cor:ctdlink}
With $\bar\ell=\tfrac1n\log p$, $\partial_\lambda u=n\,u(1-u)\big(\E_{\pi^{+}_\lambda}[\bar\ell]-\E_{\pi^{-}_\lambda}[\bar\ell]\big)$;
at $\lambda=1$ this difference equals $s\Var(\bar\ell)/(\bar u(1-\bar u))+O(\varepsilon)$ under
$(s,\varepsilon)$-calibration (point-biserial), the population form of CTD's OLS slope $\hat s$.
A near-zero slope therefore indicates weak local sensitivity of the value to temperature,
consistent with flat CTD curves on HumanEval and MBPP.
\end{corollary}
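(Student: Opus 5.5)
The plan is to treat the two claims separately. The first is a change of variables in Theorem~\ref{thm:logistic}. The second is the classical point-biserial identity for a binary variable, combined with the covariance decomposition already used in Part~B of Appendix~\ref{app:ctd}.

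\emph{Step 1 (the $\lambda$-sensitivity).} I would start from the exact sensitivity in Theorem~\ref{thm:logistic}, namely $\partial_\lambda u=u(1-u)\big(\E_{\pi^{+}_\lambda}[\log p]-\E_{\pi^{-}_\lambda}[\log p]\big)$. Since $\log p(w^n)=n\,\bar\ell(w^n)$ pointwise on $\W^n$, linearity of expectation under each of $\pi^{\pm}_\lambda$ gives the stated formula with the factor $n$ pulled out. Nothing beyond finiteness of $\ell_\lambda$ (both classes carry mass, guaranteed by $p\ge\rho^n>0$ and $U$ non-constant) is needed.

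\emph{Step 2 (point-biserial identity at $\lambda=1$).} At $\lambda=1$ the tempered law $\pi_1\propto p$ is the base sequence law itself. Hence $\pi^{+}_1=p(\cdot\mid U=1)$ and $\pi^{-}_1=p(\cdot\mid U=0)$. Write $\bar u:=\Pr_p[U=1]$. For binary $U$, I would expand $\Cov_p(\bar\ell,U)=\E_p[\bar\ell U]-\E_p[\bar\ell]\,\bar u$ by conditioning on $U$. This gives the exact identity
\[
\Cov_p(\bar\ell,U)=\bar u(1-\bar u)\big(\E_{\pi^{+}_1}[\bar\ell]-\E_{\pi^{-}_1}[\bar\ell]\big).
\]
The conditioning uses $\E_p[\bar\ell]=\bar u\,\E_{\pi^+_1}[\bar\ell]+(1-\bar u)\E_{\pi^-_1}[\bar\ell]$.

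\emph{Step 3 (inserting calibration).} Under $(s,\varepsilon)$-calibration~\eqref{eq:calib}, exactly as in~\eqref{eq:app-spop}, $\Cov_p(\bar\ell,U)=s\Var_p(\bar\ell)+\Cov_p(\bar\ell,\bar r)$. Cauchy--Schwarz with $|\bar r|\le\varepsilon$ bounds the last term by $\sqrt{\Var_p(\bar\ell)}\,\varepsilon\le B\varepsilon$. Dividing by $\bar u(1-\bar u)$ gives the difference of class means as $s\Var(\bar\ell)/(\bar u(1-\bar u))$ plus an error at most $B\varepsilon/(\bar u(1-\bar u))$. The population OLS slope is $s_{\mathrm{pop}}=\Cov(\bar\ell,U)/\Var(\bar\ell)$, which Proposition~\ref{prop:ident} shows $\hat s$ estimates. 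So the mean difference equals $s_{\mathrm{pop}}\Var(\bar\ell)/(\bar u(1-\bar u))$ exactly, which identifies it as the population form of CTD's slope. The final sentence about near-zero slopes is then interpretive. It follows because $\partial_\lambda u$ is proportional to $s_{\mathrm{pop}}$ at $\lambda=1$ with a positive factor.

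\emph{Main obstacle.} The delicate point is making the $O(\varepsilon)$ honest. Its constant is $\sqrt{\Var(\bar\ell)}/(\bar u(1-\bar u))$, which blows up as $\bar u\to0$ or $1$. I would state it under $\bar u$ bounded away from $\{0,1\}$, in the spirit of $c_0$ in Assumption~\ref{ass:lb}, or keep it explicit. A second point to flag is that two different pass probabilities appear. The prefactor $u(1-u)$ refers to the tilted law $q^{\lambda,\mu}$, whereas $\bar u(1-\bar u)$ refers to the base law. The identification concerns only the bracketed mean difference, so these two quantities should not be conflated.
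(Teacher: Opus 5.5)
Your proposal is correct and follows the argument the paper intends. The paper gives no separate proof beyond the ``point-biserial'' pointer, and your three steps supply it: you substitute $\log p=n\bar\ell$ into the $\lambda$-sensitivity of Theorem~\ref{thm:logistic}, use the exact identity $\Cov_p(\bar\ell,U)=\bar u(1-\bar u)\bigl(\E_{\pi^{+}_1}[\bar\ell]-\E_{\pi^{-}_1}[\bar\ell]\bigr)$ at $\lambda=1$, and insert the calibration decomposition $\Cov(\bar\ell,U)=s\Var(\bar\ell)+\Cov(\bar\ell,\bar r)$ as in \eqref{eq:app-spop}. Your two flags are accurate refinements the paper leaves implicit: the $O(\varepsilon)$ constant $\sqrt{\Var(\bar\ell)}/(\bar u(1-\bar u))$ degenerates as $\bar u\to\{0,1\}$, and the prefactor $u(1-u)$ belongs to the tilted law while $\bar u(1-\bar u)$ belongs to the base law. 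One further caveat: the exact equality with $s_{\mathrm{pop}}$ holds for the fixed prompt's base law, whereas $\hat s$ in \eqref{eq:ols} pools prompts, so the link to CTD's slope runs through the common $s$ and holds only up to $O(\varepsilon)$.
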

\noindent (Without binariness the logistic is lost, but $u_{\qst}=\partial_\mu\mathcal A$ and
$\partial_\mu u=\Var_{\qst}(U)\ge0$ still makes the value increasing in the tilt.)

\paragraph{Count-threshold benchmarks.}
For these count-threshold instances, the exact optimum is available and can be used to score
both estimators. The instance carries, at each step $t$, a \emph{good set}
$G_t\subseteq\W$: the tokens that count as progress at that step (in this construction, the
tokens whose merit clears the threshold; $G_t$ is part of the specification, not learned). A
sequence passes iff at least $M$ of its $n$ tokens are good,
\begin{equation}\label{eq:count}
U(w^n)=\mathbf{1}\big[\,|\{t:w_t\in G_t\}|\ge M\,\big],
\end{equation}
so $M=1$ is ``at least one correct step'' and $M=n$ is pure AND. Assume in addition that both
the base conditionals and the good sets are prefix-free, $p_t(\cdot\mid w^{t-1})=p_t(\cdot)$ and
$G_t$ depending on $t$ alone. Then no feature of a prefix matters except the running count
$c_t:=|\{s\le t:w_s\in G_s\}|$, which is a sufficient statistic for both sweeps: $V$ and $u$
collapse from functions on $\W^{t}$ to functions of $c\in\{0,\dots,t\}$. Writing the tempered
good and bad masses at step $t$ as $P^{(\lambda)}_t=\sum_{w\in G_t}p_t(w)^{\lambda}$,
$Q^{(\lambda)}_t=\sum_{w\notin G_t}p_t(w)^{\lambda}$ and $C^{(\lambda)}_t=P^{(\lambda)}_t+Q^{(\lambda)}_t$,
the two sweeps of Proposition~\ref{prop:value} become
\begin{align*}
S_t(c)&=\tfrac1{\must}\log\!\big(P^{(\lamst)}_{t+1}e^{\must S_{t+1}(c+1)}+Q^{(\lamst)}_{t+1}e^{\must S_{t+1}(c)}\big),\\
g_t(c)&=\frac{P^{(\lamst)}_{t+1}e^{\must S_{t+1}(c+1)}}{P^{(\lamst)}_{t+1}e^{\must S_{t+1}(c+1)}+Q^{(\lamst)}_{t+1}e^{\must S_{t+1}(c)}},
\end{align*}
\[
v_t(c)=g_t(c)\,v_{t+1}(c+1)+(1-g_t(c))\,v_{t+1}(c),\qquad
S_n(c)=v_n(c)=\one[c\ge M],\qquad u_{\qst}=v_0(0).
\]
Here $S_t(c)$ is the potential of a state with count $c$ after $t$ tokens, $g_t(c)$ is the
probability that the optimal decoder emits a \emph{good} token from that state, and $v_t(c)$ is
the pass value. The division of labour of Proposition~\ref{prop:value} is visible in the
displays: $S$ log-sum-exps, $v$ averages, and only $S$ appears in the exponent of $g$. Cost is
$O(nM)$ time and $O(M)$ memory, all $k$ at once, with no Monte-Carlo.

For pure AND ($M=n$) the count collapses to alive/dead. A dead branch still carries the
continuation free energy $e^{\must S^{\mathrm d}_{t+1}}=\prod_{s>t+1}C^{(\lamst)}_s\ne1$ (that
is the \emph{potential}; the pass \emph{value} on a dead branch is $0$), and
\begin{equation}\label{eq:pureand}
u_{\qst}=\prod_{t=0}^{n-1}g^{\star}_t,\qquad
g^{\star}_t=\frac{P^{(\lamst)}_{t+1}e^{\must S^{\mathrm a}_{t+1}}}
{P^{(\lamst)}_{t+1}e^{\must S^{\mathrm a}_{t+1}}+Q^{(\lamst)}_{t+1}e^{\must S^{\mathrm d}_{t+1}}},
\end{equation}
$S^{\mathrm a}_n=1,\ S^{\mathrm d}_n=0$. Replacing $e^{\must S^{\mathrm d}_{t+1}}$ by $1$ mis-scales the bad-token weight in \eqref{eq:policy}, so the recovered
decoder is not $\qst$ and its deployed $\pass$ falls short of the optimum --- by an order of
magnitude when the discarded product of normalizers is far from $1$; reference curves built
on it must be recomputed. On a dead branch \eqref{eq:policy} reduces to pure temperature
$p^{\lamst}$, so the optimum is state-dependent even at $M=n$ and the prefix-free
Poisson--binomial score $u_q=\Pr[\mathrm{PoiBin}(g_1,\dots,g_n)\ge M]$ does not substitute
for it.

\ifPASSKstandalone\end{document}\fi

\section{Additional Experimental Details}
\label{app:exp-details}

In this appendix we provide additional experimental results and details for our practical Algorithm~\ref{alg:tmad-gated}.

\begin{algorithm}[!htbp]
\caption{TMAD-Gated: continuation-value fitting and gated decoding.}
\label{alg:tmad-gated}
\begin{algorithmic}[1]
\Require base model $p$, verifier $U$, reference temperature $T^\star$, calibration prompts, gate $g$, conservative temperature $T_{\rm conf}$, regime set $\mathcal R=\{(T_j,\eta_j)\}_{j=1}^J$
\State \textbf{Offline:} collect fixed-$T^\star$ rollouts and fit $\hat S_\theta(h,w)$ to predict eventual verifier success.
\For{$t=1,\ldots,n$}
  \State compute $p_t(\cdot\mid w^{t-1})$ and mark $t$ active if confidence is low or the position is structurally critical
  \If{$t$ is inactive}
    \State sample conservatively, $w_t\sim q_{\rm conf}$ with $q_{\rm conf}(w)\propto p_t(w\mid w^{t-1})^{1/T_{\rm conf}}$
  \Else
    \State score candidate nucleus with $\hat S_t$ and refine the top candidates by short lookahead (\ref{eq-S})
    \State choose $(T,\eta)\in\mathcal R$ using an estimated-success/plausibility score
    \State sample $w_t\sim q_{T,\eta}$, where $q_{T,\eta}(w)\propto\bigl(p_t(w)e^{\eta\hat S_t(w)}\bigr)^{1/T}$
  \EndIf
\EndFor
\State \Return $w^n$
\end{algorithmic}
\end{algorithm}

\subsection{Continuation-success head training and diagnostics}
\label{app:exp-head}

We provide here only the additional implementation details for the
continuation-success head used by \textsc{TMAD-Gated}. Training data is
collected via \emph{fixed-temperature} rollouts: for each training problem
we sample $k$ completions at the reference temperature $T^\star$
(i.e.\ $\lambda_{\mathrm{ref}} = 1/T^\star$, with no value-based tilting),
and label every token position in a rollout with the rollout's realized
binary outcome $U \in \{0,1\}$ (unit-test pass/fail). To concentrate
training signal where it is most informative, we restrict attention to
problems for which the pooled draws contain both at least one pass and one
fail, rank them by the empirical variance $\hat p(1-\hat p)$ of their pass
rate $\hat p$, and use the top-$K$ problems to build the training buffer.

The head takes as input
$h_t = [h_t^{(L)}; \bar h_t] \in \mathbb{R}^{2d}$, the concatenation of the
final-layer hidden state $h_t^{(L)}$ with the mean of the last four
transformer-layer hidden states $\bar h_t$ at position $t$. The head is a
low-rank bilinear function over the vocabulary,
\begin{equation}
\label{eq-S}
\hat S_\theta(h_t,w)
=
\sigma\!\big(a_w^\top(B h_t) + b_w\big),
\qquad
A \in \mathbb{R}^{|\mathcal V|\times r},\;
B \in \mathbb{R}^{r\times 2d},
\end{equation}
where $a_w$ is the row of $A$ corresponding to token $w$, $r \ll d$ is the
bottleneck rank, and $\sigma$ is the sigmoid. Parameters
$\theta = (A,B,b)$ are trained with binary cross-entropy between
$\hat S_\theta(h_t,w_t)$ and the rollout label $U$, using a class-balancing
positive weight, Adam, and gradient clipping. We hold out a disjoint subset
of \emph{problems}, rather than token positions, for early stopping and
select the checkpoint with the lowest held-out binary cross-entropy.

\paragraph{TMAD-Gated decoding details.}
We use nucleus sampling with $p = 0.95$. A position is active when the
maximum base-model token probability is below $0.90$ or when it is a
structurally critical code position under the same newline/indentation state
used by our \textsc{AdapT} reproduction. Inactive positions use conservative
sampling at $T_{\mathrm{conf}} = 0.40$. At an active position, the value head
scores the full nucleus. We select from the finite regime grid
\[
T \in \{0.75T^\star,\, T^\star,\, 1.5T^\star\},
\qquad
\eta \in \{0,\, 1,\, 3\}.
\]
For each regime, the verifier tilt is scaled by the normalized entropy of
the current base distribution, and the regime is scored by the
Lagrangian-style criterion
\[
\mathbb{E}_q[\hat S_t^{\mathrm{LA}}]
\;+\;
\kappa\,\mathbb{E}_q[\log p_t],
\qquad
\kappa = 0.6,
\]
where $\hat S_t^{\mathrm{LA}}$ denotes the lookahead-refined continuation
score. The highest-scoring regime is selected and a token is sampled from
its stochastic tilted-temperature distribution. Thus the current
implementation uses a soft plausibility penalty rather than a hard per-step
budget-feasibility filter.

Because $\hat S_t$ is a practical proxy for continuation success rather than
an exact estimate of the free-energy potential $V_t$, we evaluate it
directly on held-out problems using ROC-AUC. This measures whether the
learned score ranks token--prefix states that eventually lead to successful
completions above those leading to failures. The held-out split is disjoint
at the problem level from the data used to train the head.

\begin{table}[!htbp]
\centering
\caption{Held-out ROC-AUC of the continuation-success head $\hat S_\theta$,
reported as mean $\pm$ standard deviation across runs. Higher is better;
AUC measures the practical proxy, not direct error in $V_t$.}
\label{tab:value-auc}
\small
\begin{tabular}{lcc}
\toprule
Model & HumanEval AUC & MBPP AUC \\
\midrule
Qwen2.5-Coder-1.5B & $0.632 \pm 0.022$ & $0.641 \pm 0.007$ \\
Qwen3-4B-Base      & $0.648 \pm 0.009$ & $0.628 \pm 0.092$ \\
CodeLlama-7B &  $0.646 \pm 0.072$ & $0.639 \pm 0.033$ \\
\bottomrule
\end{tabular}
\end{table}

\begin{figure}[!t]
\centering
\begin{subfigure}[t]{0.46\textwidth}
  \centering
  \includegraphics[width=\linewidth]{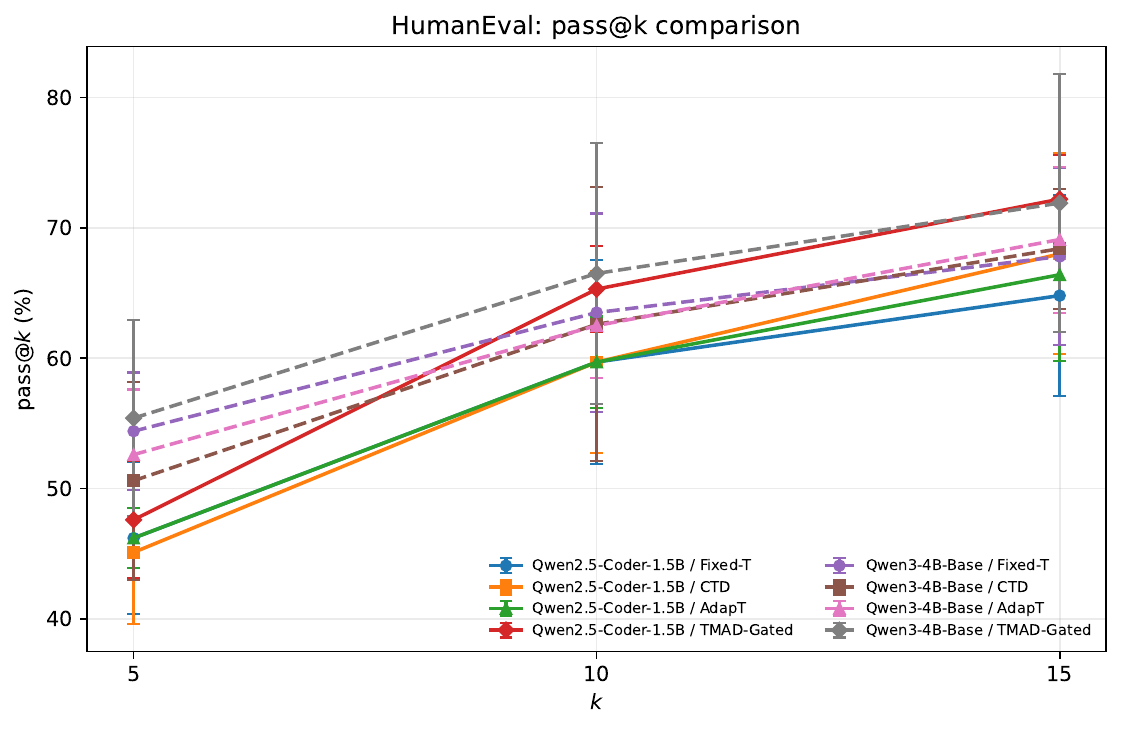}
  \caption{HumanEval.}
  \label{fig:passk-humaneval}
\end{subfigure}\hfill
\begin{subfigure}[t]{0.46\textwidth}
  \centering
  \includegraphics[width=\linewidth]{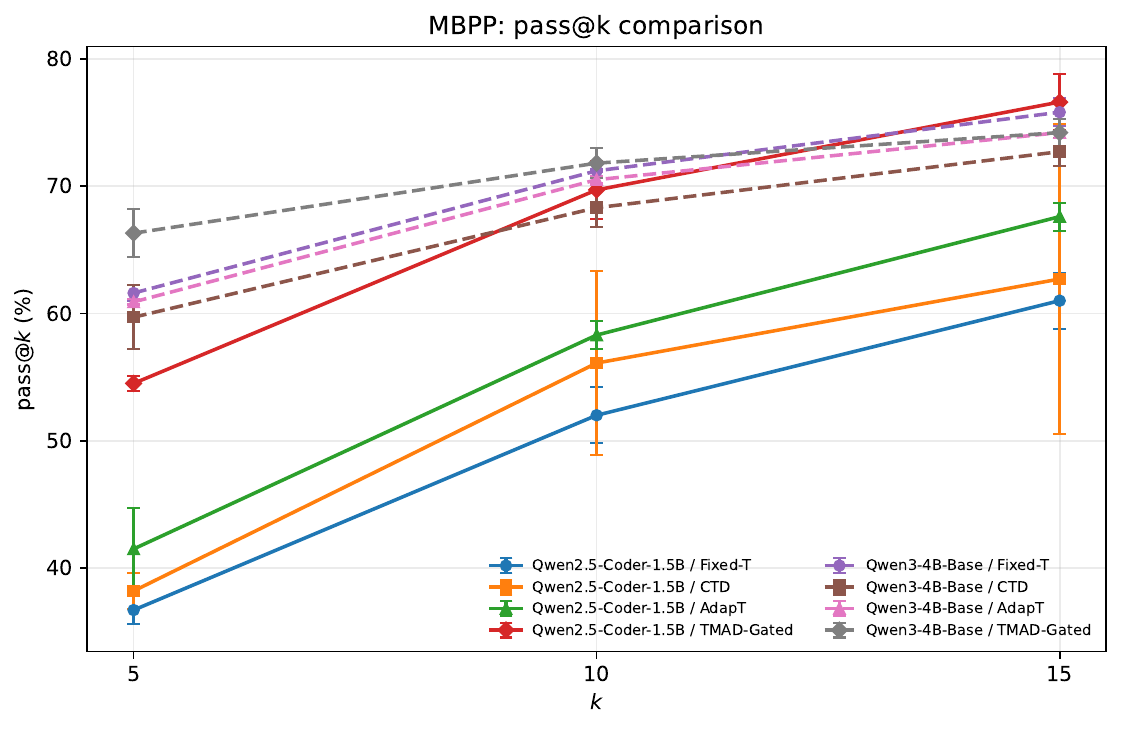}
  \caption{MBPP.}
  \label{fig:passk-mbpp}
\end{subfigure}
\caption{\textbf{Pass@$k$ curves for the final controlled comparison.}
Mean pass@$k$ for $k\in\{5,10,15\}$ across repeated runs, corresponding to
Table~\ref{tab:main-results}. Error bars indicate the standard deviation
across runs.}
\label{fig:passk-main-curves}
\end{figure}

\subsection{Held-out evaluation split}
\label{app:exp-split}

For each model and benchmark, a disjoint subset of problems is reserved for
calibration and continuation-head training, and all controlled decoding
methods are evaluated on the same remaining held-out problems. Consequently,
the primary matched comparison is among fixed-temperature sampling, our
\textsc{AdapT} reproduction, \textsc{CTD}, and \textsc{TMAD-Gated}. In all
reported pass@$k$ comparisons, methods within a model--benchmark pair are
evaluated on the same held-out subset.

\FloatBarrier

\end{document}